\definecolor{emerald}{rgb}{0.31, 0.78, 0.47}
\definecolor{carminepink}{rgb}{0.92, 0.3, 0.26}
\definecolor{dodgerblue}{rgb}{0.12, 0.56, 1.0}
\definecolor{ceruleanblue}{rgb}{0.16, 0.32, 0.75}
\definecolor{bananayellow}{rgb}{1.0, 0.88, 0.21}
\definecolor{arylideyellow}{rgb}{0.91, 0.84, 0.42}
\definecolor{beaublue}{rgb}{0.74, 0.83, 0.9}
\definecolor{babyblueeyes}{rgb}{0.63, 0.79, 0.95}
\definecolor{ceil}{rgb}{0.57, 0.63, 0.81}
\newcommand{\cu}{\mathcal{U}}
\newcommand{\blockedit}{\color{black}}
\newcommand{\ie}{\textit{i.e.}}
\newcommand{\eg}{\textit{e.g.}}
\newcommand{\Eg}{\textit{E.g.}}
\definecolor{mygreen}{rgb}{0,0.6,0}
\DeclareMathOperator*{\argmin}{arg\,min}
\Crefname{assumption}{Assumption}{Assumptions}
\theoremstyle{plain}
\newtheorem{theorem}{Theorem}
\newtheorem{corollary}{Corollary}
\theoremstyle{definition}
\newtheorem{assumption}{Assumption}
\newtheorem{remark}{Remark}
\newcommand{\frs}{\mathfrak{S}}
\def\Cramer{Cram\'{e}r}
\newcommand{\cm}{\mathcal{M}}
\newcommand{\bigO}{\mathcal{O}} %
\renewcommand{\P}{\mathbb{P}}
\newcommand{\op}{\mathrm{o}_{p}}
\newcommand{\E}{\mathbb{E}}
\newcommand{\G}{\mathbb{G}}
\newcommand{\pa}{\mathrm{\pa}}
\newcommand{\var}{\mathrm{var}}
\newcommand{\rd}{\mathrm{d}}
\newcommand{\braces}[1]{\left\{#1\right\}}
\newcommand{\bracks}[1]{\left[#1\right]}
\newcommand{\abs}[1]{\left|#1\right|}
\newcommand{\R}[1]{\mathbb{R}^{#1}}
\newcommand{\epol}{\pi^\mathrm{e}}
\newcommand{\DR}{\mathrm{DR}}
\newcommand{\dr}{\mathrm{DR}}
\newcommand{\drpihat}{\dr\text{-$\hat\pi_*$}}
\newcommand{\ips}{\text{IS-Avg}}
\newcommand{\bal}{\text{IS}}
\newcommand{\dett}{\text{det}}
\newcommand{\wei}{\text{\text{IS-PW}}}
\newcommand{\bi}{\text{BI}}
\newcommand{\mo}{\text{SMRDR}}
\newcommand{\mmo}{\mathrm{MRDR}}
\newcommand{\Scal}{\mathcal{S}}
\newcommand{\Acal}{\mathcal{A}}
\newcommand{\Ncal}{\mathcal{N}}
\newcommand{\Qcal}{\mathcal{Q}}
\newcommand{\Dcal}{\mathcal{D}}
\newcommand{\Rcal}{\mathcal{R}}
\newcommand{\Lcal}{\mathcal{L}}
\renewcommand{\eqref}[1]{(\ref{#1})}
\newcommand{\RN}[1]{%
  \textup{\uppercase\expandafter{\romannumeral#1}}%
}
\def\boxit#1{\vbox{\hrule\hbox{\vrule\kern6pt\vbox{\kern6pt#1\kern6pt}\kern6pt\vrule}\hrule}}
\newcommand{\kibitz}[2]{\ifnum\Comments=1\textcolor{#1}{#2}\fi}
\title{Optimal Off-Policy Evaluation from \\ Multiple Logging Policies }
\author{Nathan Kallus \\ Department of Operations Research and Information Engineering\\  and Cornell Tech
       Cornell University 
       \and 
         Yuta Saito \\ Department of Industrial Engineering and Economics \\
      Tokyo Institute of Technology 
      \and
       Masatoshi Uehara \footnote{Corresponding author \href{mailto:mu223@cornell.edu}{mu223@cornell.edu}} \\ Department of Computer Science and Cornell Tech\\
       Cornell University 
       %New York, NY 10044, USA
       }
\date{}
\begin{document}

\maketitle

\begin{abstract}
We study off-policy evaluation (OPE) from multiple logging policies, each generating a dataset of fixed size, \ie, stratified sampling. Previous work noted that in this setting the ordering of the variances of different importance sampling estimators is instance-dependent, which brings up a dilemma as to which importance sampling weights to use. In this paper, we resolve this dilemma by finding the OPE estimator for multiple loggers with \emph{minimum} variance for any instance, \ie, the \emph{efficient} one. In particular, we establish the efficiency bound under stratified sampling and propose an estimator achieving this bound when given consistent $q$-estimates. To guard against misspecification of $q$-functions, we also provide a way to choose the control variate in a hypothesis class to minimize variance. Extensive experiments demonstrate the benefits of our methods' efficiently leveraging of the stratified sampling of off-policy data from multiple loggers. 
\end{abstract}

\section{Introduction}

In many applications where personalized and dynamic decision making is of interest, exploration is costly, risky, unethical, or otherwise infeasible ruling out the use of online algorithms for contextual bandits (CB) and reinforcement learning (RL) that need to explore in order to learn. This includes both healthcare, where we fear bad patient outcomes, and e-commerce, where we fear alienating users.
This motivates the study of off-policy evaluation (OPE), which is the task of estimating the value of a given policy using only historical data, which is generated by current decision policies. This can support performance evaluation of policies with respect to various rewards objectives in order to better understand their behavior before deploying them in a real environment.
Given how invaluable this is, OPE has been studied extensively both in CB~\citep{kallus2018balanced,narita2018,wang2017optimal,dudik2014doubly,NIPS2017_6954,Muandet2018,SuYi2020AESf} and in RL~\citep{Chow2018,Liu2018,KallusNathan2019EBtC,Kallus2019IntrinsicallyES,KallusUehara2019,Munos2016,jiang,thomas2016,YinMing2020NOPU} and has been applied in various domains including healthcare~\citep{MurphyS.A.2003Odtr} and education~\citep{Mandel2014}.

% \begin{figure}
% \centering
% \resizebox{0.6\linewidth}{!}{
% \begin{tikzpicture}[node distance=0.2cm]%
% \node[draw=none,fill=none](a){$\hat J_{\ips}$};
% \node[draw=none,fill=none,right= 2.6cm of a](h){?};
% \node[draw=none,fill=none,  right=  of a,rotate=40](b){$> $};
% \node[draw=none,fill=none, right = of a,rotate=-40](c){$> $};
% \node[draw=none,fill=none, right=of b](d){$\hat J_{\bal}\,\,\,\,\,\,\,\,\,\,$};
% \node[draw=none,fill=none, right=of c](e){$\hat J_{\wei}$};
% \node[draw=none,fill=none, right=of d,rotate=-40](f){$>$};
% \node[draw=none,fill=none, right=of e,rotate=40](g){$>$};
% %\node[draw=none,fill=none,below of=b](c){bb};
% \end{tikzpicture}
% }
% \caption{Dilemma in OPE with multiple loggers. By setting flexible weights depending on the strata, we can consider various IS estimators such as $\hat J_{\bal},\hat J_{\ips},\hat J_{\wei}$ \citep{AgarwalAman2017EEUL}. Our goal is finding the optimal estimator outperforming all these IS estimators. }
% \end{figure}

In most of the above studies, the observations used to evaluate a new policy are assumed generated by a \textit{single} logging policy. Often, however, we have the opportunity to leverage multiple datasets, each potentially generated by a different logging policy \citep{AgarwalAman2017EEUL,HeLi2019OLfM,Strehl2010nips,Bareinboim2016}. The size of each dataset is generally fixed by design, which distinguishes this setting from a single logging policy given by the mixture of logging policies. Such fixed dataset sizes is an example of \emph{stratified sampling}~\citep{WooldridgeJeffreyM.2001APOW}, where the identity of the logging policies constitute the stratum.
% , where the latter is sometimes called multinomial sampling \citep[Chapter 20]{WooldridgeJeffreyM2010Eaoc}.
% 
% , where the sample size from each logger is $n_k\,(1\leq k\leq K)$. Here, the strata proportion $n_k/n$ is fixed by design, that is, chosen by the researcher. Apparently, this setting is reduced to a single logger case since we can hypothetically consider that there is a single marginalized policy combining multiple logging policies, and that $n$ observations are generated by the marginalized policy. This is \emph{false} since this hypothetical situation implies that the cell proportion $n_k/n$ is a random variable, that is, not pre-fixed by design. The former our setting is called stratified sampling and the latter hypothetical setting is called multinomial sampling \citep[Chapter 20]{WooldridgeJeffreyM2010Eaoc}. 
% doubly robust estimators
% \footnote{The importance of this  distinction arises in many context such as Monte Carlo integration \citep{geyer1994,A.Kong2003AToS,TanZhiqiang2004OaLA} and noise contrastive estimation \citep{pmlr-v9-gutmann10a,Uehara}}

The distinction of these two settings is crucial since the same estimator may have varying precision in each setting (a fact well-known in Monte Carlo integration, \citealp{geyer1994,A.Kong2003AToS,TanZhiqiang2004OaLA}, and noise contrastive estimation, \citealp{pmlr-v9-gutmann10a,Uehara}). Thus, many results in the standard \emph{un}stratified OPE setting cannot be directly translated to a multiple logger setting, most crucially the efficiency lower bound on mean-squared error (MSE) and estimators that achieve this lower bound \citep{narita2018,KallusUehara2019,dudik2014doubly,jiang}. In the multiple logger setting, we may additionally consider a much greater variety of estimators that can utilize the logger identity as data.
% On top of that, we can consider a variety of estimators by utilizing the label information (which policy each sample belongs to), which does not appear in a single logger case. 
In this paper, we study a wide range of such estimators, establish the efficiency lower bound, and propose estimators that achieve it.

%In most of the above studies, the observational data used to evaluate a new policy is assumed generated by a \textit{single} logging policy. Often, however, we have the opportunity to leverage data from multiple logging policies, for example in settings where policies are repeatedly updated~\citep{AgarwalAman2017EEUL,HeLi2019OLfM}. If logging randomized over the choice from among multiple loggers then surely the identity of the chosen logger should be ignored and marginalized over -- this is just additional randomization internal to the logger and is only beneficial, quite similar to smoothing out a deterministic policy using $\epsilon$ randomization. However, usually the choice is \emph{not} randomized, but rather we fix the amount of data from each logging policy, perhaps even just one observation from each. This is a form of \emph{stratified sampling}~\citep{WooldridgeJeffreyM.2001APOW}.
%To make full use of this stratified structure, we study \emph{efficient} of OPE in the multiple logger setting. 
%Efficiency has been shown to be beneficial both theoretically and empirically in OPE~\citep{KallusUehara2019,KallusNathan2019EBtC,jiang,dudik2014doubly}. However, it has not been previously known what is the efficiency bound of OPE with multiple loggers and what is an estimator that can achieve it. This paper establishes both.

Previous work on OPE with multiple loggers proposed various importance sampling (IS) estimators that use the logger identity \citep{AgarwalAman2017EEUL}. However, they arrived at a \emph{dilemma}: there is no strict ordering between the IS estimate with marginalized logging probabilities and a precision-weighted combination of the IS estimates in each dataset. That is, which estimate has lower MSE depends on the problem instance and is not known a priori, and therefore it is not clear which should be preferred. 
% To make matters more complicated, we can consider a further bigger class of IS types estimators. 
Our analysis resolves this dilemma by developing an \emph{efficient} estimator, which has MSE better (or not worse) than both of the above.

Our contributions are as follows. First, when the logging policies are known, we study the variances of a new class of unbiased estimators that includes and is much bigger than the class considered in \citet{AgarwalAman2017EEUL}. This new class incorporates both control variates and flexible weights that may depend on logger identity. We show that a single estimator has minimum variance in this class (\cref{sec: infeasible unbiased,sec:feasible}). We extend this finite-sample bound to also bound the asymptotic MSEs of \emph{all} regular estimators, thereby establishing the efficiency lower bound (\cref{sec: regular est}). We show how to construct an efficient estimator even if behavior policies are unknown and establish theoretical guarantees for it (\cref{sec:efficient_estimation}). Then, we theoretically investigate the differences between OPE in the stratified and unstratified cases by showing that the variances of the estimator are generally different under two settings and are asymptotically equivalent \emph{only} when the estimator is efficient (\cref{sec:multi}). We use this insight to choose optimal control variates to directly minimize variance, extending the More Robust Doubly Robust (MRDR) estimator of \citet{rubin08,Chow2018} to the stratified setting (\cref{sec:more}). Finally, we study our new OPE methods empirically and compare them to benchmark methods including those of~\citet{AgarwalAman2017EEUL}. 

\section{Background}\label{sec:preli}

We start by setting up the problem  and summarizing the relevant literature. 

\subsection{Problem Setup}

We focus on the CB setting as was the topic of previous work \citep{AgarwalAman2017EEUL} and discuss the extension to RL in \cref{sec:extensions}.

We are concerned with the average reward of taking an action $a\in\Acal$ in context (state) $s\in \Scal$ when following the policy $\pi^e(a\mid s)$, known as the evaluation policy.
Both $\Acal$ and $\Scal$ may be discrete or continuous. 
Rewards $r\in[0,R_{\max}]$ are described by the (unknown) reward emission probability distribution $p_{R\mid S,A}(r\mid s,a)$,
and contexts are drawn from the (unknown) distribution $p_S(s)$. 
Thus, the average reward under $\pi^e$, which is our target estimand is
$$ 
J:=\E_{\pi_e}[r],
$$
where the subscript $\pi_e$ refers to the joint distribution $p_S(s)\pi^e(a\mid s)p_{R\mid S,A}(r\mid s,a)$ over $(s,a,r)$.

To help estimate $J$,
we consider observing $K$ datasets, $\Dcal=\{\Dcal_1,\cdots,\Dcal_K\}$, each of (fixed) size $n_k$ and associated with the logging policy $\pi_k(a\mid s)$, for $k\in[K]=\{1,\dots,K\}$. (We consider both the cases where $\pi_k$ are known and unknown.) Each dataset consists of observations of state-action-reward triplets, $\Dcal_k= \{(S_{kj},A_{kj},R_{kj})\}_{j=1}^{n_k}$, drawn independently according to the product distribution 
$$
(S_{kj},A_{kj},R_{kj})\sim p_S(s)\pi_k(a\mid s)p_{R\mid S,A}(r\mid s,a).
$$
Notice that the distribution above differs from the distribution in the definition of $J$ in the policy used to generate actions.
We let $n=n_1+\dots+n_K$ be the total dataset size.
We often reindex the whole data as $\Dcal=\bigcup_{k=1}^K\{(k,s,a,r):(s,a,r)\in\Dcal_k\}=\{(k_i,S_i,A_i,R_i):i=1,\dots,n\}$, treating the logger identity $k_i$ as an additional component of an observation in one big pooled dataset.
For a function $f(s,a,r)$ we let $\E_{n_k}[f]=\frac1{n_k}\sum_{(s,a,r)\in\Dcal_k}f(s,a,r)$ and for a function $f(k,s,a,r)$ we let $\E_n[f]=\frac1n\sum_{(k,s,a,r)\in\Dcal}f(k,s,a,r)$. As mentioned above, we let $\E_\pi$ refer to expectations with respect to the distribution on $(s,a,r)$ induced by playing $\pi$ (similarly, $\var_\pi$). Unsubscripted expectations and variances are with respect to the data generation (such as the variance of an estimator).

We let $\rho_k=n_k/n$ be the dataset proportions and $\pi_{*}(a\mid s)=\sum_{k=1}^K\rho_k\pi_k(a\mid s)$ be the marginal logging policy (as a policy, it corresponds to randomizing the choice of logger with weights $\rho_k$ and then playing the chosen logger, but note this is \emph{not} how the data is generated, as $n_k$ are fixed).
For any function $f(s,a)$, let $f(s,\pi)=\E_{\pi}[f(s,a)\mid s]=\int f(s,a)\rd\pi(a\mid s)$.
We let $q(s,a)=\E_{p_{R\mid S,A}}[r\mid s,a],~v(s)=q(s,\epol),\,\sigma_r^2(s,a)=\var_{p_{R\mid S,A}}[r\mid s,a]$.
% where for brevity we use the subscript $\pi_e$ to refer to the joint distribution $p_S(s)\pi^e(a\mid s)p_{R\mid S,A}(r\mid s,a)$. 
We define the $L_2$ norm by $\|f\|_2=\{\E_{\pi_{*}}[f^2(s,a,r)]\}^{1/2}$. We denote the normal distribution with mean $\mu$ and variance $\sigma^2$ by $\Ncal(\mu,\sigma^2)$. 

We always let $n,n_1,\dots,n_K$ be fixed and finite.
When we discuss asymptotic behavior we consider sample sizes $n'=mn,n'_k=mn_k$ and $m\to\infty$ such that sample proportions $\rho_k=n_k/n=n'_k/n'$ remain fixed.

\subsection{Previous Work and the Multiple Logger Dilemma}

In the \emph{un}stratified setting, wherein the logging policy first chooses $k$ at random from $[K]$ with weights $\rho_k$ and then plays the logging policy $\pi_k$, the standard IS estimator would be 
$$
\hat J_{\text{IS}}:= \E_n\left[\frac{\epol(a\mid s)r}{\pi_{*}(a\mid s)}\right].
% \quad\text{where}\quad\pi_{*}(a|s):=\sum_{k=1}^K\rho_k\pi_k(a\mid s). 
$$
This estimator can still be applied in the stratified setting in the sense that is unbiased under a weak overlap. 
\begin{assumption}[Weak Overlap]\label{asm:weak}
For any $s\in\Scal$,\break $\epol(\cdot\mid s)\ll \pi_*(\cdot\mid s)$ (where $\ll$ means absolutely continuous).
When $\abs{\Acal}<\infty$, this is equivalent to:
for any $s\in\Scal$, $a\in\Acal$, $\epol(a\mid s)>0$ implies $\pi_*(a\mid s)>0$.
% $\{a;\epol(a\mid s)>0\} \subset  \{a;\pi_{*}(a\mid s)>0\}$ for any $s\in \Scal$. 
\end{assumption}
\citet{AgarwalAman2017EEUL} study the multiple logger setting and propose estimators that combine the IS estimators in each of the $K$ datasets: given simplex weights $\lambda\in\Delta^K=\{\lambda\in\R K:\lambda_k\geq0,\sum_{k=1}^K\lambda_k=1\}$, they let
\begin{align}\label{eq:wei}
\Upsilon(\Dcal;\lambda)=\sum_{k=1}^K \lambda_k\E_{n_k}\left[\frac{\epol(a\mid s)r}{\pi_k(a\mid s)} \right].
\end{align}
For any $\lambda\in\Delta^K$, $\Upsilon(\Dcal;\lambda)$ is unbiased under a whole weak overlap. 
\begin{assumption}[Whole Weak Overlap]\label{asm:weakwhole}
For any $s\in\Scal,\,k\in[K]$, $\epol(\cdot\mid s)\ll \pi_k(\cdot\mid s)$.
% $\{a;\epol(a\mid s)>0\} \subset  \{a;\pi_k(a\mid s)>0\}$ for any $s\in \Scal$ and any $k\in [K]$. 
\end{assumption}
Clearly \cref{asm:weakwhole} implies \cref{asm:weak}.

Then, they consider two important special cases: the na\"ive average of the $K$ IS estimates,
$$
\hat J_{\text{IS-Avg}}:=\Upsilon(\Dcal;(n_1/n,\dots,n_k/n)), 
$$
and a precision-weighted average,
$$
\hat J_{\text{IS-PW}}:=\Upsilon(\Dcal;\lambda^*),\,
\lambda^{*}_k=\frac{n_k/\var_{\pi_{k}}[\epol(a\mid s)r/\pi_k(a\mid s)]}{\sum_{k'} n_{k'}/\var_{\pi_{k'}}[\epol(a\mid s)r/\pi_{k'}(a\mid s)] }.
$$
Notice that $\lambda^{*}=\argmin_{\lambda \in \Delta^K}\var[\Upsilon(\Dcal;\lambda)]$. Unlike $\hat J_{\text{IS}}$ and $\hat J_{\text{IS-Avg}}$, the estimator $\hat J_{\text{IS-PW}}$ is not feasible in practice since $\lambda^*$ needs to be estimated from data first (we discuss this in more detail in \cref{sec:feasible} and show that asymptotically there is no inflation in variance).

\citet{AgarwalAman2017EEUL} established two relationships about the above:
$$
\var[\hat J_{\text{IS-Avg}}]\geq \var[\hat J_{\text{IS}}],\quad
\var[\hat J_{\text{IS-Avg}}]\geq \var[\hat J_{\text{IS-PW}}].
$$
However, they noted that they cannot find a theoretical relationship between $\var[\hat J_{\text{IS}}]$ and $\var[\hat J_{\text{IS-PW}}]$. In fact, unlike the above two relationships, which of these two estimators has smaller variance \emph{depends} on the problem instance. This brings up an apparent \emph{dilemma}: which one should we use? We resolve this dilemma by showing another estimator dominates both. In fact, it dominates a much bigger class of estimators, that includes $\hat J_{\text{IS}},\Upsilon(\Dcal;\lambda),\hat J_{\text{IS-Avg}},\hat J_{\text{IS-PW}}$.

\section{Optimality} \label{sec:efficiency}

We next tackle the question of what would be the \emph{optimal} estimator. We tackle this from three perspectives. First, we study a class of estimators like $\Upsilon(\Dcal;\lambda)$ but larger, allowing for control variates, and determine the single estimator with minimal (non-asymptotic) MSE among these. Second, since not all estimators (including this optimum) are feasible in practice as they may involve unknown nuisances (just like $\hat J_{\text{IS-PW}}$ depends on the unknown $\lambda^*$), we then consider a class of feasible estimators given by plugging in these nuisances and we show that asymptotically the minimum MSE is the same and achievable. Third, we show that this minimum is in fact the efficiency lower bound, that is, the minimum asymptotic MSE among all regular estimators.
% We show optimaltiy results from three perspectives. First, we show the lower bound of (non-asymptotic) MSE among some infeasible unbiased estimators. Second, we show the lower bound of asymptotic MSE among some feasible unbiased estimators. Third, we show the lower bound of asymptotic MSE among regular estimators.
\cref{fig:relation} illustrates the relationship between these different classes of estimators.

\begin{figure}
    \centering
    \resizebox{0.7\linewidth}{!}{
\begin{tikzpicture}[x=70,y=53]
 % \begin{scope}[fill opacity=0.5]
        %\draw[blue,line width=2] (0,0) circle (1.4);
        \draw[emerald,line width=2] (0,0) circle (1.2);
        \draw[red,line width=2] (0:1.2) circle (1.6);
        \draw[blue,line width=2] (0.6,0)  ellipse (1.81 and 1.81);
        % \fill[magenta,opacity=1.0] (1.4,1.0) circle (0.33);
         % \fill[magenta,opacity=1.0] (2.0,0.5) circle (0.37);
     % \fill[magenta,opacity=1.0] (1.4,-1.0) circle (0.37);
          % \fill[magenta,opacity=1.0] (2.0,-0.5) circle (0.37);
         % \fill[magenta,opacity=1.0] (0.44,0.6) circle (0.33);
         % \fill[magenta,opacity=1.0] (0.44,-0.6) circle (0.33);
  % \fill[magenta,opacity=1.0] (-0.85,0.0) circle (0.33);         
        % \draw (0,0) circle (1.2) node {$\Gamma(g,h)$};
        % \draw (0:1.2)  ellipse (1.4 and 1.4) node[right]  {$\hat J_{\bi}$};
        \draw (1.4,1.0) circle (0.3) node[align=center] {DR}; 
        \draw (2.0,0.5) circle (0.3) node[align=center]   {DR-Avg}; 
     \draw (2.0,-0.5) circle (0.3) node[align=center]  {IS-PW(f)}; 
        \draw (1.4,-1.0) circle (0.3) node[align=center] {DR-PW}; 
        \draw (0.44,0.6) circle (0.3) node[align=center] {IS}; 
       \draw (0.44,-0.6) circle (0.3) node[align=center] {IS-Avg}; 
         \draw (-0.7,0.0) circle (0.3) node[align=center]  {IS-PW}; 
         
     \begin{scope}
      \clip (0.6,0)  ellipse (1.81 and 1.81);
      \fill[blue,fill opacity=0.3] (0:1.2) circle (1.4);
    \end{scope}

    % \end{scope}
\end{tikzpicture}
}
    \caption{Relationship between the classes of estimators considered in \cref{sec:efficiency}. The \textcolor{emerald}{green} circle represents the class $\{\Gamma(\Dcal;h,g)\}$.  The \textcolor{blue}{blue} circle is $\{\hat J_{\bi}(\hat h,\hat g)\}$. The \textcolor{red}{red} circle is regular estimators.  The \textcolor{blue}{blue} shaded region is the estimators $\hat J_{\bi}(\hat h,\hat g)$ with feasible and consistent estimators $\hat h,\hat g$ (see \cref{thm:property_feasible}). The minimal asymptotic MSE in \emph{any one} of these sets is the \emph{same} and achievable by a feasible estimator.}
   % \caption{Relation of the estimators and their classes. \textcolor{yellow}{Blue} represents possibly infeasible unbiased estimators. \textcolor{carminepink}{Red} represents (feasible) $\sqrt{n'}$-consistent estimators (Technically, regular estimators). The intersection of \textcolor{ceruleanblue}{Blue} and \textcolor{carminepink}{Red} represents feasible unbiased estimators when behavior policies are known. \textcolor{emerald}{Green} represents the class $\{\Gamma(\Dcal;h,g)\}$. \textcolor{orange}{Orange} represents the class $\{\hat J_{\bi}(\hat h,\hat g)\}$. IS-PW(f) stands for a feasible version of $\hat J_{\wei}$.}
    \label{fig:relation}
\end{figure}
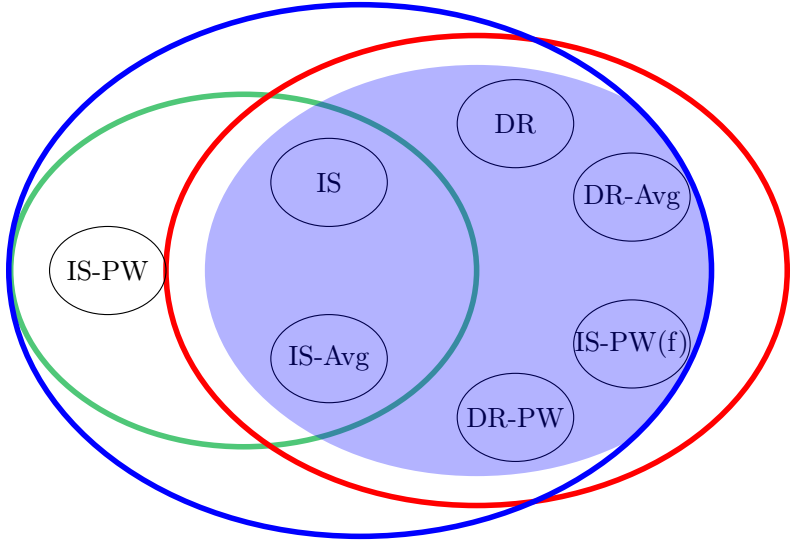

\subsection{A Class of (Possibly Infeasible) Unbiased Estimators}\label{sec: infeasible unbiased}

Consider the class of estimators given by
\begin{align*}
  \Gamma(\Dcal;h,g)=~&  \E_n[h(k,s,a)\epol(a\mid s)(r-g(s,a))+g(s,\epol)],
\end{align*}
for any choice of functions $h(k,s,a),\,g(s,a)$, where we restrict to functions $h$ that satisfy 
\begin{align}\label{eq:con}
\sum_{k=1}^Kn_k \pi_k(a\mid s)h(k,s,a)=n~\,\forall s,a:\epol(a\mid s)>0.
\end{align}
Here $h,g$ may depend on unknown aspects of the data generating distribution (\eg, $g=q$). Thus, certain choices may be infeasible in practice. Feasible analogues may be derived by estimating $h,g$ and plugging the estimates in as we will do in the next section.
We refer to the class of estimator as we range over $h,g$ satisfying \cref{eq:con} as $\{\Gamma(\Dcal;h,g)\}$, and
we refer to $h$ as ``weights'' and $g$ as ``control variates.''

This is a fairly large class in the sense that it allows both for flexible weights that depend on logger identity and for control variates. In fact, it includes the class $\Upsilon(\Dcal;\lambda)$ as a subclass (including $\hat J_{\ips},\hat J_{\wei}$) by letting $h(k,s,a)=1/\pi_k$ or $h(k,s,a)=n\lambda^{*}_k/(n_k\pi_k(a\mid s))$, and $g=0$. It also includes $\hat J_{\bal}$ by letting $h_k(k,s,a)=1/\pi_*(a\mid s)$ and $g=0$.
This class of estimators is unbiased, \ie, $\E\Gamma(\Dcal;h,g)=J$. 
But notice that the restriction on $h$ (\cref{eq:con}) implicitly requires a form of $h$-specific overlap. \Eg, for $h(k,s,a)=1/\pi_*(a\mid s)$, it corresponds to \cref{asm:weak}, and for $h(k,s,a)=n\lambda^{*}_k/(n_k\pi_k(a\mid s))$, it is implied by \cref{asm:weakwhole}.

 % various estimators in \citet{AgarwalAman2017EEUL} such as $\hat J_{\wei}\,(h_k=n\lambda^{*}_k/(n_k\pi_k),g=0),\hat J_{\ips}\,(h_k=1/(n\pi_k) ,g=0) ,\hat J_{\bal}\,(h_k= 1/\pi_*,g=0)$. Note that $\hat J_{\wei}$ is an instance dependent estimator and the others are not. We have the following optimality result. 

% We consider the model $\cm_b$: 
% \begin{align*}
%     p(o)=\sum_{k=1,j=1}^{K,n_k}p_{S}(s_{kj})\pi_k(a_{kj}\mid s_{kj})p_{R\mid S,A}(r_{kj}\mid s_{kj},a_{kj}),
% \end{align*}
% where each $\pi_k$ is known, and the other two distributions are unknown. Under this model, consider the following class of estimators $\{\Gamma(\Dcal;h,g)\}$, where 
% $\Gamma(\Dcal;h,g)$ is 
% \begin{align}\label{eq:some_class} 
%   &  \frac{1}{n}\sum_{k=1}^{K}\sum_{j=1}^{n_k}(h(k,S_{kj},A_{kj})\epol(A_{kj} \mid S_{kj})\\
%   & \times \{R_{kj}-g(S_{kj},A_{kj})\}+g(S_{kj},\epol)),\nonumber\\
%   & \mathrm{s.t.} \forall (s,a)\in C, n=\sum_{k=1}^Kn_k \pi_k(a\mid s)h_k(s,a;p), \label{eq:con}\\
%   & C= \{(s,a);\epol(a\mid s)>0 \}.  \nonumber
% \end{align}
% Here, $g(s,a;p)$ and $h=\{h_k(s,a;p)\}_{k=1}^K$ are possibly instance dependent functions and $p:=p(o)$ highlights this dependence. For example, when $g(s,a;p)=s+a$, this is instance independent. On the other hand, when $g(s,a;p)=q(s,a)$, this is instance dependent since $q(s,a)=\int rp_{R\mid S,A}(r\mid s,a)\mathrm{d}(s,a)$. This instance dependent estimator is an infeasible estimator since we have to estimate $g(s,a;p)$ and $h_k(s,a;p)$ in practice. 

We have the following optimality result.
\begin{theorem}\label{thm:concrete}
Suppose \cref{asm:weak} holds.
% For any instance $p(o)$, 
The minimum of the variances among estimators in the class $\{\Gamma(\Dcal;h,g)\}$ is $V^*/n$ where $$ V^*:=\E_{\pi_*}\bracks{\braces{\frac{\epol(a\mid s)}{\pi_{*}(a\mid s)}}^2\sigma_r^2(s,a)}+\var_{p_S}[v(s)].$$
This minimum is achieved by $\Gamma(\Dcal;1/\pi_*(s,a),q(s,a))$.
 % with $h(k,s,a)=1/\pi^*(a\mid s)$ and $g(s,a)=q(s,a)$.
\end{theorem}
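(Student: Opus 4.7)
Since observations are independent across strata and i.i.d.\ within each stratum, the variance decomposes as
\begin{align*}
n\,\var[\Gamma(\Dcal;h,g)] = \sum_{k=1}^K \rho_k\,\var_{\pi_k}\!\bracks{h(k,S,A)\epol(A\mid S)(R-g(S,A)) + g(S,\epol)}.
\end{align*}
I would then condition each within-stratum variance on $S$; because $g(s,\epol)$ depends only on $s$, the law of total variance splits the $k$-th summand into a reward-noise piece $\int \pi_k h^2\epol^2\sigma_r^2\,da$, an action-dispersion piece $\var_{a\sim\pi_k(\cdot\mid s)}[h\epol(q-g)]$, and an outer piece $\var_{p_S}[\alpha_k(S)]$, where $\alpha_k(s):=\int \pi_k(a\mid s)h(k,s,a)\epol(a\mid s)(q-g)(s,a)\,da + g(s,\epol)$. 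Averaging with weights $\rho_k$ writes $n\,\var[\Gamma(\Dcal;h,g)] = A+B+C$ as a sum of three nonnegative terms.

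Next I would lower-bound $A$, $B$, and $C$ separately using the rescaled constraint $\sum_k\rho_k\pi_k(a\mid s)h(k,s,a)\equiv 1$ on $\{\epol(\cdot\mid s)>0\}$ (divide \eqref{eq:con} by $n$). Cauchy--Schwarz with weights $\rho_k\pi_k(a\mid s)$ in $k$ yields $\sum_k\rho_k\pi_k h^2 \ge 1/\pi_*(a\mid s)$, hence $A\ge \E_{\pi_*}\bracks{\braces{\epol(A\mid S)/\pi_*(A\mid S)}^2\sigma_r^2(S,A)}$. Trivially $B\ge 0$. For $C$, the same constraint forces $\sum_k\rho_k\alpha_k(s)\equiv v(s)$, so writing $\alpha_k = v+\delta_k$ with $\sum_k\rho_k\delta_k\equiv 0$ and using linearity of covariance in its second argument gives
\[
C = \sum_k\rho_k\var_{p_S}[v+\delta_k] = \var_{p_S}[v] + \sum_k\rho_k\var_{p_S}[\delta_k] \;\ge\; \var_{p_S}[v],
\]
the cross-covariance vanishing by $\sum_k\rho_k\delta_k\equiv 0$. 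Summing the three bounds gives $n\,\var[\Gamma(\Dcal;h,g)]\ge V^*$.

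Finally, to prove achievability I would plug in $h(k,s,a)=1/\pi_*(a\mid s)$ and $g(s,a)=q(s,a)$: \eqref{eq:con} is satisfied; the Cauchy--Schwarz step is tight because $h$ is constant in $k$, so $A$ attains its bound; $B=0$ since $q-g\equiv 0$; and $\alpha_k\equiv v$ gives $\delta_k\equiv 0$ and hence $C=\var_{p_S}[v]$. The most delicate step will be the $C$ bound, since naively bounding each $\var_{p_S}[\alpha_k]$ by Jensen pushes the $(\E\alpha_k)^2$ terms in the wrong direction; the clean workaround is the affine reparametrization $\alpha_k = v+\delta_k$, which collapses the problem to a single Pythagorean-type identity and simultaneously characterizes the equality case as $\delta_k\equiv 0$.
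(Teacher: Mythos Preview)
Your proof is correct and follows the same overall architecture as the paper: decompose the variance via the iterated law of total variance into a reward-noise term $A$, an action-dispersion term $B$, and a state-level term $C$; lower-bound $A$ by Cauchy--Schwarz in $k$ with weights $\rho_k\pi_k(a\mid s)$ using the constraint $\sum_k\rho_k\pi_k h\equiv 1$; drop $B\ge 0$; and handle $C$. The paper's treatment of $A$ and $B$ is essentially identical to yours (it conditions on the full vectors $\mathbf{S},\mathbf{A}$ rather than stratum-by-stratum, but by independence this gives the same three terms).

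The point of departure is $C$. The paper asserts that, by the constraint \eqref{eq:con}, the outer term $\var\bigl[\E[\Gamma(\Dcal;h,g)\mid\mathbf{S}]\bigr]$ equals $\frac{1}{n}\var_{p_S}[v(s)]$ for \emph{every} admissible $(h,g)$. As written this is not true: $\E[\text{summand}_{kj}\mid S_{kj}=s]=\alpha_k(s)$ genuinely depends on $k$, so the outer term is $\frac{1}{n}\sum_k\rho_k\var_{p_S}[\alpha_k]$, which coincides with $\frac{1}{n}\var_{p_S}[v]$ only at the optimum. Your affine reparametrization $\alpha_k=v+\delta_k$ with $\sum_k\rho_k\delta_k\equiv 0$ turns this into the correct inequality $C\ge\var_{p_S}[v]$ via the vanishing cross-covariance $\sum_k\rho_k\cov_{p_S}[v,\delta_k]=\cov_{p_S}[v,0]=0$, and simultaneously characterizes equality as $\delta_k\equiv 0$. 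So your argument is actually more careful than the paper's on precisely the step you flagged as delicate; the final bound $V^*/n$ and the minimizer $(h,g)=(1/\pi_*,q)$ are the same.
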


The result is remarkable in two ways. First, it gives an answer to the dilemma outlined in \cref{sec:preli}. In the end, none of the three estimators $\hat J_{\wei},\hat J_{\bal},\hat J_{\ips}$ studied by \citep{AgarwalAman2017EEUL} are optimal. Second, it states the surprising fact that logger identity information does \emph{not} contribute to the lower bound. In other words, whether we allow different weights in different strata (allow $h$ to depend on $k$), the minimum variance is unchanged since it is achieved by a stratum-\emph{independent} weight function.
% this does not affect the lower bound. 

% \begin{remark}[Extension to Reinforcement Learning]
This key observation can also be translated to the multiple logger settings in infinite-horizon RL \citep[\eg, as studied by][]{ChenXinyun2019IOPE}. We provide this extension in \cref{sec:extensions}.
% \end{remark}
%by deriving the efficiency bound and offering an estimator achieving the lower bound. 

\subsection{A Class of Feasible Unbiased Estimators}\label{sec:feasible}

When $h,g$ depend on unknowns, such as $g=q$ as in the optimal estimator in \cref{thm:concrete}, the estimator $\Gamma(\Dcal;h,g)$ is actually infeasible in practice. We therefore next study what happens when we estimate $g,h$ and plug them in. Generally, when we plug nuisance estimates in, the variance may inflate due to the additional uncertainty associated with these estimates, both in finite samples \emph{and} asymptotically: for example, when we consider a direct method estimator $\E_n[\hat q(S,\epol)]$, the asymptotic variance is much larger than $\E_n[q(S,\epol)]$. Interestingly, for the current case, this inflation does not occur asymptotically.

% The aforementioned estimator $\Gamma(\Dcal;h,g)$ is a possibly infeasible estimator. What we care in practice is the property of the feasible estimators when we estimate $g(s,a;p),h_k(s,a;p)$ and plug them in.  

Specifically, we propose the feasible estimators $\hat J_{\bi}(\hat h,\hat g)$ given by the meta-algorithm in \cref{alg:feasible}, which uses a cross-fitting technique \citep{ZhengWenjing2011CTME,ChernozhukovVictor2018Dmlf}. The idea is to split the sample into a part where we estimate $g,h$ and a part where we plug them in and then averaging over different roles of the splits. If each $\hat h^{(z)}$ satisfies \cref{eq:con}, then this \emph{feasible} estimator is still unbiased since 
\begin{align*}
    \E[\Gamma(\Lcal_{z};\hat h^{(z)},\hat g^{(z)})]=  \E[\E[\Gamma(\Lcal_{z};\hat h^{(z)},\hat g^{(z)})\mid\cu_z]]=J. 
\end{align*}
If we do not use sample splitting, this unbiasedness cannot be ensured. 
% Besides, we can prove the estimator is $\sqrt{n'}$-consistent. Note that $n'$ is a hypothetical sample size to discuss the asymptotics defined in \cref{sec:preli}.

In addition, in the asymptotic regime (recall that in the asymptotic regime we consider $n'=mn,\,n'_k=mn_k$ observations and $m\to\infty$) we can show that whenever $\hat h,\hat g$ are consistent, the feasible estimator $\hat J_{\bi}(\hat h,\hat g)$ is also asymptotically normal with the \emph{same} variance as the possibly infeasible $\Gamma(\Dcal;h,g)$.
\begin{algorithm}[t!]
 \caption{Feasible Cross-Fold Version of $\Gamma(\Dcal;h,g)$}
 \begin{algorithmic}[1]
 \label{alg:feasible}
 \STATE \textbf{Input}: Estimators {\blockedit $\hat h(k,s,a),\hat g(s,a)$}
\STATE Fix a positive integer $Z$. For each $k \in[K]$, take a $Z$-fold random even partition $(I_{kz})^Z_{z=1}$ of the observation indices $\{1,\dots,n_k\}$ such that the size of each fold, $\abs{I_{kz}}$, is within $1$ of $n_k/Z$
\STATE Let $\mathcal{L}_z=\{(S_{ki},A_{ki},R_{ki}):k=1,\dots, K,i\in I_{kz}\}$, $\mathcal{U}_z=\{(S_{ki},A_{ki},R_{ki}):k=1,\dots, K,i\notin I_{kz}\}$
 \FOR{$z=1,\cdots,Z$}
    \STATE Construct estimators $\hat h^{(z)}=\hat h(k,s,a;\mathcal{U}_z),\,\hat g^{(z)}=\hat g(s,a;\mathcal{U}_z)$ of $h,g$ using only $\mathcal{U}_z$ as data
    \STATE Set $\hat{J}_z=\Gamma(\Lcal_z;\hat h^{(z)},\hat g^{(z)})$
    \ENDFOR 
    \STATE \textbf{Return}:
    $\hat J_{\bi}(\hat h,\hat g) = \frac{1}{n}\sum^Z_{z=1}\abs{\Lcal_z}\hat{J}_z.$
\end{algorithmic}
\end{algorithm}

\begin{theorem}\label{thm:property_feasible}
Suppose $\|\hat h^{(z)}-h\|_2=\op(1)$, $\|\hat g^{(z)}-g\|_2=\op(1)$, $\hat h^{(z)},\hat g^{(z)},h,g$ are uniformly bounded by some constants, and $h,\hat h^{(z)}$ satisfy \cref{eq:con}. Then, $\hat J_{\bi}(\hat h,\hat g)$ is unbiased and $$\sqrt{n'}(\hat J_{\bi}(\hat h,\hat g) - J)\stackrel{d}{\rightarrow} \Ncal(0,n\var[\Gamma(\Dcal;h,g)] ).$$
\end{theorem}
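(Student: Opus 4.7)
Proof plan. The plan is to combine the cross-fitting argument with a stratified central limit theorem. Introduce
$$\phi(k,s,a,r;h,g)=h(k,s,a)\epol(a\mid s)(r-g(s,a))+g(s,\epol),$$
so that $\Gamma(\Lcal;h,g)=|\Lcal|^{-1}\sum_{(k,s,a,r)\in\Lcal}\phi(\cdot;h,g)$ for any subsample $\Lcal$ and $\hat J_z=\Gamma(\Lcal_z;\hat h^{(z)},\hat g^{(z)})$. Unbiasedness is handled as already sketched in the text: conditional on $\cu_z$ the estimates are frozen, $\Lcal_z$ is a fresh sample, and \eqref{eq:con} for $\hat h^{(z)}$ together with the within-stratum factorization $p_S\pi_k p_{R\mid S,A}$ forces $\E[\hat h^{(z)}\epol r\mid\cu_z]=J$ and $\E[\hat h^{(z)}\epol \hat g^{(z)}\mid\cu_z]=\E[\hat g^{(z)}(s,\epol)\mid\cu_z]$, giving $\E[\hat J_z\mid\cu_z]=J$ and hence $\E[\hat J_{\bi}(\hat h,\hat g)]=J$.

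For asymptotic normality I would decompose
\begin{align*}
\hat J_{\bi}(\hat h,\hat g)-J = \bracks{\Gamma(\Dcal;h,g)-J}+\frac{1}{n'}\sum_{z=1}^Z |\Lcal_z|\bracks{\Gamma(\Lcal_z;\hat h^{(z)},\hat g^{(z)})-\Gamma(\Lcal_z;h,g)},
\end{align*}
where $\Gamma(\Dcal;h,g)$ in the first bracket refers to the enlarged pooled sample of size $n'$. The leading term is a stratified mean of the bounded deterministic function $\phi(\cdot;h,g)$: within each stratum $k$ the $n'_k=mn_k$ summands are i.i.d.\ and uniformly bounded (by boundedness of $h,g$ and $r\in[0,R_{\max}]$), and the $K$ strata are independent, so a stratified Lindeberg CLT yields $\sqrt{n'}(\Gamma(\Dcal;h,g)-J)\stackrel{d}{\to}\Ncal(0,\sum_k\rho_k\var_{\pi_k}[\phi(\cdot;h,g)])$. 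The same variance decomposition applied at sample size $n$ shows the target $n\var[\Gamma(\Dcal;h,g)]$ also equals $\sum_k\rho_k\var_{\pi_k}[\phi(\cdot;h,g)]$, so the variance identification is immediate.

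For the remainder, I would condition on $\cu_z$: then $\hat h^{(z)},\hat g^{(z)}$ are deterministic, and applying the constraint-based cancellation separately to $\hat h^{(z)}$ and to $h$ forces $\E[\Gamma(\Lcal_z;\hat h^{(z)},\hat g^{(z)})-\Gamma(\Lcal_z;h,g)\mid\cu_z]=0$. Hence the conditional variance of the $z$th bracket is at most $|\Lcal_z|^{-1}\sum_k(|I_{kz}|/|\Lcal_z|)\E_{\pi_k}[(\phi(\cdot;\hat h^{(z)},\hat g^{(z)})-\phi(\cdot;h,g))^2\mid\cu_z]$; expanding the difference as a sum of terms each linear in $\hat h^{(z)}-h$ or $\hat g^{(z)}-g$ with uniformly bounded coefficients, the $L_2$-consistency assumptions and uniform boundedness make this second moment $\op(1)$. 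A conditional Markov inequality then gives each fold's contribution as $\op(1/\sqrt{|\Lcal_z|})=\op(1/\sqrt{n'})$, and averaging over the fixed number $Z$ of folds preserves the rate. Slutsky combines the two parts.

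The main obstacle is the remainder analysis: one must carefully transfer the conditional-on-$\cu_z$ second-moment bound into an unconditional $\op(1/\sqrt{n'})$ statement, and verify that the constraint-based cancellation really zeroes out the conditional mean of the perturbation even though $\hat h^{(z)}$ is random---this is exactly where \eqref{eq:con} being imposed on $\hat h^{(z)}$, not merely on $h$, is essential. The stratified CLT and variance identification are then routine bookkeeping once one recognizes stratified sampling as a finite sum of independent i.i.d.\ blocks.
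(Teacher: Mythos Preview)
Your proposal is correct and follows essentially the same approach as the paper: decompose into an oracle term $\Gamma(\Dcal;h,g)-J$ handled by a stratified CLT plus a per-fold remainder $\Gamma(\Lcal_z;\hat h^{(z)},\hat g^{(z)})-\Gamma(\Lcal_z;h,g)$ whose conditional mean given $\cu_z$ vanishes by \eqref{eq:con} applied to both $\hat h^{(z)}$ and $h$, and whose conditional variance is $o_p(1)$ by $L_2$-consistency and uniform boundedness, so that conditional Chebyshev and Slutsky finish. The paper's proof is written for $Z=2$ and organizes the variance expansion via $\var[a+b]\leq 2\var[a]+2\var[b]$ rather than your linear-in-differences expansion, but these are cosmetic differences.
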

{\blockedit Note the restriction on $\hat h^{(z)}$ implicitly assumes we know logging policies.}
\Cref{thm:property_feasible,thm:concrete} together immediately lead to two important corollaries:
\begin{corollary}\label{cor:property_feasible}
Under the assumptions of \cref{thm:property_feasible}, $\hat J_{\bi}(\hat h,\hat g)$ has asymptotic MSE lower bounded by $V^*$.
\end{corollary}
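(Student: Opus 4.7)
The plan is to chain together the two preceding results directly. By \cref{thm:property_feasible}, under the stated assumptions on $\hat h^{(z)}, \hat g^{(z)}$ we have
\[
\sqrt{n'}\bigl(\hat J_{\bi}(\hat h,\hat g) - J\bigr) \stackrel{d}{\rightarrow} \Ncal\bigl(0, n\,\var[\Gamma(\Dcal;h,g)]\bigr),
\]
where $h,g$ are the $L_2$-limits of $\hat h^{(z)}, \hat g^{(z)}$. Reading off the variance of the limiting normal, the appropriately scaled asymptotic MSE (i.e., $\lim_{m\to\infty} n'\,\E[(\hat J_{\bi}(\hat h,\hat g) - J)^2]$, modulo a uniform-integrability argument that follows from the uniform boundedness assumption) equals $n\,\var[\Gamma(\Dcal;h,g)]$.

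Since $h$ satisfies \cref{eq:con} by assumption, the pair $(h,g)$ lies in the class $\{\Gamma(\Dcal;h,g)\}$ studied in \cref{thm:concrete}. That theorem tells us
\[
\var[\Gamma(\Dcal;h,g)] \geq V^*/n
\]
for every admissible $(h,g)$. Combining the two displays gives $n\,\var[\Gamma(\Dcal;h,g)] \geq V^*$, which is exactly the claimed lower bound on the asymptotic MSE of $\hat J_{\bi}(\hat h,\hat g)$.

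There is essentially no obstacle here beyond bookkeeping: the corollary is a one-line consequence of the preceding two theorems, with the only subtlety being to note that the $L_2$-limits $(h,g)$ of the nuisance estimators are themselves a feasible pair in the class $\{\Gamma(\Dcal;h,g)\}$ (so \cref{thm:concrete} applies), and to translate the variance of the Gaussian limit into the asymptotic MSE. The uniform-boundedness hypothesis in \cref{thm:property_feasible} ensures this translation is legitimate.
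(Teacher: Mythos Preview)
Your proposal is correct and matches the paper's intended argument: the paper simply states that the corollary follows immediately from \cref{thm:concrete} and \cref{thm:property_feasible}, and your chaining of these two results is exactly that. The one small remark is that your uniform-integrability detour is more than the paper requires; in the paper ``asymptotic MSE'' means the second moment of the limiting distribution (cf.\ \cref{thm:van}), so once you have the Gaussian limit $\Ncal(0,n\var[\Gamma(\Dcal;h,g)])$ the asymptotic MSE is that variance directly, and \cref{thm:concrete} finishes the job.
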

\Cref{cor:property_feasible} shows that among the class $\{\hat J_{\bi}(\hat h,\hat g)\}$, $V^*$ is also an MSE lower bound. This class is \emph{larger} than $\{\Gamma(\Dcal;h,g)\}$ since we can always take $\hat h=h,\hat g=g$ although it may be infeasible in practice.
\begin{corollary}\label{cor:property_feasible2}
Suppose $g=q$ and $\|\hat q^{(z)}-q\|_2=\op(1)$, \cref{asm:weak} holds, and $\hat q^{(z)},1/\pi_{*},q$ are uniformly bounded by some constants. Then, the cross-fitting doubly robust estimator $$\hat J_\DR:=\hat J_{\bi}(1/\pi_{*},\hat q)$$ achieves the asymptotic variance lower bound $V^*$. 
\end{corollary}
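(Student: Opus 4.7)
The plan is to obtain \cref{cor:property_feasible2} as an immediate combination of \cref{thm:concrete} and \cref{thm:property_feasible}, applied with the specific pair $h(k,s,a) = 1/\pi_*(a\mid s)$ and $g(s,a) = q(s,a)$, and with the ``estimate'' $\hat h^{(z)} = 1/\pi_*$ taken to be the known marginal logging policy. So the only real work is verifying that this pair is admissible in the sense required by the two preceding theorems; there is no essential obstacle.

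First I would verify that $h(k,s,a) = 1/\pi_*(a\mid s)$ satisfies the constraint \eqref{eq:con}. By the definition $\pi_*(a\mid s) = \sum_{k=1}^K (n_k/n)\pi_k(a\mid s)$, so for any $(s,a)$ with $\epol(a\mid s) > 0$ (and hence, by \cref{asm:weak}, $\pi_*(a\mid s) > 0$),
\[
\sum_{k=1}^K n_k \pi_k(a\mid s) h(k,s,a) \;=\; \frac{\sum_{k=1}^K n_k \pi_k(a\mid s)}{\pi_*(a\mid s)} \;=\; \frac{n\,\pi_*(a\mid s)}{\pi_*(a\mid s)} \;=\; n.
\]
The same computation shows $\hat h^{(z)} = h$ also satisfies \eqref{eq:con}. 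In particular, the pair $(1/\pi_*, q)$ lies in the class $\{\Gamma(\Dcal;h,g)\}$ to which \cref{thm:concrete} applies, and by that theorem it attains the minimum variance: $n\var[\Gamma(\Dcal;1/\pi_*,q)] = V^*$.

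Next I would verify the hypotheses of \cref{thm:property_feasible}. Since $\hat h^{(z)} = 1/\pi_* = h$ exactly, $\|\hat h^{(z)} - h\|_2 = 0 = \op(1)$ trivially, and $\hat h^{(z)}$ is uniformly bounded under the assumption that $1/\pi_*$ is bounded. The remaining conditions $\|\hat q^{(z)} - q\|_2 = \op(1)$ together with uniform boundedness of $\hat q^{(z)}$ and $q$ are supplied by hypothesis. Applying \cref{thm:property_feasible} therefore yields
\[
\sqrt{n'}\bigl(\hat J_\DR - J\bigr) \;\stackrel{d}{\to}\; \Ncal\bigl(0,\; n\var[\Gamma(\Dcal;1/\pi_*,q)]\bigr).
\]
Substituting $n\var[\Gamma(\Dcal;1/\pi_*,q)] = V^*$ from the first step gives the claim. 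Thus the corollary is really just the statement that the feasibility/cross-fitting device in \cref{alg:feasible} loses nothing asymptotically at the optimizer identified in \cref{thm:concrete}; all the ``proof work'' was already done in the two preceding theorems, and the only step that is not mechanical is confirming that \eqref{eq:con} holds for $1/\pi_*$, which is a one-line consequence of the definition of $\pi_*$.
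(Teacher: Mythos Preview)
Your proposal is correct and follows exactly the route the paper takes: the paper explicitly states that \cref{cor:property_feasible2} is an immediate consequence of combining \cref{thm:concrete} and \cref{thm:property_feasible}, and your verification that $h=1/\pi_*$ satisfies \eqref{eq:con} together with the trivial $\|\hat h^{(z)}-h\|_2=0$ check is precisely the content of that combination.
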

\Cref{cor:property_feasible2} shows that, when the logging policies are known, the minimum MSE is achievable by the cross-fitting doubly robust estimator $\hat J_\DR$.
% The corollaries are immediately derived by combining \cref{thm:concrete} with \cref{thm:property_feasible}. 
In \cref{sec:more}, we discuss how to estimate $\hat q$, which is a necessary ingredient in constructing $\hat J_{\DR}$.

\Cref{thm:property_feasible} can also be used to establish new theoretical results about other (suboptimal) estimators.
For example, we can consider a feasible version of $\hat J_{\wei}$, which we call $\hat J_{\wei(f)}$, where we use $\hat\lambda^{*}_k=\frac{n_k/\var_{n_{k}}[\epol(a\mid s)r/\pi_k(a\mid s)]}{\sum_{k'} n_{k'}/\var_{n_{k'}}[\epol(a\mid s)r/\pi_{k'}(a\mid s)]}$. \Cref{thm:property_feasible} shows it has the \emph{same} asymptotic variance as $\hat J_{\wei}$, which was not established in \citet{AgarwalAman2017EEUL}. Additionally, we can consider the naively weighted and precision-weighted average of the doubly robust estimators in each dataset, respectively:
% When we know $\pi_k$, we can easily impose the constraint \eqref{eq:con} on the estimator $\hat h$ and conclude that $\{\hat J_{\bi}(\hat h,\hat g)\}$ is $\sqrt{n'}$-consistent. The feasible version of $\hat J_{\wei}$, which replaces $\var_{\pi_k}[\cdot]$ with an empirical analogue in \cref{eq:wei}, also satisfies this constraint. Thus, we can obtain an asymptotic property of the feasible version of $\hat J_{\wei}$,  which is not proved in \citet{AgarwalAman2017EEUL}. On top of that, we can obtain a theoretical property of the other estimators such as  
\begin{align*}
\hat J_{\dr\text{-Avg}}& :=\hat J_{\bi}(1/\pi_k(a\mid s),\hat q),\\
\hat J_{\dr\text{-PW}}&  :=\hat J_{\bi}(n_k\hat \lambda^{\dagger}_k/(n\pi_k(a\mid s)), \hat q),\\
 \hat \lambda^{\dagger}_k& :=\frac{n_k\var_{n_k}[\epol r/\pi_{k}\{r-\hat q(s,a)\}+\hat q(s,\epol)]}{\sum_{k'} n_{k'}\var_{n_{k'}}[\epol r/\pi_{k'}\{r-\hat q(s,a)\}+\hat q(s,\epol)]}. 
\end{align*}
These have the same asymptotic variance as $\Gamma(\Dcal;1/\pi_k(a\mid s), q),\Gamma(\Dcal;n_k \lambda^{\dagger}_k/(n\pi_k(a\mid s)),q)$, respectively, where $\lambda^{\dagger}_k$ is the same as $\hat \lambda^{\dagger}_k$ with $\var_{n_k}$ replaced with $\var_{\pi_k}$. Neither, however, is optimal and $\hat J_{\bi}(1/\pi_{*},\hat q)$ outperforms these both.

Even if the estimators $\hat h^{(z)}$ does not satisfy \cref{eq:con}, as long as the convergence point $h$ satisfies \cref{eq:con}, the final estimator is consistent, but it may not be asymptotically normal. In this case, we need additional conditions on the convergence rates to ensure $\sqrt{n'}$-consistency. This is relevant when the logging policies are not known. We explore this in \cref{sec:efficient_estimation}.

\subsection{The Class of Regular Estimators}\label{sec: regular est}

The previous sections considered the minimal MSE in a class of estimators given explicitly by a certain structure or by a meta-algorithm. We now show that the same minimum in fact reigns among the asymptotic MSE of (almost) \emph{all} estimators that are feasible in that they ``work'' for all data-generating processes (DGPs). 

Recall our data is drawn from
$$
\Dcal\sim\prod_{k=1,i=1}^{K,n_k}p_{S}(s_{ki})\pi_k(a_{ki}\mid s_{ki})p_{R\mid S,A}(r_{ki}\mid s_{ki},a_{ki}),
$$
and that in the asymptotic regime we consider observing $m$ independent copies of $\Dcal$ (for total data size $n'=mn$).
Consider first the case where $\pi_k$ are known.
Then, $p_S$ and $p_{R\mid S,A}$ are the only unknowns in the above DGP. That is, different instances of the problem are given by setting these two to different distributions. 
Thus, in the known-logger case, we consider the model (\ie, class of instances) given by all DGPs where $p_S$ and $p_{R\mid S,A}$ vary arbitrarily and $\pi_k$ are fixed.
(This is a \emph{nonparametric} model in that these distribution are unrestricted.)
% (in particular, we consider the \emph{nonparametric} model where we allow \emph{any} assignment to these distributions).
Regular estimators are those that are $\sqrt{n'}$-consistent for \emph{all} DGPs and remain so under perturbations of size $1/\sqrt{n'}$ to the DGP \citep[for exact definition see][]{VaartA.W.vander1998As}. %Being consistent for all DGPs rules out infeasible estimators that depend on the particular DGP and only work for that instance (\eg, $g=q$). 
% All feasible estimators discussed so far are regular.
When $\hat h,\hat g$ satisfy the conditions of \cref{thm:property_feasible} for every instance (\ie, are feasible consistent estimators for $h,g$ for all instances), $\hat J_\bi(\hat h,\hat g)$ is a regular estimator, as a consequence of \cref{thm:property_feasible} and \citet[Lemma 8.14]{VaartA.W.vander1998As}.

% Our data consists of observations from the 
% We consider the model $\cm_b$: 
% \begin{align*}
%     p(o)=\sum_{k=1,j=1}^{K,n_k}p_{S}(s_{kj})\pi_k(a_{kj}\mid s_{kj})p_{R\mid S,A}(r_{kj}\mid s_{kj},a_{kj}),
% \end{align*}
% where each $\pi_k$ is known, and the other two distributions are unknown. Under this model, consider the following class of estimators $\{\Gamma(\Dcal;h,g)\}$, where 

% , which roughly are those that are $\sqrt{n'}$-consistent for \emph{all} instances 

% We consider the \emph{efficiency bound}, which is the smallest-possible asymptotic error we can hope to achieve in estimating $J$. For the detail of this subsection, refer to \cref{sec:efficiency_detail}. This efficiency bound lower bounds the asymptotic MSE of all regular estimators, which is a large class of $\sqrt{n'}$-consistent estimators including $\hat J_{\bi}(\hat h,\hat g)$ in \cref{sec:feasible}. Refer to \cref{fig:relation} regarding the relation of each class and estimator. 

The benefit of considering the class of regular estimators is that it allows us to appeal to the theory of semiparametric efficiency in order to derive the minimum asymptotic MSE in the class. We paraphrase the key result for doing this below. We provide additional detail in \cref{sec:efficiency_detail}.

\begin{theorem}{\citep[Theorem 25.20]{VaartA.W.vander1998As}}\label{thm:van}
Given a model,
the efficient influence function (EIF), $\tilde\phi(\Dcal)$, is the least-$L_2$-norm gradient of $J$ with respect to instances ranging in the model. The EIF satisfies that for any estimator $\hat J$ that is regular with respect to the model, the variance of the limiting distribution of $\sqrt{n'}(\hat J-J)$ is at least $n\var[\tilde\phi(\Dcal)]$.
% of $J$ w.r.t $\cm_b$ is the gradient of $J$ w.r.t $\cm_b$ of smallest $L_2$ norm and it satisfies that for any regular estimator $\hat J$ of $J$ w.r.t $\cm_b$, $\mathrm{AMSE}[\hat J]\geq n\var[\tilde \phi(\Dcal)]$, where $\mathrm{AMSE}[\hat J]$ is the second moment of the limiting distribution of .  
\end{theorem}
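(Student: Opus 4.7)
The plan is to view this as a direct instance of the classical convolution theorem of H\'ajek and Le~Cam (Theorem~25.20 of \citealp{VaartA.W.vander1998As}), so the work is not to rederive the semiparametric machinery but rather to cast the stratified-sampling problem into the i.i.d.\ framework that theorem presumes. The key observation is that although the individual observations are not i.i.d.\ (they come from different loggers in predetermined proportions $\rho_k$), the full block $\Dcal$ of size $n$ \emph{is} i.i.d.\ across the $m$ replications considered in the asymptotic regime $n'=mn$. I would therefore treat one block as the basic unit of observation and apply the cited theorem at that level; the factor $n$ multiplying $\var[\tilde\phi(\Dcal)]$ in the bound then arises automatically from the relation $\sqrt{n'}=\sqrt{n}\sqrt{m}$.

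Concretely, I would first fix a one-parameter regular submodel $(p_{S,t},p_{R\mid S,A,t})$ through the truth with mean-zero scores $s_0(s)$ (under $p_S$) and $s_1(r\mid s,a)$ (conditionally under $p_{R\mid S,A}$), noting that the logging policies $\pi_k$ are known and hence not perturbed. The induced score on a single block $\Dcal$ is then the additive combination $\sum_{k=1}^K\sum_{i=1}^{n_k}\bracks{s_0(S_{ki})+s_1(R_{ki}\mid S_{ki},A_{ki})}$, and the tangent space $\Tcal$ of the model at the truth is the $L_2$-closure of all such block-level scores as $(s_0,s_1)$ range over admissible pairs. Differentiating $J(P_t)=\int q_t(s,\epol)\,p_{S,t}(s)\,\rd s$ in $t$ at $t=0$ expresses $\partial_t J|_{t=0}$ as a bounded linear functional of $(s_0,s_1)$; Riesz representation yields a gradient $\phi$, and projecting onto $\overline{\Tcal}$ produces the EIF $\tilde\phi$, unique since $(p_S,p_{R\mid S,A})$ is left completely unrestricted.

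The final step is to invoke the convolution theorem: for any regular $\hat J$, the weak limit of $\sqrt{m}(\hat J-J)$ under the block-i.i.d.\ sampling decomposes as $\Ncal(0,\var[\tilde\phi(\Dcal)])*L$ for some distribution $L$, so its variance is at least $\var[\tilde\phi(\Dcal)]$; rescaling from $\sqrt{m}$ to $\sqrt{n'}=\sqrt{n}\sqrt{m}$ multiplies this lower bound by $n$, yielding the claimed $n\var[\tilde\phi(\Dcal)]$ bound. The main subtlety I expect is the handling of the stratified design: because the $n_k$ are \emph{fixed} rather than random, the joint distribution of one block is a product over strata of \emph{different} base measures, which must be embedded into the single-distribution tangent-space formalism of \citet{VaartA.W.vander1998As} before the theorem applies verbatim. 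Once this block-level reformulation is in place, however, the rest is an off-the-shelf application of semiparametric theory and no new probabilistic content is required.
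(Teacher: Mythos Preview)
Your proposal is correct and matches the paper's approach exactly: the paper does not prove this theorem but simply cites \citet[Theorem~25.20]{VaartA.W.vander1998As} and, in \cref{sec:efficiency_detail}, explains the same block-level trick you describe---treating the full stratified dataset $\Dcal$ of size $n$ as a single observation so that the $m$ replications are i.i.d.\ and standard semiparametric theory applies verbatim, with the factor $n$ emerging from $\sqrt{n'}=\sqrt{n}\sqrt{m}$.
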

The term $n\var[\tilde\phi(\Dcal)]$ is called the \emph{efficiency bound}. Estimators that achieve this bound are called \emph{efficient}. We next derive the EIF and efficiency bound for our problem. That is, for our average-reward estimand $J$ in the model given by varying $p_S,\,p_{R\mid S,A}$ arbitrarily.
% For the current problem, the EIF and efficiency bound are derived as follows. 

\begin{theorem}\label{thm:efficient}
{\blockedit 
Define \begin{equation}\phi(s,a,r;g)=\frac{\epol(a\mid s)}{\pi_{*}(a\mid s)}(r-g(s,a))+g(s,\epol).\label{eq:double_robust}\end{equation}
Then in the model with $\pi_k$ known and fixed, the EIF is $\tilde\phi(\Dcal)=\frac{1}{n}\sum_{k=1,i=1}^{K,n_k}\phi(S_{ki},A_{ki},R_{ki};q)-J$ and the efficiency bound is $V^*$.
}
% Under the model $\cm_b$, the EIF $\tilde \phi(o)$ is 
% \begin{align*} 
    % \frac{1}{n}\sum_{i=1}^n\braces{\frac{\epol(a_i\mid s_i)}{\pi_{*}(a_i\mid s_i)}\{r_i-q(s_i,a_i)\}+q(s_i,\epol)-J}. 
% \end{align*}
% The efficiency bound is $V(\cm_b)$.  
\end{theorem}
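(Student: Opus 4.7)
The strategy is the standard semiparametric efficiency calculation under \cref{thm:van}: identify the tangent space $\Tcal$ of the model at the truth, exhibit $\tilde\phi$ as an influence function (IF) by verifying that its $L^2$ inner product with every submodel score equals the corresponding pathwise derivative of $J$, and then show $\tilde\phi\in\Tcal$ so it must be the unique minimum-norm (efficient) IF. A direct variance computation then yields $\var[\tilde\phi]=V^*/n$, so the efficiency bound $n\var[\tilde\phi]$ equals $V^*$. Since $\pi_k$ are fixed and only $p_S,p_{R\mid S,A}$ vary, I parameterize submodels by $p_S\to p_S(1+\epsilon a(s))$ with $\E_{p_S}[a]=0$ and $p_{R\mid S,A}\to p_{R\mid S,A}(1+\epsilon b(s,a,r))$ with $\E[b\mid s,a]=0$; the score of one copy of $\Dcal$ is $S=\sum_{k=1,i=1}^{K,n_k}[a(S_{ki})+b(S_{ki},A_{ki},R_{ki})]$ and $\Tcal$ is the closure of all such $S$. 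Differentiating $J=\E_{p_S}[v(s)]$ at $\epsilon=0$ gives $\dot J=\E_{p_S}[a\,v]+\E_{\epol}[b\,r]$.

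For the IF condition I compute $\E[\tilde\phi\,S]$. Independence across observations together with $\E_{p_S}[a]=0$ and $\E[b\mid s,a]=0$ annihilate all $(k,i)\ne(k',i')$ cross terms, leaving a diagonal sum. For each $(k,i)$, integrating $r$ first and using $\E[r-q(s,a)\mid s,a]=0$ gives the conditional identity $\E_{\pi_k,p_{R\mid S,A}}[\phi(s,a,r;q)\mid s]=v(s)$, so the $a$-part contributes $n_k\E_{p_S}[a\,v]$, which sums over $k$ to $n\E_{p_S}[a\,v]$. For the $b$-part, integrating $r$ leaves a term weighted by $\pi_k(a\mid s)\epol(a\mid s)/\pi_*(a\mid s)$; summing over $k$ via the key stratified identity
\[\sum_{k=1}^K n_k\pi_k(a\mid s)=n\pi_*(a\mid s)\]
collapses the logger-specific densities against the IS denominator, producing $n\E_{\epol}[b\,r]$ (after one final use of $\E[b\mid s,a]=0$ to drop the $q$-term). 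Dividing the diagonal sum by $n$ reproduces $\dot J$ exactly, so $\tilde\phi$ is an IF.

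To show $\tilde\phi\in\Tcal$, I use the decomposition
\[\tfrac1n[\phi(s,a,r;q)-J]=\tfrac1n\bigl(v(s)-J\bigr)+\tfrac1n\tfrac{\epol(a\mid s)}{\pi_*(a\mid s)}\bigl(r-q(s,a)\bigr),\]
whose first summand satisfies $\E_{p_S}[v-J]=0$ and whose second satisfies $\E[\cdot\mid s,a]=0$; these are valid $(a_0,b_0)$ choices, so $\tilde\phi=\sum_{k,i}[a_0(S_{ki})+b_0(S_{ki},A_{ki},R_{ki})]\in\Tcal$ and by \cref{thm:van} is the EIF. Finally, the law of total variance conditional on $s$ within stratum $k$ yields $\E[\phi\mid s]=v(s)$ and $\var[\phi\mid s]=\E_{\pi_k}[(\epol/\pi_*)^2\sigma_r^2(s,a)]$, whence $\var_{p_S\pi_kp_{R\mid S,A}}[\phi]=\E_{p_S,\pi_k}[(\epol/\pi_*)^2\sigma_r^2]+\var_{p_S}[v]$. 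Independence across $(k,i)$ and weighting by $n_k$, with one more application of $\sum_k n_k\pi_k=n\pi_*$, give $n^2\var[\tilde\phi]=n\E_{\pi_*}[(\epol/\pi_*)^2\sigma_r^2]+n\var_{p_S}[v]=nV^*$, confirming the bound. The main subtlety throughout is bookkeeping the stratified structure: the EIF depends on the \emph{marginal} $\pi_*$ even though the likelihoods involve per-stratum $\pi_k$, and the identity $\sum_k n_k\pi_k=n\pi_*$ is precisely what reconciles the marginal form with the stratum-wise scores at every step.
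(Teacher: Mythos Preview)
Your argument is correct and follows essentially the same route as the paper: identify the tangent space of the model (scores of the form $\sum_{k,i}[a(s_{ki})+b(s_{ki},a_{ki},r_{ki})]$ with $\E[a]=0,\ \E[b\mid s,a]=0$), verify that $\tilde\phi$ is a gradient of $J$ via the pathwise-derivative identity, and check $\tilde\phi$ lies in the tangent space via the decomposition $\phi-J=(v-J)+\tfrac{\epol}{\pi_*}(r-q)$. The only organizational difference is that the paper first carries out the gradient calculation in the larger model where the $\pi_k$ are also unknown (the proof of \cref{thm:b_known}) and then observes that a gradient in a supermodel remains a gradient in the submodel $\cm_b$, whereas you work directly in $\cm_b$; you also include the variance computation $n\var[\tilde\phi]=V^*$ explicitly, which the paper obtains separately via \cref{thm:concrete}.
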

Notably, the EIF belongs to the class $\{\Gamma(\Dcal;h,g)\}$ and is exactly the optimal (infeasible) estimator in that class. Correspondingly, the efficiency bound is exactly the same $V^*$ from \cref{thm:concrete,cor:property_feasible}. This shows that, remarkably, $\hat J_\dr$ is in fact also optimal in the much broader sense of semiparametric efficiency.

Notice that in efficiency theory for OPE in the standard \emph{un}stratified case \citep{KallusUehara2019} and in other standard semiparametric theory \citep{bickel98,TsiatisAnastasiosA2006STaM}, we must consider iid sampling of observations. However, in the stratified case the data are \emph{not} iid, since $n_k$ are fixed. To be able to tackle the stratified case meaningfully we consider a dataset of size $n'\to\infty$ where the proportions of data from each logger, $\rho_k$, are always \emph{fixed}. We achieve this in a new way via the equivalent construction of observing $m$ independent copies of $\Dcal$ with $m\to\infty$.

 % we must consider an alternative approach. Here we overcome this by considering 

% Standard semiparametric theory to derive these efficiency bounds is established under the i.i.d sampling \citep{bickel98,TsiatisAnastasiosA2006STaM}. Since stratified case is  not i.i.d (not identical though independent), we need an additional trick to apply this theory. This is our significant theoretical contribution. Refer to \cref{sec:efficiency_detail}. 

% the efficiency bound is the lower bound in \cref{thm:concrete,cor:property_feasible}. \cref{thm:efficient} is regarded as a generalized version of \cref{cor:property_feasible} since regular estimators includes $\hat J_{\bi}(\hat h,\hat g)$. 

Next, we consider the case where the logging policies $\pi_k$ are \emph{not} known. Namely, we consider the model where we allow \emph{all} of $p_S, p_{R\mid S,A}, \pi_1, \dots, \pi_K$ to vary arbitrarily. We next show that in this larger model, the EIF and efficiency bounds are again the same. 
% {\blockedit 
\begin{theorem}\label{thm:b_known}
When the logging policies are not known, the EIF and the efficiency bound are the same as the ones in \cref{thm:efficient}.
% Under the model $\cm$, the EIF and the efficiency bound are the same as the ones in \cref{thm:efficient}.
\end{theorem}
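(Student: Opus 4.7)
The plan is the standard tangent-space-orthogonality argument from semiparametric theory: enlarging the model to also let $\pi_1,\ldots,\pi_K$ vary strictly enlarges the tangent space, and the EIF $\tilde\phi$ from \cref{thm:efficient} remains the EIF in the enlarged model provided (i) it is still a gradient of $J$ in the enlarged model and (ii) it still lies in the (now larger) tangent space. Condition (ii) is automatic because the smaller tangent space is contained in the larger one. Condition (i) reduces to an $L_2$-orthogonality check between $\tilde\phi$ and the extra tangent directions coming from perturbations of $\pi_k$.

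First I would set up the tangent decomposition in the enlarged model, treating the batch $\Dcal$ as the atomic iid unit (so that $m\to\infty$ iid copies of $\Dcal$ play the role of the $n'$ observations). A one-dimensional submodel perturbing only $\pi_k$ along a direction $h_{3,k}(s,a)$ with $\E_{\pi_k}[h_{3,k}(S,A)\mid S]=0$ a.s.\ contributes the $\Dcal$-score
\begin{equation*}
\ell_{\pi_k}(\Dcal) \;=\; \sum_{i=1}^{n_k} h_{3,k}(S_{ki},A_{ki}),
\end{equation*}
and the additional tangent subspace $T_{\pi_k}$ is the closed linear span of such scores as $h_{3,k}$ varies. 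The remaining tangent directions (from perturbing $p_S$ and $p_{R\mid S,A}$) coincide with those of the smaller model, in which $\tilde\phi$ is already a gradient.

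Second, since $J=\E[q(S,\epol)]$ has no functional dependence on $\pi_k$, the pathwise derivative of $J$ along every element of $T_{\pi_k}$ is zero, so condition (i) amounts to $\E[\tilde\phi(\Dcal)\,\ell_{\pi_k}(\Dcal)]=0$ for every admissible $h_{3,k}$. By independence across strata and across the $n_k$ within-stratum draws, every pair-cross-term $\E[\phi(S_{k'i},A_{k'i},R_{k'i};q)\,h_{3,k}(S_{kj},A_{kj})]$ with $(k',i)\neq(k,j)$ factors as $\E_{\pi_{k'}}[\phi]\cdot\E_{\pi_k}[h_{3,k}]=J\cdot 0=0$, and the $-J$ centering in $\tilde\phi$ likewise contributes zero. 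What survives is the diagonal term $\E_{\pi_k}[\phi(S,A,R;q)\,h_{3,k}(S,A)]$; its inner conditional expectation in $R$ given $(S,A)$ annihilates the $\frac{\epol(a\mid s)}{\pi_{*}(a\mid s)}(r-q(s,a))$ piece since $\E[R\mid S,A]=q(S,A)$, and what remains is $\E[q(S,\epol)\,\E_{\pi_k}[h_{3,k}(S,A)\mid S]]=0$ by the score condition on $h_{3,k}$.

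Combining (i) and (ii) with the characterization of the EIF as the minimum-$L_2$ gradient lying in the tangent space \citep[Theorem 25.20]{VaartA.W.vander1998As} shows that $\tilde\phi$ is still the EIF in the enlarged model and the efficiency bound is unchanged at $V^*$. The main obstacle here is bookkeeping rather than conceptual: because the sampling is stratified and not iid, the semiparametric framework must be instantiated with $\Dcal$ (not one triple $(S,A,R)$) as the atomic iid unit, and each submodel score must be expanded as a sum over the $n_k$ within-stratum observations before the orthogonality calculation can be organized cleanly; once this expansion is in hand, the calculation itself collapses to the one-line conditional-expectation argument above.
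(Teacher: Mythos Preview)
Your proof is correct, and the core calculation---the orthogonality of $\tilde\phi$ to the $\pi_k$-score directions---is the same one the paper carries out. The organizational difference is that the paper actually establishes \cref{thm:b_known} first, by directly verifying the gradient identity $\nabla J(\theta)=\E[\tilde\phi(\Dcal)\,\nabla\log p(\Dcal;\theta)]$ for all parametric submodels of the larger model $\cm$ (the $\pi_k$-score contributions drop out via exactly your conditional-expectation step $\E_{\pi_k}[\psi(s,a,r)g_k(s,a)]=0$), and then derives \cref{thm:efficient} from it by noting that $\tilde\phi$ remains a gradient for $\cm_b\subset\cm$ and still lies in the smaller tangent space. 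You run the logic in reverse: take the EIF for $\cm_b$ as given by \cref{thm:efficient}, note it automatically lies in the larger tangent space, and verify it is still a gradient for $\cm$ by checking only the extra orthogonalities. Both routes hinge on the identical key computation and are equally valid; yours is a bit more modular, but it presupposes an independent proof of \cref{thm:efficient}, whereas in the paper \cref{thm:efficient} actually leans on \cref{thm:b_known}---so make sure you are not silently invoking a circular dependence.
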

% }
% Again, this gives the lower bound of the asymptotic MSE of any regular estimators under the fully nonparametric model.
Recall that \cref{thm:property_feasible} shows that the efficiency bound is asymptotically achieved with $\hat h=1/\pi_{*},\,\hat g=\hat q$ when we know each $\pi_k$. In the next section, we show that this lower bound can be achieved even if we do not know the logging policies and we use $\hat h=1/\hat\pi_{*}$ under some additional mild conditions. 

% \begin{remark}
% Standard semiparametric theory to derive these efficiency bounds is established under the i.i.d sampling \citep{bickel98,TsiatisAnastasiosA2006STaM}. Since stratified case is  not i.i.d (not identical though independent), we need an additional trick to apply this theory. This is our significant theoretical contribution. Refer to \cref{sec:efficiency_detail}. 
% \end{remark}

%\begin{remark}
%We can characterize the space of any regular estimators, which is a much bigger class than the class %$\hat J_{\bi}(\Dcal;\hat h,\hat g)$. Refer to Appendix \ref{sec:proof} in detail. 
%\end{remark}

\section{Efficient and Robust Estimation with Unknown Logging Policies }\label{sec:efficient_estimation}

% We next propose an efficient estimator when we might not know the logging policies. 
In the previous section, we showed the efficiency bound is the same whether we know or do not know the logging policies, but the efficient estimator proposed, $\hat J_\dr=\hat J_{\bi}(1/\pi_{*},\hat q)$, only works when they are known. A natural estimation way when we do not know behavior policies is to estimate $\pi_*$: $$ \hat J_\drpihat:=\hat J_{\bi}(1/\hat \pi_*,\hat q).$$ 
% Without sample splitting, this is reduced to $\E_n[\phi(S,A,R;\hat \pi_{*},\hat q)]$, where 
% \begin{align}\label{eq:double_robust}
% \phi(s,a,r;\pi,g)=\frac{\epol(a\mid s)}{\pi(a\mid s)}\{r- g(s,a)\}+g(s,\epol).  
% \end{align}
% Note that this estimator is also contrived from a different perspective by plugging the estimators of $q(s,a),\pi_{*}(a\mid s)$ into the EIF $\tilde \phi(o)$
% 
% We establish theoretical properties of $\hat J_{\dr}$. 
%Before that, we introduce several assumptions for this section. 
%\begin{assumption}[Strong Overlap]\label{asm:strong}
%$\exists\, C_1\in \mathbb{R},\epol(a\mid s)/\pi_{*}(a\mid s) \leq  C_1$. 
%\end{assumption}
%\begin{assumption}[Boundedness of estimators]\label{asm:bounded}
%$\exists\, C_2,C_3\in \mathbb{R},\epol(a\mid s)/\hat \pi^{(z)}_*(a\mid s) \leq  C_2,\,\hat %q^{(z)}(s,a)\leq C_3$. 
%\end{assumption}
First, we prove efficiency of $\hat J_{\dr}$ under lax nonparametric rate conditions for the nuisance estimators. 

\begin{theorem}[Efficiency]\label{thm:efficiency}
Suppose $1/\pi_{*},q,\hat q^{(z)}, 1/\hat \pi^{(z)}_*$ are uniformly bounded by some constants and that \cref{asm:weak} holds. Assume $\forall z\in [Z]$, $\|\hat q^{(z)}-q\|_2=\op(1)$, $\|\hat \pi^{(z)}_*-\pi_*\|_2=\op(1)$, and $\|\hat q^{(z)}-q\|_2\|\hat \pi^{(z)}_*-\pi_*\|_2=\op(n'^{-1/2})$. Then, $\hat J_\drpihat$ is efficient: $\sqrt{n'}(\hat J_\drpihat-J)\stackrel{d}{\rightarrow}\Ncal(0,V^*)$. 
\end{theorem}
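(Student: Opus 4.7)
The plan is to perform the usual Neyman-orthogonal expansion of $\hat J_\drpihat-J$ into an oracle linear term plus a higher-order remainder, apply a stratified CLT to the oracle term to recover the variance $V^*$ of \cref{thm:concrete}, and then control the remainder by $\op(n'^{-1/2})$ using cross-fitting together with the prescribed rate conditions. Concretely, write $\hat J_\drpihat=n'^{-1}\sum_i\hat\phi_i$, where for the fold $z(i)$ containing observation $i$, $\hat\phi_i=\frac{\epol(A_i\mid S_i)}{\hat\pi_*^{(z(i))}(A_i\mid S_i)}(R_i-\hat q^{(z(i))}(S_i,A_i))+\hat q^{(z(i))}(S_i,\epol)$, and let $\phi_i=\phi(S_i,A_i,R_i;q)$ as in \cref{eq:double_robust}. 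Then $\hat J_\drpihat-J=(n'^{-1}\sum_i\phi_i-J)+n'^{-1}\sum_i(\hat\phi_i-\phi_i)$.

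For the oracle term, observations in stratum $k$ are i.i.d.\ from $p_S\pi_k p_{R\mid S,A}$ and $\E[R-q\mid S,A]=0$ gives $\E_{\pi_k}[\phi]=J$. A stratum-wise CLT combined with independence across strata yields $\sqrt{n'}(n'^{-1}\sum_i\phi_i-J)\stackrel{d}{\to}\Ncal(0,\sum_k\rho_k\var_{\pi_k}[\phi])$, and the mixture identity $\sum_k\rho_k\pi_k=\pi_*$ (together with $\E[R-q\mid S,A]=0$) collapses this limit variance to $\E_{\pi_*}[(\epol/\pi_*)^2\sigma_r^2]+\var_{p_S}[v(S)]=V^*$, exactly as in \cref{thm:concrete}.

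The real work is in the remainder. Algebraic manipulation yields $\hat\phi_i-\phi_i=T_{1,i}+T_{2,i}+T_{3,i}$ with $T_{1,i}=\frac{\epol(A_i\mid S_i)}{\pi_*(A_i\mid S_i)}(q-\hat q^{(z(i))})(S_i,A_i)-(q-\hat q^{(z(i))})(S_i,\epol)$, $T_{2,i}=\epol(A_i\mid S_i)(R_i-q(S_i,A_i))\cdot\frac{\pi_*-\hat\pi_*^{(z(i))}}{\pi_*\hat\pi_*^{(z(i))}}(S_i,A_i)$, and $T_{3,i}=\epol(A_i\mid S_i)(q-\hat q^{(z(i))})(S_i,A_i)\cdot\frac{\pi_*-\hat\pi_*^{(z(i))}}{\pi_*\hat\pi_*^{(z(i))}}(S_i,A_i)$. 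Conditioning on $\cu_{z(i)}$ fixes the nuisance estimates while preserving the stratified distribution of $(k_i,S_i,A_i,R_i)$. Using $\sum_k\rho_k\pi_k=\pi_*$ one checks $\E[T_{1,i}\mid\cu_{z(i)}]=0$; boundedness and conditional Chebyshev then bound $n'^{-1}\sum_i T_{1,i}$ by $\Op(\|\hat q^{(z)}-q\|_2/\sqrt{n'})=\op(n'^{-1/2})$. Likewise, $\E[T_{2,i}\mid\cu_{z(i)},S_i,A_i]=0$ because $\E[R-q\mid S,A]=0$, giving $n'^{-1}\sum_i T_{2,i}=\Op(\|\hat\pi_*^{(z)}-\pi_*\|_2/\sqrt{n'})=\op(n'^{-1/2})$. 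Finally, $|T_{3,i}|\le C|q-\hat q^{(z(i))}||\pi_*-\hat\pi_*^{(z(i))}|$ under the boundedness assumptions, so Cauchy-Schwarz with Markov's inequality gives $n'^{-1}\sum_i|T_{3,i}|=\Op(\|\hat q^{(z)}-q\|_2\|\hat\pi_*^{(z)}-\pi_*\|_2)=\op(n'^{-1/2})$ directly by the product-rate hypothesis. Slutsky then yields $\sqrt{n'}(\hat J_\drpihat-J)\stackrel{d}{\to}\Ncal(0,V^*)$, the efficiency bound of \cref{thm:efficient,thm:b_known}.

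The main obstacle is the stratified bookkeeping: the zero-mean identities used both in the oracle CLT and in the analysis of $T_{1,i}$ are not the single-logger identity $\E_{\pi_*}[\epol f/\pi_*]=\E_{\epol}[f]$, but the $\rho_k$-weighted mixture version $\sum_k\rho_k\E_{\pi_k}[\epol f/\pi_*]=\E_{\epol}[f]$, which relies essentially on $\sum_k\rho_k\pi_k=\pi_*$---the very identity that causes logger identity to drop out of $V^*$ in \cref{thm:concrete}. Cross-fitting is what makes the coupled dependence between $(\hat\pi_*^{(z)},\hat q^{(z)})$ and the evaluation sample tractable via a simple conditioning argument, avoiding any Donsker-type entropy conditions on the nuisance classes.
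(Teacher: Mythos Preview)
Your proposal is correct and follows essentially the same Neyman-orthogonal cross-fitting argument as the paper. The only cosmetic difference is in bookkeeping: the paper groups the remainder into an empirical-process piece $\G_{n'^{(z)}}[\phi(\cdot;\hat\pi_*^{(z)},\hat q^{(z)})-\phi(\cdot;\pi_*,q)]$ (handled by conditional Chebyshev) and a conditional-bias piece $\E_{\pi_*}[\phi(\cdot;\hat\pi_*^{(z)},\hat q^{(z)})-\phi(\cdot;\pi_*,q)\mid\cu_z]$ (whose only nonzero part is the cross term, bounded by the product rate), whereas you first split $\hat\phi_i-\phi_i$ algebraically into $T_{1,i},T_{2,i},T_{3,i}$ and then apply the same two zero-mean identities and product-rate bound piece by piece.
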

First, notice that \cref{cor:property_feasible2} can also be seen as corollary of \cref{thm:efficiency} by noting that if we set $\hat\pi_*=\pi_*$  then $\|\hat \pi^{(z)}_*-\pi_*\|_2=0$.
Second, notice that unlike \cref{thm:property_feasible}, we do \emph{not} restrict $\hat h=1/\hat\pi_*$ to satisfy \cref{eq:con}, as indeed satisfying it would be impossible when $\pi_k$ are unknown. At the same time, $\hat J_\drpihat$ is not unbiased (only asymptotically). Finally, notice that again $\hat J_\drpihat$, an efficient estimator, does not appear to use logger identity data. We will, however, use it in \cref{sec:more} to improve $q$-estimation.

% Two things are remarked. First, this result is reduced to \cref{cor:property_feasible2} when we know behavior policies noting $\alpha\beta=\op(n'^{-1/2})$ automatically holds in this case since $\alpha=0$. Besides, \cref{thm:efficiency} is not directly derived from \cref{thm:property_feasible} since \cref{thm:property_feasible} requires a constraint for the nuisance estimator $\hat h$, which is an unreasonable assumption when we do not know behavior policies. Second, the efficient estimator $\hat J_{\dr}$ uses the same weight for each cell; thus, it apparently does not use the label information. However, we will show there is a room to use this label information in order to estimate $q$-functions in \cref{sec:more}. 
%Second, an extension from the asymptotic result to a finite sample result is straightforward when we have finite-sample guarantee for nuisance estimators $\hat q,\hat \pi_*$ following \citet{KallusUehara2019}. 

Next, we prove double robustness of $\hat J_{\drpihat}$. This suggests when we posit parametric models for $\hat q,\hat \pi$, as long as either model is well-specified, the final estimator $\hat J_{\drpihat}$ is $\sqrt{n'}$-consistent though might not be efficient. 
This is formalized as follows noting that well-specified parametric models converge at rate ${n'}^{-1/2}$. 

%\begin{theorem}[Double Robustness]\label{thm:double_robustness}
%Suppose $1/\pi_{*},q,\hat q^{(z)}, 1/\hat \pi^{(z)}_*$ are uniformly bounded by some constants and that \cref{asm:weak} holds. 
%Assume  $\forall z\in [Z]$, for some $q^{\dagger},\pi^{\dagger}_{*}$, $\|\hat q^{(z)}-q^{\dagger}\|_2=\bigO_p(n'^{-1/2})$ and $\|\hat \pi^{(z)}_*-\pi^{\dagger}_*\|_2=\bigO_p(n'^{-1/2})$. Then, as long as either $q^{\dagger}=q$ or $\pi^{\dagger}_{*}=\pi_*$, $\hat J_{\dr}$ is $\sqrt{n'}$-consistent. 
%\end{theorem}

\begin{theorem}[Double Robustness]\label{thm:double_robustness}
Suppose \cref{asm:weak} holds. Assume  $\forall z\in [Z]$, for some $q^{\dagger},\pi^{\dagger}_{*}$, $\|\hat q^{(z)}-q^{\dagger}\|_2=\bigO_p(n'^{-1/2})$ and $\|\hat \pi^{(z)}_*-\pi^{\dagger}_*\|_2=\bigO_p(n'^{-1/2})$, and 
$1/\pi^{\dagger}_{*},q^{\dagger},\hat q^{(z)}, 1/\hat \pi^{(z)}_*$ are uniformly bounded by some constants. Then, as long as either $q^{\dagger}=q$ or $\pi^{\dagger}_{*}=\pi_*$, $\hat J_\drpihat$ is $\sqrt{n'}$-consistent. 
\end{theorem}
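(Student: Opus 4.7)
The plan is to leverage the standard double-robust product structure of the bias combined with cross-fitting. Decompose
\begin{align*}
\hat J_\drpihat - J = \tfrac{1}{n}\sum_{z=1}^Z |\Lcal_z|\bigl(\hat J_z - J\bigr), \qquad \hat J_z - J = \underbrace{\bigl(\hat J_z - \E[\hat J_z\mid\cu_z]\bigr)}_{\text{empirical}} + \underbrace{\bigl(\E[\hat J_z\mid\cu_z] - J\bigr)}_{\text{bias}},
\end{align*}
and show both parts are $\Op(n'^{-1/2})$ under the stated rate conditions.

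For the bias term, condition on $\cu_z$ so that $\hat q^{(z)}$ and $\hat\pi_*^{(z)}$ are deterministic. The integrand $\phi(s,a,r;\hat q^{(z)})$ with $\hat\pi_*^{(z)}$ in place of $\pi_*$ does not depend on $k$, so even under stratified sampling the conditional expectation collapses to a $\pi_*$-expectation (up to a $O(1/n)$ error from the ``within $1$ of $n_k/Z$'' partition, absorbed into $\op(n'^{-1/2})$). Adding and subtracting $q$, I would obtain the standard cross-product identity
\begin{align*}
\E[\hat J_z\mid\cu_z] - J \;=\; \E_{p_S}\!\left[\int \epol(a\mid s)\,\frac{\bigl(\pi_*(a\mid s) - \hat\pi_*^{(z)}(a\mid s)\bigr)\bigl(q(s,a) - \hat q^{(z)}(s,a)\bigr)}{\hat\pi_*^{(z)}(a\mid s)}\,da\right].
\end{align*}
Using the uniform boundedness of $1/\hat\pi_*^{(z)}$ and $\epol/\pi_*$ (which is finite by \cref{asm:weak} and the boundedness assumption), Cauchy--Schwarz gives $|\E[\hat J_z\mid\cu_z] - J| \le C\,\|\hat q^{(z)} - q\|_2\,\|\hat\pi_*^{(z)} - \pi_*\|_2$ for some constant $C$.

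Now invoke the parametric-rate hypothesis on the nuisances. If $q^\dagger = q$, then $\|\hat q^{(z)} - q\|_2 = \Op(n'^{-1/2})$ directly, and the triangle inequality together with uniform boundedness gives $\|\hat\pi_*^{(z)} - \pi_*\|_2 = \Op(1)$, so the product is $\Op(n'^{-1/2})$. Symmetrically, if $\pi_*^\dagger = \pi_*$, then $\|\hat\pi_*^{(z)} - \pi_*\|_2 = \Op(n'^{-1/2})$ and $\|\hat q^{(z)} - q\|_2 = \Op(1)$. Either way the bias is $\Op(n'^{-1/2})$.

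For the empirical term, conditional on $\cu_z$ the summands in $\hat J_z$ are independent with bounded variance (boundedness of $\hat q^{(z)}$, $1/\hat\pi_*^{(z)}$, $r$, and $\epol$). Thus $\var[\hat J_z \mid \cu_z] = O(1/|\Lcal_z|) = O(1/n')$, and Chebyshev's inequality yields $\hat J_z - \E[\hat J_z \mid \cu_z] = \Op(n'^{-1/2})$. Combining both parts across the finite number $Z$ of folds produces $\hat J_\drpihat - J = \Op(n'^{-1/2})$. The main subtlety is the bias identity itself: one must verify that with possibly misspecified nuisances the $k$-free integrand still permits the reduction to a $\pi_*$-expectation and that the ``within $1$ of $n_k/Z$'' stratum-wise partition contributes only a lower-order $O(1/n)$ perturbation. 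Once that reduction is made, the double-robust cancellation is mechanical and the remainder of the argument is routine.
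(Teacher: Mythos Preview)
Your proposal is correct and uses the same cross-fitting plus conditional-bias/conditional-fluctuation template as the paper, but with a different decomposition. The paper inserts the fixed limit $\phi(s,a,r;\pi_*^{\dagger},q^{\dagger})$ inside each fold and obtains three pieces: an empirical-process term comparing $(\hat\pi_*^{(z)},\hat q^{(z)})$ to $(\pi_*^{\dagger},q^{\dagger})$ (which is $\op(n'^{-1/2})$ by the parametric nuisance rates), a bias term between the same two pairs (which is $\Op(n'^{-1/2})$ by the \emph{maximum} of the two nuisance errors), and the oracle term $\E_{n'^{(z)}}[\phi(\,\cdot\,;\pi_*^{\dagger},q^{\dagger})]-J$, whose mean is exactly zero by the doubly-robust identity since one of $\pi_*^{\dagger},q^{\dagger}$ coincides with the truth, and which is then handled by the CLT. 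Your route is more direct: you bound $\E[\hat J_z\mid\cu_z]-J$ by the \emph{product} $\|\hat q^{(z)}-q\|_2\,\|\hat\pi_*^{(z)}-\pi_*\|_2$ and argue that under either well-specification hypothesis one factor is $\Op(n'^{-1/2})$ while the other is $\Op(1)$ by boundedness. Both arguments are standard in the DR literature; yours avoids the intermediate $(\pi_*^{\dagger},q^{\dagger})$ quantities at the cost of working with the true $\pi_*$ rather than $\pi_*^{\dagger}$ in the bias bound.

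One minor caveat worth flagging: your Cauchy--Schwarz step implicitly requires $\epol/\pi_*$ to be uniformly bounded (or at least in $L_2(\pi_*)$). \Cref{asm:weak} only guarantees that $\epol/\pi_*$ is finite almost everywhere, and the theorem's explicit boundedness hypotheses concern $1/\pi_*^{\dagger}$ and $1/\hat\pi_*^{(z)}$, not $1/\pi_*$. In the case $\pi_*^{\dagger}=\pi_*$ this is automatic; in the case $q^{\dagger}=q$ (so $\pi_*^{\dagger}$ may differ from $\pi_*$) you would need an extra side condition or a slightly different splitting of the integral. The paper's detour through $(\pi_*^{\dagger},q^{\dagger})$ sidesteps this, since every inverse weight that appears is either $1/\hat\pi_*^{(z)}$ or $1/\pi_*^{\dagger}$, both assumed bounded.
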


\section{Stratified vs iid Sampling}\label{sec:multi}

We next discuss in more detail the differences and similarities between stratified and iid sampling. To make comparisons, consider the alternative iid DGP: $\Dcal'=\{(S_i,A_i,R_i):i=1,\dots,n\}$, where $(S_i,A_i,R_i)\sim p_{S}(s)\pi_{*}(a\mid s)p_{R\mid S,A}(r\mid s,a)$ independently for $i=1,\dots,n$. That is, we observe $n$ iid samples from the logging policy $\pi_*$. This is equivalent to randomizing the dataset sizes as $(n_1,\dots,n_K)\sim\operatorname{Multinomial}(n,\rho_1,\dots,\rho_K)$. In this iid setting, the results of \citet{KallusUehara2019} show that the efficiency bound is the \emph{same} $V^*$ as in \cref{thm:concrete,thm:property_feasible,thm:efficient} and that $\hat J_{\drpihat},\hat J_{\dr}$ also achieve this bound in the iid setting.

% First of all, the efficiency bound $V(\cm_b)$ in our work is the same as the case when we assume the data comes from a marginalized distribution: $\Dcal':=$
% \begin{align}\label{eq:mix}
    % \{S_{ki},A_{ki},R_{ki}\}_{k=1,i=1}^{K,N_k} \overset{\text{i.i.d}}\sim p_{S}(s)\pi_{*}(a\mid s)p_{R\mid S,A}(r\mid s,a), 
% \end{align}
% where $N_k$ is a sample size of $k$-th strata, which is a random variable s.t. $\E[N_k]=n_k$. This is a single logger case with a marginalized policy $\pi_{*}$. \footnote{Refer to \citet{narita2018} regarding the efficiency bound for a single logger case } This data-generating process differs from our original setting since we assume that the data is stratified and not identically distributed. Under the data-generating process \eqref{eq:mix}, we can operationally construct an estimator as \eqref{eq:double_robust}. We can prove that this attains the efficient bound under similar condition as \cref{thm:efficiency} following the proof of \citet{KallusUehara2019}. 

This is very surprising since usually an estimator has different variances in different DGPs. For example, the variance of $\hat J_{\bal}$ under the two different sampling settings are \emph{different}, \ie:
\begin{align*} 
   \var_{\Dcal}[\hat J_{\bal}]&=  \frac{1}{n}\sum_{k=1}^K \rho_k\var_{\pi_k}\bracks{\frac{\epol(a\mid s)r}{\pi_{*}(a\mid s)}}\\
   & \leq  \frac{1}{n}\var_{\pi_{*}}\left[\frac{\epol(a\mid s)r}{\pi_{*}(a\mid s)} \right]= \var_{\Dcal'}[\hat J_{\bal}]. 
\end{align*}
% The left hand side is the variance for $\Dcal$ in \cref{sec:preli} and the right hand side is the one under \eqref{eq:mix}. 
This inequality is easily proved by law of total variance and shows that the variance under stratified sampling is \emph{lower}. The inequality is generally strict when $\pi_k$ are distinct.
%\begin{align*} 
%   \var_{\Dcal'}[\hat J_{\bal}]&=   \E[\var_{D'}[\hat J_{\bal}|\{N_k\}_{k=1}^K]]+\var[\E[\hat J_{\bal}|\{N_k\}_{k=1}^K|]]\\
%   & \geq \E[\var_{D'}[\hat J_{\bal}|\{N_k\}_{k=1}^K]]=  \var_{\Dcal}[\hat J_{\bal}]. 
%\end{align*}
% We have a more general theorem.
This observation generalizes.
\begin{theorem}\label{thm:stratified}
Suppose \cref{asm:weak} holds. Consider the class of estimators $\{ \E_n\bracks{\phi(s,a,r;g)}\}$, where $\phi$ is given in \cref{eq:double_robust} and $g$ is any function. Estimators in this class are unbiased. In addition, we have 
\begin{align}\label{eq:vardiff}
         \var_{\Dcal}[\E_n\bracks{\phi(s,a,r;g)} ] \leq   \var_{\Dcal'}[\E_n\bracks{\phi(s,a,r;g)} ].
\end{align}
And, equality holds for all $\epol,\pi_*$ satisfying \cref{asm:weak} if and only if $g=q$.
\end{theorem}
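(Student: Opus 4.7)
The claim splits into three pieces: (a) unbiasedness of $\E_n[\phi(\cdot;g)]$ under the stratified and iid regimes; (b) the variance inequality \eqref{eq:vardiff}; (c) the equality characterization. For (a), I would simply observe that the choice $h(k,s,a)=1/\pi_{*}(a\mid s)$ and control variate $g$ places $\phi$ inside the class $\{\Gamma(\Dcal;h,g)\}$ of \cref{sec: infeasible unbiased} (with $h$ satisfying \eqref{eq:con} under \cref{asm:weak}), so unbiasedness in the stratified setting is immediate; in the iid setting it is the standard doubly robust identity $\E_{\pi_*}[\phi(s,a,r;g)]=J$ verified by expanding the two summands.

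For (b), write both variances explicitly: since $\Dcal$ is stratified and $\Dcal'$ is iid from $\pi_{*}$,
\begin{equation*}
\var_{\Dcal}[\E_n\phi]=\tfrac{1}{n^2}\sum_{k=1}^{K}n_k\var_{\pi_k}[\phi],\qquad \var_{\Dcal'}[\E_n\phi]=\tfrac{1}{n}\var_{\pi_*}[\phi].
\end{equation*}
Introducing a latent index $K$ with $\P(K=k)=\rho_k$ and the conditional law $(s,a,r)\mid K=k\sim p_S\pi_k p_{R\mid S,A}$, the marginal coincides with the iid draw from $\pi_{*}$, so the law of total variance yields $\var_{\pi_*}[\phi]=\sum_k \rho_k \var_{\pi_k}[\phi]+\var_K\!\bigl(\E_{\pi_K}[\phi]\bigr)$. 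Dividing by $n$ gives \eqref{eq:vardiff} and pinpoints the equality condition: equality holds exactly when $k\mapsto\E_{\pi_k}[\phi]$ is constant on $\{k:\rho_k>0\}$.

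For (c), a direct computation gives
\begin{equation*}
\E_{\pi_k}[\phi(s,a,r;g)]=\E_{p_S}\!\Bigl[\textstyle\int \pi_k(a\mid s)\tfrac{\epol(a\mid s)}{\pi_{*}(a\mid s)}(q(s,a)-g(s,a))\,\rd a\Bigr]+\E_{p_S}[g(s,\epol)].
\end{equation*}
The second summand is $k$-independent, so the question reduces to whether the first summand is $k$-independent. The ``if'' direction is immediate: $g=q$ annihilates the first summand for every $k$. For the ``only if'' direction I would argue by contrapositive: suppose $g(s^{*},a^{*})\neq q(s^{*},a^{*})$ at some point in the support of $p_S\pi_{*}$, with $q-g>0$ there without loss of generality. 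Take $K=2$, $\rho_1=\rho_2=1/2$, choose $\epol$ concentrated near $a^{*}$ given $s^{*}$ (a Dirac in the discrete case, a small bump in the continuous case), and pick two policies $\pi_1,\pi_2$ whose mixture equals $\pi_{*}$ but which differ locally at $(s^{*},a^{*})$ — concretely, start from any $\pi_{*}$ satisfying \cref{asm:weak} and set $\pi_1=\pi_{*}+\epsilon\psi$, $\pi_2=\pi_{*}-\epsilon\psi$ for a small perturbation $\psi$ supported near $a^{*}$ with $\int\psi\,\rd a=0$. The first summand then takes the two values $\E_{p_S}[\int\pi_{*}\frac{\epol}{\pi_{*}}(q-g)]\pm\epsilon\,\E_{p_S}[\int\psi\frac{\epol}{\pi_{*}}(q-g)]$, and the perturbation integral is nonzero by construction.

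\textbf{Anticipated obstacle.} Everything except the constructive ``only if'' step is routine. The delicate part is exhibiting valid $\pi_1,\pi_2$ in general (discrete or continuous) $\Acal,\Scal$ that simultaneously (i) average to a prescribed $\pi_{*}$, (ii) satisfy \cref{asm:weak}, and (iii) produce a nonzero discrepancy. Restricting the perturbation $\psi$ to a small neighborhood of $(s^{*},a^{*})$ where $q-g$ does not change sign, and scaling $\epsilon$ down so $\pi_{*}\pm\epsilon\psi$ stay nonnegative and normalized, handles all three conditions uniformly.
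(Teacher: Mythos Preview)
Your parts (a), (b), and the ``if'' half of (c) line up with the paper's proof: unbiasedness is immediate, and the variance inequality comes from the law of total variance. The paper conditions the iid sample on the multinomial counts $(N_1,\dots,N_K)$ rather than introducing a latent logger index, but the two decompositions produce the identical residual term $\tfrac{1}{n}\var_K\bigl(\E_{\pi_K}[\phi]\bigr)$, so there is no substantive difference.

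For the ``only if'' direction the routes diverge. The paper does not build a perturbation; it argues directly that if the residual variance vanishes for \emph{every} admissible $\epol,\pi_*$ and every decomposition $\{\pi_k,\rho_k\}$, then (after averaging the stratum conditions) $\E_{\epol}[q-g]=0$ for all $\epol$, which forces $g=q$. This is shorter and sidesteps any construction, though the paper's write-up is rather elliptical at the step from ``all $c_k$ equal'' to ``all $c_k=0$.'' Your explicit perturbation is a legitimate alternative and has the virtue of producing a concrete counterexample instance.

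One point in your construction needs tightening. You take $\psi$ supported in a small neighborhood of $a^{*}$ with $\int\psi\,\rd a=0$ and claim the perturbation integral $\E_{p_S}\bigl[\int\psi\,\tfrac{\epol}{\pi_*}(q-g)\,\rd a\bigr]$ is nonzero ``by construction.'' But a zero-mean $\psi$ on a neighborhood where $\tfrac{\epol}{\pi_*}(q-g)$ is approximately constant yields an integral close to zero --- the sign control on $q-g$ you invoke in the obstacle paragraph does not prevent this. The clean fix is to place the negative lobe of $\psi$ at some $a'\neq a^{*}$ with $\pi_*(a'\mid s^{*})>0$, and then let $\epol(\cdot\mid s^{*})$ be (or approximate) a Dirac at $a^{*}$; the integral then picks out only the positive lobe and equals a nonzero multiple of $(q-g)(s^{*},a^{*})$. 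With that adjustment your argument goes through.
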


We have already seen the ``if'' part of the last statement.
The intuition for the ``only if'' part is that the difference in \cref{eq:vardiff}, $\var[\E[\E_n\bracks{\phi(s,a,r;g)}\mid \{n_k\}_{k=1}^K]]$, is zero exactly when $\E_{\pi_k}[\phi(s,a,r;g)]=J\,\forall k\in [K]$, which can only happen for any $\pi_*$ if $g=q$ so we get unbiasedness due to double robustness even for a ``wrong'' importance weight. This conveys two things: stratification is still beneficial in reducing variance in finite samples since we never know the true $q$ exactly, while at the same time the efficiency bound is the same in the two settings so this reduction washes out asymptotically when we use an efficient estimator, but \emph{only} if we use an efficient estimator. 
% Note that (b) does not contradict (a) since (b) is a statement about the asymptotic first order term and (a) is not an asymptotic statement.  

%\begin{remark} Advantage of stratified sampling over i.i.d sampling is known in literature on survey sampling and randomized control trial \citep{AtheyS2017TEoR,FullerWayneA2009SS}. They consider stratification of a covariate space. Instead, we consider stratification of a policy space. 
%\end{remark}

\section{Stratified More Robust Doubly Robust Estimation}\label{sec:more}

We have so far considered a meta-algorithm for efficient estimation given a $q$-estimator, which can be constructed by applying any type of off-the-shelf nonparametric or machine learning regression method to the whole dataset $\Dcal$. 
However, if $\hat q$ is misspecified and inconsistent, the theoretical guarantees such as efficiency fail to hold. This a serious concern in practice as we always risk some level of model misspecification. We therefore next consider a more tailored loss function for $q$-estimation that can still provide intrinsic efficiency guarantees regardless of specification.

% A potential problem of this approach is the performance of the estimator $\hat J_{\DR}$ can be terrible in the cases where the models are misspecified. This is a serious problem since we always have the risk of model misspecification. Here, we consider more tailored method to the OPE for multiple loggers in order to estimate $q(s,a)$ assuming $\pi_{*}$ is known.  

Specifically, following \citet{rubin08,CaoWeihua2009Iear,Chow2018}, we consider choosing the control variate $g$ in a hypothesis class $\Qcal$ to minimize the variance of $\Gamma(\Dcal;1/\pi_*,g)=\E_n[\phi(s,a,r;g)]$.
Specifically, we are interested in: 
\begin{align*}
 \tilde{q}:=\argmin_{g\in \Qcal}V(g),\,V&(g)=n\var[\E_n[\phi(s,a,r;g)]]\\
& =\sum_{k=1}^K\rho_k\var_{\pi_k}[\phi(s,a,r;g)].
\end{align*}
% where $\Qcal$ is a function class for $q(s,a)$. When $\Qcal$ includes the true $q(s,a)$, $\check{q}=q$.
Of course, per \cref{thm:concrete}, if $q\in\Qcal$ then $\tilde{q}=q$, but the concern is that $q\notin\Qcal$. In this case, $\tilde q$ will ensure best-in-class variance and will generally perform better than the best-in-class regression function $\bar q=\argmin_{g\in \Qcal}\E_{\pi_*}[(r-g(s,a))^2]$, which empirical risk minimization would estimate.
% Even if the class $\Qcal$ is misspecified, the selected $q$-functions would reasonably behave well since the target function is the variance (MSE). When the target is $\E_n[\braces{R-g}^2]$, we cannot enjoy this property.   

In practice, we need to estimate $\var_{\pi_k}[\phi(s,a,r;g)]$. 
A feasible estimator is 
\begin{align*}
 &  \check q:=\argmin_{g\in \Qcal}
 % \hat V(g),\\
% & \hat V(g)=
\sum_{k=1}^K\rho_k\var_{n_k}[\phi(s,a,r;g)]. 
\end{align*}
% Note that this generally a non-convex optimization problem. We can approximately solve this optimization by software with automatic differentiation as seen in \cref{sec:experiment}. 
Then, we define the \emph{Stratified More Robust Doubly Robust} estimator as $\hat J_{\mo}:=\hat J_{\bi}(1/\pi_*,\check q)$. 

\begin{theorem}\label{thm:more_doubly}
Suppose $1/\pi_{*},\sup_{g\in\Qcal}\abs{g(s,a)}$ are uniformly bounded by some constants and \cref{asm:weak} holds. Assume a condition for the uniform covering number: $\sup_{U}\log N(\epsilon,\Qcal,L_2(U))\lesssim (1/\epsilon)$, where $N(\cdot)$ is a covering number and the supremum is taken over all probability measures. Then, $\sqrt{n'}(\hat J_{\mo}-J)\stackrel{d}{\rightarrow}\mathcal{N}\left(0,\min_{g\in \Qcal}V(g)\right)$. 
\end{theorem}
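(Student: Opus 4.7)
The plan is to reduce the theorem to an application of \Cref{thm:property_feasible} with the nuisances $h=\hat h^{(z)}=1/\pi_*$ (known exactly, so $\|\hat h^{(z)}-h\|_2=0$), $g=\tilde q$, and $\hat g^{(z)}=\check q^{(z)}$ constructed exactly as in the definition of $\check q$ but using only the auxiliary sample $\cu_z$. The constraint \eqref{eq:con} is satisfied by $h=1/\pi_*$ because $\sum_k \rho_k\pi_k=\pi_*$, and the uniform-boundedness requirement follows from the hypotheses on $1/\pi_*$ and $\sup_{g\in\Qcal}|g|$. The substantive step that remains is to show $\|\check q^{(z)}-\tilde q\|_2=\op(1)$.

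For this consistency step I would invoke uniform empirical-process arguments. Since $\phi(s,a,r;g)$ is linear in $g$ with coefficients uniformly bounded under the assumptions, the function classes $\Fcal_1=\{\phi(\cdot;g):g\in\Qcal\}$ and $\Fcal_2=\{\phi(\cdot;g)^2:g\in\Qcal\}$ inherit the uniform covering-number bound $\sup_U \log N(\epsilon,\Fcal_i,L_2(U))\lesssim 1/\epsilon$, up to bound-dependent constants. A standard maximal inequality (e.g., van der Vaart--Wellner Theorem 2.14.9) makes both classes uniformly Glivenko--Cantelli, yielding
\[
\sup_{g\in\Qcal}\Bigl|\var_{\cu_z\cap k}[\phi(\cdot;g)]-\var_{\pi_k}[\phi(\cdot;g)]\Bigr|=\op(1)
\]
for each fold $z$ and stratum $k$, where $\var_{\cu_z\cap k}$ denotes the empirical variance over the observations in $\cu_z$ belonging to stratum $k$. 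Aggregating with weights $\rho_k$ gives uniform convergence of the empirical objective defining $\check q^{(z)}$ to $V(g)$, and standard argmin consistency (van der Vaart Theorem 5.7) then delivers $\|\check q^{(z)}-\tilde q\|_2=\op(1)$ whenever $\tilde q$ is a well-separated minimizer; in the absence of uniqueness, a subsequence argument suffices, since every limit point achieves $\min_{g\in\Qcal}V(g)$.

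With these inputs \Cref{thm:property_feasible} yields
\[
\sqrt{n'}(\hat J_{\mo}-J)\stackrel{d}{\rightarrow}\mathcal{N}\bigl(0,\,n\var[\Gamma(\Dcal;1/\pi_*,\tilde q)]\bigr),
\]
and a direct calculation using stratum-wise independence evaluates $n\var[\Gamma(\Dcal;1/\pi_*,\tilde q)]=\sum_k\rho_k\var_{\pi_k}[\phi(\cdot;\tilde q)]=V(\tilde q)=\min_{g\in\Qcal}V(g)$, which is the claim.

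The principal obstacle is the empirical-process step: translating the VC-type entropy bound on $\Qcal$ into uniform convergence of the \emph{variance} functional, which is quadratic rather than linear in $g$. Decomposing $\var_{n_k}[\phi(\cdot;g)]=\E_{n_k}[\phi(\cdot;g)^2]-(\E_{n_k}[\phi(\cdot;g)])^2$ reduces this to uniform LLNs on $\Fcal_1$ and $\Fcal_2$ combined via continuous mapping on a compact range. A secondary technicality is the possible non-uniqueness of $\tilde q$, which is innocuous because only the value $V(\tilde q)$ enters the asymptotic variance.
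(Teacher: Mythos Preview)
Your proposal is correct and follows essentially the same route as the paper: both arguments hinge on showing that the fold-specific minimizer $\check q^{(z)}$ is $L_2$-consistent for the population minimizer $\tilde q$ and then invoking the plug-in asymptotic linearity machinery (the paper writes out the three-term decomposition directly, which is exactly the content of \cref{thm:property_feasible} you cite). The paper's proof is terser---it simply asserts ``from the assumption, we have $\|\hat q^{(1)}_{\mo}-\tilde q\|=\op(1)$'' and moves on---whereas you supply the uniform Glivenko--Cantelli and argmin-consistency details that justify this step from the covering-number hypothesis, and you also flag the non-uniqueness caveat that the paper omits.
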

% Note that the assumption $\|\hat q_{\mo}-\tilde q\|_2=\op(1)$ is essentially satisfied by identifiability and assuming that $\phi$ belongs to a Glivenko–Cantelli class, i.e., the function class where uniform law of large number is satisfied \citep[Chapter 19]{VaartA.W.vander1998As}. 
% \begin{remark}

Notice that if we had ignored the stratification and used the standard MRDR estimator \citep{CaoWeihua2009Iear}, we would end up minimizing the \emph{wrong} objective:
\begin{align*}
\check q_{\mmo}:=\argmin_{g\in \Qcal}\var_n[\phi(s,a,r;g)],
\end{align*}
which targets the variance under iid sampling. In particular, we will \emph{not} obtain the best-in-class variance. This is again a consequence of \cref{thm:stratified}: when the control variates is not \emph{exactly} $q$, the variances under stratified and iid setting are \emph{different}.
% $1/n\{\E_{\pi_{*}}[\phi(s,a,r;\pi_{*},g)^2]-   J^2\}$. 
% \end{remark}

\section{Experimental Results} \label{sec:experiment}
%\vspace{-3mm}
We next empirically compare our methods with the existing estimators for OPE with multiple loggers. %(The code for reproducing the experimental results is provided in the supplementary materials.)

\paragraph{Setup.}
Following previous work on OPE~\citep{Chow2018,wang2017optimal,Kallus2019IntrinsicallyES} we evaluate our estimators using multiclass classification datasets from the UCI repository. 
Here we consider the optdigits and pendigits datasets (see \cref{tab:dataset} in Appendix~\ref{app:experiment}.).
We transform each classification dataset into a contextual bandit dataset by treating the labels as actions and recording reward of 1 if the correct label is chosen by a classifier, and 0 otherwise.
This lets us evaluate and compare several different estimators with ground-truth policy value of an evaluation policy. 
\begin{figure*}[t!]\begin{minipage}[b]{.5\textwidth}
    \centering
    \includegraphics[clip,width=\textwidth]{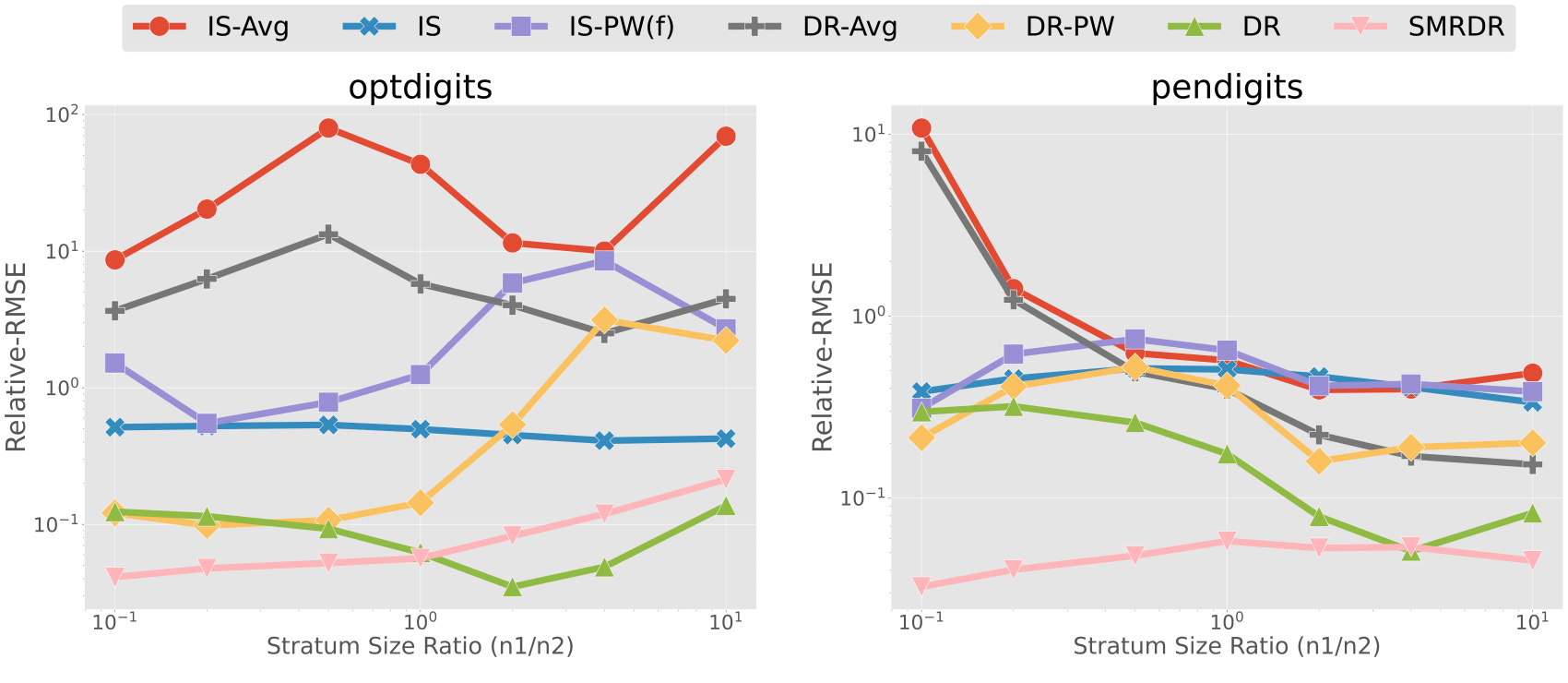}
    \caption{Comparing proposed estimators to some variants of IS type estimators.}
    \label{fig:dr_vs_is}
    \end{minipage}\begin{minipage}[b]{.5\textwidth}
% \end{figure*}
% \begin{figure*}[!h]
    % \centering
    \includegraphics[clip,width=\textwidth]{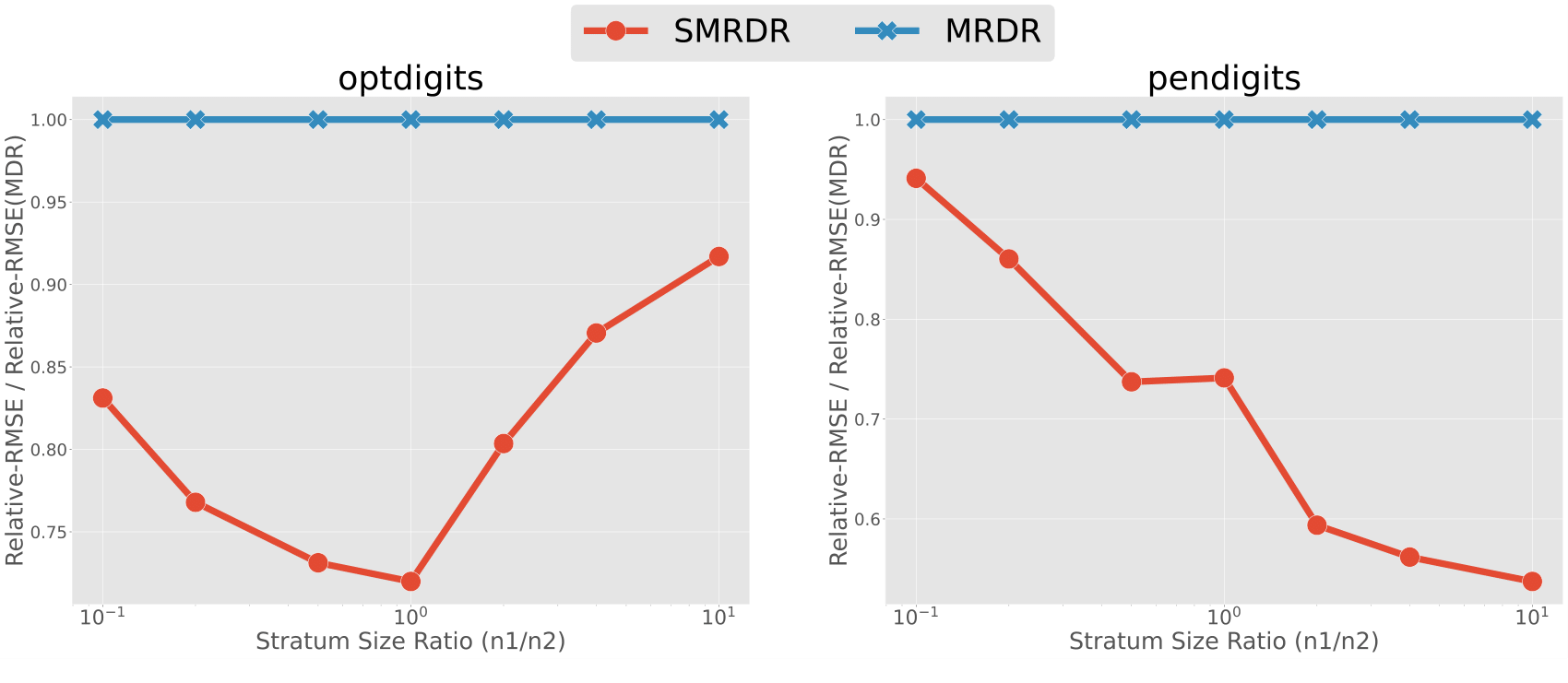}
    \caption{Comparing SMRDR (leveraging the stratification) and MRDR (ignoring the stratification).}
    \label{fig:smdr_vs_mdr}
\end{minipage}\end{figure*}

We split the original data into training (30\%) and evaluation (70\%) sets.
We first obtain a deterministic policy $\pi_{\dett}$ by training a logistic regression model on the training set.
Then, following Table~\ref{tab:policy_params}, we construct evaluation and logging policies as mixtures of one of the deterministic policy and the uniform random policy $\pi_u$.
We vary $\rho_1/(1-\rho_1)=n_1/n_2$ in $\{0.1, 0.25, 0.5, 1, 2, 4, 10\}$.  Since $\pi_1$ is closer to $\epol$ than $\pi_2$, larger $\rho_1/\rho_2$ corresponds to an easier problem.
% A smaller value of $\mathrm{ratio}$ leads to a larger data size of $\Dcal_2$ that is generated by a logging policy dissimilar to the evaluation policy.
We then split the evaluation dataset into two according to proportions $\rho_1,\rho_2$ and in each dataset we use the corresponding policy to make decisions and generate reward observations (the true label is then omitted). Using the resulting dataset we consider various estimators $\hat J$ for $J$.
% Using the evaluation set, we consider an estimator $\hat{J}$ estimates the policy value of the evaluation policy $J$.
{We describe additional details of the experimental setup in Appendix~\ref{app:experiment}.}

We repeat the process $M=200$ times with different random seeds and report the \textit{relative root MSE}:
\begin{align*}
    \text{Relative-RMSE} \ (\hat{J}) = \frac{1}{J\sqrt M}\sqrt{ \sum_{m=1}^M \left({J - \hat{J}_m} \right)^2 }
\end{align*}
where $\hat{J}_m$ is an estimated policy value with $m$-th data.

\begin{table}[t!]
  \centering
  \caption{The evaluation and logging policies used in the experiments.}
  \scalebox{0.95}{\begin{tabular}{ll} 
  \toprule
     % Policies & Definitions \\ \midrule \midrule
     evaluation policy ($\pi_e$)& $1.00 \pi_{\dett} + 0.00\pi_u$  \\
     logging policy 1 ($\pi_{1}$) & $0.95 \pi_{\dett} + 0.05\pi_u$\\
     logging policy 2 ($\pi_{2}$) & $0.05 \pi_{\dett} + 0.95\pi_u$  \\ 
  \bottomrule
  \end{tabular}}
  \label{tab:policy_params}
\end{table}

\paragraph{Estimators considered.}
% We compare our proposed DR and SMRDR to the following baselines: 
We consider the following estimators:
\begin{itemize}
    [leftmargin=*,,labelindent=0in,itemsep=1ex,partopsep=0ex,parsep=0ex,topsep=0ex]
    \item Our proposed estimators, {\blockedit $J_\drpihat,\hat J_\mo$.}
    \item Standard estimators in the iid setting, $\hat J_\bal,\hat J_\mmo$.
    \item (Feasible versions of) the two estimators proposed by~\citep{AgarwalAman2017EEUL}, $\hat J_\text{IS-Avg},\hat J_{\text{IS-PW}}$.
    \item The natural doubly robust extension of these as discussed in Section~\ref{sec:feasible}, $\hat J_{\dr\text{-Avg}},\hat J_{\dr\text{-PW}}$.
\end{itemize}
% In addition, we conduct the performance comparison of SMRDR (leveraging the stratification in fitting the q-function) and MRDR (ignoring the stratification in fitting the q-function) discussed in Section~\ref{sec:more} to empirically ensure the benefit of leveraging the stratification.

{\blockedit We suppose we do not know logging policies.}
For all estimators, we estimate the logging policies using logistic regression on the evaluation set with 2-fold {cross-fitting} as in Algorithm~\ref{alg:feasible}.
{\blockedit Most of the estimators above are introduced with known logging densities in the previous sections. Here, we just replace each $\pi_k$ with their estimates.}
For DR, DR-Avg, and DR-PW, we construct $q$-estimates using logistic regression again using 2-fold {cross-fitting} as in Algorithm~\ref{alg:feasible}.
For SMRDR and MRDR, we optimize their respective estimated variance objectives over the class of logistic regression $\Qcal$.
We use \textit{tensorflow} and the same hyperparameter setting for DR, DR-Avg, DR-PW, SMRDR, and MRDR to ensure a fair comparison.

\paragraph{Results.}
The resulting Relative-RMSEs on optdigits and pendigits datasets with varying values of $n_1/n_2$ are given in \cref{fig:dr_vs_is,fig:smdr_vs_mdr}.
Several findings emerge from the results.
First, we see the dilemma pointed out by \citet{AgarwalAman2017EEUL}:
Specifically, the ordering of the variances of IS-Avg and IS-PW depend on the instance. More generally, there is no clear ordering between IS, IS-Avg, IS-PW, DR-Avg, and DR-PW. 
 % among baseline estimators, including IS, IS-Avg, IS-PW, DR-Avg, and DR-PW, changes depending on the problem setting.
For example, on the optdigits data, DR-PW performs best among baselines with small values of $n_1/n_2$, while IS performs better with large values of $n_1/n_2$. This behavior is predicted by our analysis showing none of these estimators are optimal.

Second, our proposed estimators successfully resolve the dilemma and are superior to the above suboptimal estimators.
Moreover, we see SMRDR generally performs better than DR, especially when overlap is weak ($n_1/n_2$ is small), which exacerbates issues of misspecification. It does appear that DR outperforms SMRDR in the specific example of optdigits when overlap is strong ($n_1/n_2$ is large), which might be attributed to bad optimization of the non-convex objective compared to a reasonably good-enough plug-in $q$-estimate.
% In addition, SMRDR performs better than DR when the evaluation and logging policies have weak overlap (when $\mathrm{ratio}$ is small).
% SMRDR still does worse than DR in some strong overlap cases, which can be attributed to optimizing the non-convex objective.

Finally, we directly compare the performances of SMRDR and MRDR in Figure~\ref{fig:smdr_vs_mdr}.
We observe that SMRDR significantly outperforms MRDR in the stratified setting, leading to up to 45\% reduction in error.
This strongly highlights that even though the asymptotic efficiency bounds are the same in the stratified and iid settings, leveraging the stratification structure can still offer significant gains in the multiple logger setting.

% which suggests that leveraging the stratification structure is essential in OPE from multiple loggers.

\section{Conclusions and Future Directions}

We studied OPE in the multiple logger setting, framing it as a form of stratified sampling. We then studied optimality in several classes of estimators and showed that, at least asymptotically, the minimum MSE is the same among all of them. We proposed feasible estimators that can achieve this minimum, whether logging policies are known or not. This gives a concrete and positive resolution to the multiple logger dilemma posed in \citet{AgarwalAman2017EEUL}. We further discuss how to take stratification into account when choosing best-in-class control variates.
% by proposing a SMRDR estimator. 

There are a number of avenues for future work.
One is to consider optimality in the case of adaptive data collection from multiple loggers, where each logger may depend on historical data so far \citep{luedtke2016,HadadVitor2019CIfP,ZhangKellyW2020IfBB,kato2020adaptive}. Another is to study off-policy optimization in the stratified setting, whether by policy search \citep{ZhangBaqun2013Reoo,kallus2018balanced,kallus2017recursive,HeLi2019OLfM} or by off-policy gradient ascent \citep{EEOPG2020}.
%When the horizon is fixed at $K$, the whole data is constructed $K$ behavior policies as in our work. The difficulty arises since each policy is correlated unlike our setting. We hope our result can be extended to this setting under some conditions to control the dependence.  

\clearpage 
\bibliographystyle{chicago}
\bibliography{rc}

\newpage 
\appendix 
\onecolumn 

\section{Notation}

We first summarize the notation we use in \cref{tab:pre}.   

\begin{table}[h!]\label{tab:pre}
    \centering 
      \caption{Notation} \vspace{0.3cm}
    \begin{tabular}{l|l}
    $n, n_k,\,(1\leq k\leq K),K$   &  Whole sample size, sample  size, stratification size \\
     $n', n'_k,\,(1\leq k\leq K),K$   &  Sample size considering asymptotics\\
    $\rho_k$ & $n_k/n$\\
        $J$ & Policy value $\E_{\epol}[r]$ \\
    $p_{S}(s),p_{R\mid S,A}(r\mid s,a)$ & State, reward distributions \\
    $s,a,r$ & State, action, reward \\ 
    $[K]$ & Partition $[1,\cdots,K]$ \\ 
    $[Z]$ & Partition for cross-fitting \\ 
    %$n'$  & Sample size considering the asymptotics  \\   
    $\Dcal=\{\Dcal_1,\cdots,\Dcal_K\},\Dcal_k=\{S_{kj},A_{kj},R_{kj}\}_{j=1}^{K,n_k}$ &  Observed whole data, data in the size $k$ \\
    $\pi_k,\pi_*,\epol$ & $k$-th behavior policy, mixture of k policies, evaluation policy \\ 
    $\E_{\pi}[f(k,s,a,r)],\var_{\pi}[f(k,s,a,r)]$ & Expectation and variance regarding $\pi$  \\
    $\hat J_{\ips},\hat J_{\bal},\hat J_{\wei}$  & Na\"ive IS, IS, precision-weighted IS estimator \\ 
    $\hat J_{\dr},\hat J_{\mo}$  & Doubly robust, Stratified more robust doubly robust estimator \\
    $\{\Upsilon(\Dcal;\lambda)\},\{\Gamma(\Dcal;h,g)\}$ &  Set of some estimators \\ 
    $\hat J_{\bi} $ & Feasible cross-fold version estimators \\
    $\E_n[f(k,s,a,r)],\E_{n_k}[f(k,s,a,r)]$ & Empirical approximation \\
    $q(s,a),v(s)$ & $\E[r|s,a],q(s,\epol)$\\
    $\Ncal(0,B)$ & Normal distribution with mean $0$ and variance $B$\\
    $\|f\|_2$ & $\{\E_{\pi_{*}}[f^2(k,s,a,r)]\}^{1/2}$ \\
    $o$ & $\{s_{jk},a_{jk},r_{jk}\}$ \\
    %$O$ & $\{S_{jk},A_{jk},R_{jk}\}$ \\
    $\tilde \phi(o)$ & EIF: $ \frac{1}{n}\sum_{i=1}^n\braces{\frac{\epol(a_i\mid s_i)}{\pi_{*}(a_i\mid s_i)}\{r_i-q(s_i,a_i)\}+q(s_i,\epol)-J}. $\\
    $\phi(s,a,r;g)$ & $\epol/\pi\{r-g(s,a)\}+g(s,\epol)$ \\  
    $\Qcal$ & Function class for $g$ in $\{\phi(s,a,r;g)\}$ in SMRDR \\
    $\Lcal_z,\cu_z$ & Set induced by sample splitting \\
    $\sigma^2_r(s,a)$ & Variance: $\var[r \mid s,a]$ \\
    \end{tabular}
\end{table}

\newpage
\allowdisplaybreaks

\section{Off-policy Evaluation in Reinforcement Learning with Multiple Loggers }\label{sec:extensions}

We discuss an efficiency bound and a method to achieve the efficiency bound when the data is generated by an MDP and multiple loggers.  

Consider that we have a data $\Dcal=\{D_1,\cdots,D_K\}$:
\begin{align*}
    \Dcal_k=\{S_{kj},A_{kj},R_{kj},S'_{kj}\}_{j=1}^{n_k}\overset{\text{i.i.d}}\sim p_{k}(s)\pi_k(a\mid s)p_{R\mid S,A}(r\mid s,a)p_{S'|S,A}(s'\mid s,a). 
\end{align*}
When $K=1$, this is a standard DGP assumption in an RL setting. Here, there are $K$-multiple loggers. State  distribution $p_k(s)$ for each logger can be different as well as each behavior policy. We sometimes reindex the whole data as $\Dcal=\{S_i,A_i,R_i,S'_i\}_{i=1}^n$. In this section, given some function $f(s,a,r,s')$, we define
\begin{align*}
    \E_{p_{k}(s)\times \pi_k}[f(s,a,r,s')]:= \int f(s,a,r,s')p_{k}(s)\pi_k(a\mid s) p_{R\mid S,A}(r\mid s,a)p_{S'|S,A}(s'\mid s,a)\mathrm{d}(s,a,r,s'). 
\end{align*}
Our target is the policy value $J(\gamma)$ defined by the same MDP and an evaluation policy $\epol$ with a discount factor $\gamma$ as follows: 
\begin{align*}
    J(\gamma)=(1-\gamma)\lim_{T\to \infty}\E[\sum_{t=1}^T\gamma^{t-1}r_t\mid s_1\sim p_{e}(s),a_1\sim \epol(s_1),a_2\sim \epol(s_2),\cdots], 
\end{align*}
where $p_e(s)$ is an initial state distribution. Here, we have an important observation
\begin{align*}
    J(\gamma)=\E_{p^{(\infty)}_{e,\gamma}\times \epol }[r], 
\end{align*}
where $p^{(\infty)}_{e,\gamma}(s)$ is an average visitation distribution with a discount factor $\gamma$ and an initial distribution $p_e(s)$. Based on \citet{Liu2018} when $K=1$, this is estimated by 
\begin{align*}
    \frac{1}{n}\sum_{i=1}^n \hat w(S_i)\frac{\epol(A_i \mid S_i)}{\pi_1(A_i \mid S_i)}R_i
\end{align*}
where $\hat w(s)$ is some estimator for $w(s):=p^{(\infty)}_{e,\gamma}/p_{1}(s)$. In this $K=1$ setting, \citet{KallusNathan2019EBtC} derived the efficiency bound and a way to achieve the efficiency bound. 

Here, we give the efficiency bound with a multiple logger case. This is 
\begin{align*}
    \E_{\pi_{*}(s,a)}\left[\left\{\frac{p_{\epol,\gamma}(s)\epol(a\mid s)}{\pi_{*}(s,a)}\right\}^2 \var[r+\gamma q(s',\epol)\mid s,a]\right], 
\end{align*}
where $\pi_{*}(s,a)=\sum_{k=1}^K (n_k/n)\pi_k(a\mid s)p_{k}(s)$. When $K=1$, this result is reduced to \citet{KallusNathan2019EBtC}. Though we do not give a formal derivation, this is derived in the same spirit of \cref{thm:efficient}. 

Next, we give an efficient estimator. Before that, we define $w(s,a):=\left\{\frac{p^{(\infty)}_{\epol,\gamma}(s)\epol(a\mid s)}{\pi_{*}(s,a)}\right\}$, $q(s,a):=\E_{\epol}[\sum_{t=1}^{\infty} \gamma^{t-1}r_t\mid s_1=s,a_1=a]$. The efficient estimator is
\begin{align*}
      \frac{1}{n}\sum_{i=1}^n \hat w(S_i,A_i)\{R_i+\gamma \hat q(S'_{i},\epol) -\hat q(S_i,A_i)\}+\E_{p_e(s)}[\hat q(s,\epol)]  
\end{align*}
given some estimators $\hat w(s,a),\hat q(s,a)$. Q-functions are estimated by any off-the-shelf methods such as fitted Q-iteration \citep{antos2008learning}. We can estimate the ratio $w(s,a)$ using some methods agnostic to $p_{k}(s)$ and $\pi_k(a\mid s)$ following \citet{UeharaMasatoshi2019MWaQ}. More specifically, for some test function $f(s,a)$, this is estimated by solving 
\begin{align*}
    0=\frac{1}{n}\sum_{i=1}^n \braces{\gamma w(S_i,A_i;\beta)f(S'_i,\epol)-w(S_i,A_i;\beta)f(S_i,A_i)}+(1-\gamma)\E_{p_e(s)}[f(s,\epol)],
\end{align*}
w.r.t. $\beta$, where $w(s,a;\beta)$ is some model for $w(s,a)$. Here, $\pi_*$ is not included in the estimating equation. Note that this is different form the ones in \citet{Liu2018,KallusNathan2019EBtC}, which are not agnostic to $\pi_k(a\mid s)$. 
%In this case, even if we simply replace $\pi_k$ with $\pi_{*}$, this does not give a right estimator for $w(s,a)$. 

Finally, note that our result is more sophisticated comparing to \citet{ChenXinyun2019IOPE} in the sense that (1) they consider a special case when $p_{k}(s)$ is a stationary distribution; however, our result is applied to any $p_{k}(s)$, (2) their estimator does not use a control variate; however, our estimator and optimality result take the control variate term $q(s,a)$ into account.

\section{Efficiency Bound} \label{sec:efficiency_detail}

A central question is what is the smallest-possible error we can hope to achieve in estimating $J$. In parametric models, the \Cramer-Rao lower bound gives the lower bound of the variance among unbiased estimators. We have a stronger result that the \Cramer-Rao lower bound lower bounds the asymptotic MSE of all regular estimators \citep[Chapter 7]{VaartA.W.vander1998As}. Besides, this \Cramer-Rao lower bound is extended from parametric models to non or semiparametric models, which is called an efficiency bound \citep[Chaptre 25]{VaartA.W.vander1998As}. Again, this efficiency bound lower bounds the asymptotic MSE of all regular estimators. Standard semiparametric theory is established under the i.i.d sampling \citep{bickel98,TsiatisAnastasiosA2006STaM}. Since our data mechanism is not i.i.d (not identical though independent), it looks we cannot apply this theory. 

Here, the trick to apply this theory to our setting is regarding a set of $n$ samples as one observation. In other words, we consider that we have $m$ copies of this single observation consisting of $n$ samples, where $n':=nm\to\infty$ with fixed $n$ as $m\to\infty$. We consider a nonparametric model $\cm$:
\begin{align*}
    p(o)=\prod_{k=1}^{K}\prod_{j=1}^{n_k}p_{S}(s_{kj})\pi_k(a_{kj}\mid s_{kj})p_{R\mid S,A}(r_{kj}\mid s_{kj},a_{kj}),
\end{align*}
where each density is free except for the weak overlap constraint \footnote{Without this overlap, the estimand $J$ is not identifiable.}. 
We also consider another nonparametric model $\cm_b$:
\begin{align*}
    p(o)=\prod_{k=1}^{K}\prod_{j=1}^{n_k}p_{S}(s_{kj})\pi_k(a_{kj}\mid s_{kj})p_{R\mid S,A}(r_{kj}\mid s_{kj},a_{kj}),
\end{align*}
where $\pi_k$ is fixed at the true value and other densities (state and reward densitiese) are free except for the weak overlap constraint. Then, the efficiency bound of each model lower bounds the limit of the MSE for any regular estimator $\hat J$ w.r.t each model. 

To check this, we informally state this key property of the efficient influence function (EIF) in our setting.  
\begin{theorem}{Theorem~\ref{thm:van}}
The EIF $\tilde \phi(o)$ is the gradient of $J$ w.r.t the model $\cm$, which has the smallest $L_2$-norm and it satisfies that for any regular estimator $\hat J$ of J w.r.t the model $\cm$, $\mathrm{AMSE}[\hat J]\geq \var[\phi(\Dcal)]$, where $\mathrm{AMSE}[\hat J]$ is the second moment of the limiting distribution of $\sqrt{n'}(\hat J-J)$.  
\end{theorem}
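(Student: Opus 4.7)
The statement is a restatement of the classical convolution/efficiency bound theorem (van der Vaart 1998, Theorem 25.20) applied to our particular model $\cm$, where each ``observation'' is the whole pooled dataset $\Dcal$ of size $n$ and we observe $m$ iid copies of it with $m \to \infty$. So the plan is to verify that the standard semiparametric efficiency machinery applies in this blocked-iid setup and to identify the object that must play the role of the canonical gradient.

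First, I would set up regular one-dimensional parametric submodels through the truth. A submodel is parameterized by $t \in (-\epsilon,\epsilon)$ via perturbations $p_{S,t}(s) = p_S(s)(1 + t a(s))$ and $p_{R\mid S,A,t}(r\mid s,a) = p_{R\mid S,A}(r\mid s,a)(1 + t b(r,s,a))$ with $\E_{p_S}[a(s)] = 0$, $\E_{p_{R\mid S,A}}[b(r,s,a)\mid s,a]=0$, and $a,b$ bounded (the $\pi_k$ are held fixed since they are known in model $\cm$, or, for \cref{thm:b_known}, also perturbed by $c_k(a\mid s)$ with conditional mean zero). Since the joint density of one copy of $\Dcal$ is a product over $k,j$, the score at $t=0$ for a single copy is simply $S(\Dcal) = \sum_{k,j}\{a(S_{kj}) + b(R_{kj},S_{kj},A_{kj})\}$, and standard regularity (bounded scores, differentiability in quadratic mean) gives the LAN property for the product experiment over $m$ copies. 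This puts us in the setting where van der Vaart's Theorem 25.20 applies verbatim.

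Second, I would compute the pathwise derivative of $J$ along each submodel and identify the tangent space $\dot{\cm}$ as the $L_2$-closure of all such single-copy scores $S(\Dcal)$. For the map $t \mapsto J(t) = \E_{p_{S,t}}[q_t(s,\epol)]$, a routine differentiation under the integral gives
\begin{align*}
\partial_t J(t)\big|_{t=0} = \E\bigl[\,\tilde\phi(\Dcal)\cdot S(\Dcal)\,\bigr]
\end{align*}
for a candidate gradient $\tilde\phi$. Solving this inner-product equation for the element $\tilde\phi$ that lies in $\dot{\cm}$ is exactly the projection step: by construction any such $\tilde\phi$ has smallest $L_2$ norm among all gradients, because two gradients differ by an element orthogonal to $\dot{\cm}$, and subtracting that orthogonal piece strictly reduces the norm. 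This is the content of the first clause of the theorem.

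Third, the lower bound follows by applying the convolution theorem for LAN experiments. For any regular estimator $\hat J$ of $J$ with respect to $\cm$, its limiting distribution $L$ satisfies $L = N(0, \var[\tilde\phi(\Dcal)]) * M$ for some distribution $M$, where $n'=mn$ and the factor of $n$ between the per-copy variance and the per-sample variance is absorbed by the scaling $\sqrt{n'}$ (here $n$ is a fixed constant of the problem). Taking second moments gives $\mathrm{AMSE}[\hat J] \ge \var[\tilde\phi(\Dcal)]$. The main obstacle, and the reason this setup is worth spelling out, is the non-iid stratified structure: the trick of grouping the full block $\Dcal$ into a single ``observation'' is what lets the standard iid semiparametric theory apply at all, and one must be careful that the tangent space respects the fact that the strata sizes $n_k$ are fixed rather than random (which is exactly the distinction with the iid model treated in \cref{sec:multi}). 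Once the tangent space is correctly identified in this blocked sense, the rest of the proof is a direct appeal to van der Vaart's Theorem 25.20.
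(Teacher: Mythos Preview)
Your proposal is essentially correct, but note that the paper does not actually prove this statement: \cref{thm:van} is a direct citation of Theorem~25.20 in \citet{VaartA.W.vander1998As}, restated in the appendix without proof. The paper's only contribution here is the framing trick you correctly identified---treating one full block $\Dcal$ of size $n$ as a single iid observation and taking $m\to\infty$---so that the standard iid semiparametric theory applies verbatim. Your sketch of the LAN/convolution argument is the standard textbook route and is fine as a justification of why the cited result applies in this blocked setting.

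One small correction: you have the roles of $\cm$ and $\cm_b$ reversed. In the paper's appendix, $\cm$ is the \emph{larger} model in which $p_S$, $p_{R\mid S,A}$, and all $\pi_k$ vary freely (unknown loggers), while $\cm_b$ is the submodel with $\pi_k$ fixed at the truth. So for the statement as written (which is about $\cm$), your submodels should also perturb the $\pi_k$, and the tangent space correspondingly includes the action-score pieces $g_k(s,a)$ with $\E[g_k\mid s]=0$. This does not change the argument structurally, but it matters for matching the paper's subsequent derivation of the EIF in the proof of \cref{thm:b_known}.
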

Note that a regular estimator is any whose limiting distribution is insensitive to small changes of order $\bigO(1/\sqrt{m})$ to the DGP in the model \citep[Chapter 7]{VaartA.W.vander1998As}. This is a super broad class of estimators excluding pathological estimators such as Hodges' estimator. The term $\var[\phi]$ is called the efficiency bound.
%since it is a lower bound on the asymptotic MSE of all regular estimators.
For the current problem, the EIF and the efficiency bound are derived as follows. 
\begin{theorem}
Under the model $\cm$, the EIF $\tilde \phi(o)$ is 
\begin{align*}
    \frac{1}{n}\sum_{i=1}^n\braces{\frac{\epol(a_i\mid s_i)}{\pi_{*}(a_i\mid s_i)}\{r_i-q(s_i,a_i)\}+v(s_i)-J}. 
\end{align*}
The efficiency bound $V(\cm)$ scaled by $n$, i.e., $n\var[\tilde \phi(o)]$, is 
\begin{align*}
\E_{\pi_*}\bracks{\braces{\frac{\epol(a\mid s)}{\pi_{*}(a\mid s)}}^2\var[r\mid s,a]}+\var_{p_{S}}[v(s)]. 
\end{align*}
The EIF and efficiency bound are the same for the model $\cm_b$. 
\end{theorem}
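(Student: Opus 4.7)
The strategy is to reframe the data so that standard semiparametric theory applies, then construct and verify the EIF by hand. Specifically, I treat one full stratified dataset $\Dcal=\{(S_{kj},A_{kj},R_{kj}):k\in[K],j\in[n_k]\}$ as a \emph{single} observation $o$ and take $m\to\infty$ i.i.d.\ copies (so $n'=nm$). Under this reframing the density of $o$ factorizes as $\prod_{k,j} p_S(s_{kj})\pi_k(a_{kj}\mid s_{kj})p_{R\mid S,A}(r_{kj}\mid s_{kj},a_{kj})$, and for any regular parametric submodel the score decomposes additively into three pieces---one for $p_S$, one for $p_{R\mid S,A}$, and one for the collection $\{\pi_k\}_k$---each a sum over the indices of $o$ of scores at the single-factor level satisfying the usual mean-zero conditions. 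The tangent space $\Tcal(\cm)$ is the direct sum of the three resulting subspaces in $L^2(\P_o)$; the model $\cm_b$ simply drops the $\pi_k$-piece.

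The first concrete step is to compute the pathwise derivative of $J=\E_{\epol}[r]$. Because $J$ does not involve $\pi_k$, differentiating along a submodel yields $\dot J=\E_{p_S}[v(S)s^S(S)]+\E_{\epol}[R\,s^R(R\mid S,A)]$ with no contribution from $\pi_k$-directions. Then I take the stated $\tilde\phi(o)$ as candidate and split it as $\tilde\phi_R+\tilde\phi_S$, where $\tilde\phi_R=\tfrac{1}{n}\sum_{k,j}\tfrac{\epol(a_{kj}\mid s_{kj})}{\pi_{*}(a_{kj}\mid s_{kj})}(r_{kj}-q(s_{kj},a_{kj}))$ has conditional mean zero given each $(s_{kj},a_{kj})$ and $\tilde\phi_S=\tfrac{1}{n}\sum_{k,j}(v(s_{kj})-J)$ is a mean-zero sum of functions of $s_{kj}$ alone. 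This decomposition places $\tilde\phi$ cleanly inside $\Tcal(\cm)$, which is required of any EIF.

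The heart of the proof is verifying $\E[\tilde\phi(o)\cdot S(o)]=\dot J$ for every score direction. Independence across the $(k,j)$ index and the various mean-zero conditions kill all off-diagonal terms, so the calculation reduces to single-index contributions. For the reward direction, each diagonal term produces $\E_{p_S\pi_k}[(\epol/\pi_{*})(R-q)s^R]$; summing the $n_k$ copies with the external factor $1/n$ pulls out $\sum_k (n_k/n)\pi_k(a\mid s)=\pi_{*}(a\mid s)$ in the integrand, which cancels against $1/\pi_{*}$ to give exactly $\E_{\epol}[R\,s^R]$. For the $p_S$ direction the analogous diagonal argument returns $\E_{p_S}[v(S)s^S(S)]$. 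For the $\pi_k$ direction the inner product vanishes because the $R-q$ factor has zero conditional mean given $(S,A)$ and the remaining $v-J$ factor depends only on $S$, while $s^{\pi_k}(A\mid S)$ has zero conditional mean given $S$. This last fact is exactly what gives the $\cm_b$ claim for free: since $\tilde\phi$ is already orthogonal to the $\pi_k$-tangent, it lies in the strictly smaller tangent space $\Tcal(\cm_b)$ and is therefore still the EIF under $\cm_b$, yielding the same efficiency bound.

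Finally, the efficiency bound comes from a direct variance computation: using orthogonality of $\tilde\phi_R$ and $\tilde\phi_S$ together with a conditional-variance decomposition, I get $\var[\tilde\phi(o)]=\tfrac{1}{n^2}\sum_k n_k\E_{p_S\pi_k}[(\epol/\pi_{*})^2\sigma_r^2(S,A)]+\tfrac{1}{n}\var_{p_S}[v(S)]=V^*/n$, so $n\var[\tilde\phi(o)]=V^*$ as claimed, which is the variance of the limiting distribution of $\sqrt{n'}(\hat J-J)$ by \cref{thm:van}. The main technical obstacle is the reward-direction cross-product step: one must carefully track how the stratum-specific weights $n_k/n$ built into $\tilde\phi$ combine with the stratum-specific sampling measures $\pi_k$ in the true data law, so as to reconstitute exactly $\pi_{*}$ and cancel the importance ratio. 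This is precisely what forces the denominator in the EIF to be $\pi_{*}$ rather than any other logger-dependent combination, and it is also the reason the EIF does not carry the logger identity $k$ as a separate argument, mirroring the structure already observed in \cref{thm:concrete}.
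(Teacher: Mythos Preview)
Your proposal is correct and follows essentially the same route as the paper: treat the full stratified dataset as a single i.i.d.\ observation, identify the tangent space as the direct sum of the $p_S$-, $\{\pi_k\}$-, and $p_{R\mid S,A}$-pieces, verify that the candidate $\tilde\phi$ is a gradient by checking the inner product with each score direction (where the key step is that the $n_k/n$ weights reconstitute $\pi_*$ and cancel the ratio), and then observe that $\tilde\phi$ lies in the tangent space of both $\cm$ and $\cm_b$ because it is orthogonal to the $\pi_k$-scores. Your explicit variance computation at the end is a small addition the paper leaves implicit, but otherwise the argument and its organization match.
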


We will give the formal proof in \cref{sec:proof}. Before that, we show that this is exactly the \Cramer-Rao lower bound in a finite, action, reward space setting. 

\begin{theorem}
Assume $\Scal,\Acal,\Rcal$ is a finite space. Then, the \Cramer-Rao lower bound of $J$ is $V(\cm_b)$. 
\end{theorem}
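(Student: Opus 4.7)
The plan is to carry out a direct finite-dimensional \Cramer-Rao calculation in the parametric model $\cm_b$ and match the result against $V(\cm_b)$. When $\Scal,\Acal,\Rcal$ are finite, $\cm_b$ is a genuine parametric model indexed by $\theta = (p_S(s),\, p_{R\mid S,A}(r\mid s,a))_{s,a,r}$, with the logging policies $\pi_k$ held fixed and with simplex constraints $\sum_s p_S(s)=1$ and $\sum_r p_{R\mid S,A}(r\mid s,a)=1$ for each $(s,a)$. I would fix an arbitrary reference coordinate in each simplex to obtain a free parameterization, write the log-likelihood of a single copy of $\Dcal$, and compute the Fisher information $I(\theta)$. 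Because the score for $p_S$ and the score for each $p_{R\mid S,A}(\cdot\mid s,a)$ depend on disjoint factors of each observation, $I(\theta)$ is block-diagonal. The $p_S$ block aggregates information over all $n$ observations irrespective of stratum (every observation carries a factor $p_S(S_{kj})$), while the $p_{R\mid S,A}(\cdot\mid s,a)$ block aggregates information only over observations with context-action pair $(s,a)$, whose expected count across strata is $\sum_k n_k\, p_S(s)\pi_k(a\mid s) = n\, p_S(s)\pi_*(a\mid s)$. Each block is a standard multinomial Fisher information.

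The next step is to differentiate $J = \sum_{s,a,r} p_S(s)\epol(a\mid s)\, p_{R\mid S,A}(r\mid s,a)\, r = \sum_s p_S(s)v(s)$, giving $\partial J/\partial p_S(s) = v(s)$ and $\partial J/\partial p_{R\mid S,A}(r\mid s,a) = p_S(s)\epol(a\mid s)\, r$. Invoking the classical multinomial identity $(\nabla f)^\top I^{-1}(\nabla f) = N^{-1}\var_p[f]$ (for any linear functional $f$ of the probability vector and sample size $N$) in each block, the $p_S$ block contributes $n^{-1}\var_{p_S}[v(s)]$, and the $(s,a)$-block contributes $(n\, p_S(s)\pi_*(a\mid s))^{-1} p_S(s)^2\epol(a\mid s)^2\sigma_r^2(s,a)$. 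Summing the conditional blocks over $(s,a)$ and recognizing $p_S(s)\pi_*(a\mid s)$ as the $\pi_*$-density of $(s,a)$ collapses to $n^{-1}\E_{\pi_*}[(\epol(a\mid s)/\pi_*(a\mid s))^2\sigma_r^2(s,a)]$. Adding the two contributions and rescaling by $n$ to match the convention $V(\cm_b) = n\var[\tilde\phi(o)]$ yields exactly the stated formula for $V(\cm_b)$.

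The main obstacle is the bookkeeping associated with the simplex constraints: one must verify that $I(\theta)$ is invertible on the reduced (free) parameters and that the multinomial identity is applied correctly after eliminating the redundant coordinates. This is a classical but delicate computation; it can be handled cleanly either by working with the Moore-Penrose pseudoinverse of the unconstrained Fisher matrix (restricted to the tangent space of the simplex), or by directly combining \cref{thm:van} with the already-computed EIF: since for finite outcome spaces the tangent space of the nonparametric model coincides with the full space of mean-zero perturbations of $\theta$, the nonparametric efficiency bound automatically collapses to the finite-dimensional \Cramer-Rao bound, so the identity $V(\cm_b) = n\var[\tilde\phi]$ proved in \cref{thm:efficient} already delivers the claim once the tangent-space identification is verified.
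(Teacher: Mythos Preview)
Your proposal is correct and would yield the result. The route differs somewhat from the paper's own argument, so a brief comparison is worthwhile.

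The paper parameterizes the full model (including the logging policies $\pi_k$, even though their scores drop out since $\nabla_\theta J$ vanishes on that block), writes the Fisher information in outer-product score form $\E[g g^\top]$, and then performs the key algebraic step of rewriting $\E_{\pi_e}[r\,g_{R\mid A,S}]=\E_{\pi_*}[\tfrac{\epol}{\pi_*}(r-q)\,g_{R\mid A,S}]$ and $\E_{\pi_e}[r\,g_S]=\E_{\pi_*}[(v-J)\,g_S]$. The final equality $\E_{\pi_*}[\psi\,g^\top]\E_{\pi_*}[gg^\top]^{-1}\E_{\pi_*}[g\,\psi]=\E_{\pi_*}[\psi^2]$ holds because, in a finite outcome space, every mean-zero function lies in the linear span of the scores; this is the paper's use of finiteness. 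Your plan is instead to work directly in $\cm_b$, invert the multinomial Fisher blocks explicitly, and apply the identity $(\nabla f)^\top I^{-1}(\nabla f)=N^{-1}\var_p[f]$ block by block, with effective sample size $n$ for the $p_S$ block and $n\,p_S(s)\pi_*(a\mid s)$ for each $p_{R\mid S,A}(\cdot\mid s,a)$ block. Both derivations rest on the same underlying fact (in finite spaces the score vectors span the mean-zero functions, which is exactly what makes the multinomial identity hold), but yours is more concretely computational while the paper's is closer in spirit to the semiparametric machinery used elsewhere in \cref{sec:efficiency_detail}. Your secondary suggestion of invoking \cref{thm:van} together with the EIF from \cref{thm:efficient} and the tangent-space identification is essentially the high-level summary of what the paper's score-based argument accomplishes.
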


\begin{proof}
We define the \Cramer-Rao lower bound of the target functional. Assume some parametric model  $$\{p_{S}(s;\theta_{0}),\pi_1(a\mid s;\theta_1),\cdots, \pi_K(a\mid s;\theta_K),p_{R|A,S}(r|a,s;\theta_{K+1})\},$$
where each parameter corresponds to each state, action and reward. For example, assume $\Scal=\{\frs_1,\cdots,\frs_b\}$:
\begin{align*}
    p_S(s;\theta_0)=\braces{\sum_{i=1}^{b-1} I(\frs_i=s)\theta_{0i}}- I(\frs_b=s)\theta_{0b}. 
\end{align*}
The $i$-th element of the score of this $p_S(s)$ ($1\leq i\leq b-1$) is 
\begin{align*}
    \log_{\theta_{0i}} p_S(s)=I(\frs_i=s)/\theta_{0i}- I(\frs_b=s)/\theta_{0b}. 
\end{align*}
Let us define a score function for a parametric submodel:
\begin{align*}
    g_{S} &= \nabla_{\theta_0}\log p_{S}(s;\theta_{0}),\,g_{k} =  \nabla_{\theta_k}\log \pi_k(a\mid s;\theta_{k}),  g_{R|A,S} =\nabla_{\theta_{K+1}}\log p_{R|A,S}(r|a,s;\theta_{K+1}), \\
    g_{S,A,R} &= \{g^{\top}_{S},g^{\top}_{1},\cdots, g^{\top}_{K}, g^{\top}_{R|A,S} \}^{\top},\,  g_{S,A} = \{g^{\top}_{1},\cdots, g^{\top}_{K} \}^{\top},\,\theta=\{\theta^{\top}_{0},\cdots,\theta^{\top}_{K+1}\}^{\top}. 
\end{align*}
The \Cramer-Rao lower bound is defined as 
\begin{align*}
    \nabla_{\theta^{\top}}\E_{\epol}[r]I(\theta)^{-1}\nabla_{\theta}\E_{\epol}[r].
\end{align*}
 The term $I(\theta)$ is 
\begin{align*}
    I(\theta)&=\sum_{k=1}^{K}\sum_{j=1}^{n_k}\begin{pmatrix}
     \E_{\pi_k}[\nabla_{\theta^{\top}_0}g_S] & \bm{0} & 0 \\
     \bm{0} &  \E_{\pi_k}[\nabla_{\theta^{\top}_{S,A}}g_{S,A} ]   & \bm{0}\\
      0 & \bm{0} & \E_{\pi_k}[\nabla_{\theta^{\top}_{K+1}}g_{R|S,A}]\\
    \end{pmatrix}
    \\
    &=n\begin{pmatrix}
     \E_{\pi_{*}}[\otimes g_S] & \bm{0} & 0 \\
     \bm{0} &   \frac{1}{n}\sum_{k=1}^{K}\sum_{j=1}^{n_k}\E_{\pi_k}[\nabla_{\theta^{\top}_{S,A}}g_{S,A} ]  & \bm{0}\\
      0 & \bm{0} & \E_{\pi_{*}}[\otimes g_{R|S,A}]\\
    \end{pmatrix}
\end{align*}
In addition, 
\begin{align*}
    \nabla_{\theta}\E_{\epol}[r]=(\E_{\epol}[rg^{\top}_S],0,\cdots,0,\E_{\epol}[rg^{\top}_{R|S,A}] )^{\top}. 
\end{align*}
From matrix CS-inequality, this is transformed as 
\begin{align*}
&\nabla_{\theta^{\top}}\E_{\epol}[r]I(\theta)^{-1}\nabla_{\theta}\E_{\epol}[r]\\
&=\frac{1}{n} \E_{\pi_e}[rg^{\top}_{R|A,S}]\E_{\pi_{*}}[g_{R|A,S}g_{R|A,S}^{\top}]^{-1}\E_{\pi_e}[g_{R|A,S}r]+\frac{1}{n}\E_{\pi_e}[rg^{\top}_{S}]\E_{\pi_{*}}[g_{S}g_{S}^{\top}]^{-1}\E_{\pi_e}[g_{S}r]\\
 &= \frac{1}{n}\E_{\pi_{*}}\left[\frac{\epol}{\pi_{*}}\{r-q(s,a)\}g^{\top}_{R|A,S}\right]\E_{\pi_{*}}[g_{R|A,S}g_{R|A,S}^{\top}]^{-1} \E_{\pi_{*}}\left[\frac{\epol}{\pi_{*}}\{r-q(s,a)\}g_{R|A,S}\right]\\
 &+\frac{1}{n}\E_{\pi_{*}}\left[(v(s)-J)g^{\top}_{S}\right]\E_{\pi_{*}}[g_{S}g_{S}^{\top}]^{-1}\E_{\pi_*}[g_{S}(v(s)-J)]\\
 &= \frac{1}{n}\E_{\pi_{*}}\left[\left\{\frac{\epol}{\pi_{*}}(r-q(s,a))\right\}^2\right]+ \frac{1}{n}\E_{\pi_{*}}[(v(s)-J)^2]. 
\end{align*}
Here, we use the assumption that state, action and reward spaces are finite to state the last equality. For example, any function $g(s)$ s.t. $\E[g(s)]=0$ is represented as a linear combination of  $\{\log_{\theta_{0i}} p_S(s)\}_{i=1}^{b-1}$. 
\end{proof}

\section{Proof}\label{sec:proof}

\begin{proof}[Proof of \cref{thm:concrete}]

We define $\mathbf{S}:=\{S_{kj}\},\mathbf{A}:=\{A_{kj}\}$. Then, by law of total variance, the variance of $\Gamma(\Dcal;h,g)$ is decomposed as follows: 
\begin{align}
 &\var\bracks{\frac{1}{n}\sum_{k=1}^{K}\sum_{j=1}^{n_k}h(k,S_{kj},A_{kj})\epol(A_{kj} \mid S_{kj})\{R_{kj}-g(S_{kj},A_{kj})\}+g(S_{kj},\epol) \}} \nonumber \\
 &=\E\bracks{\var\bracks{\frac{1}{n}\sum_{k=1}^{K}\sum_{j=1}^{n_k}h(k,S_{kj},A_{kj})\epol(A_{kj} \mid S_{kj})\{R_{kj}-g(S_{kj},A_{kj})\}|\mathbf{S,A}}} \nonumber \\
 &+\E\bracks{\var\bracks{\E\bracks{\frac{1}{n}\sum_{k=1}^{K}\sum_{j=1}^{n_k}h(k,S_{kj},A_{kj})\epol(A_{kj} \mid S_{kj})\{R_{kj}-g(S_{kj},A_{kj})\}|\mathbf{S,A}}|\mathbf{S}}} \nonumber \\
  &+\var\bracks{\E\bracks{\frac{1}{n}\sum_{k=1}^{K}\sum_{j=1}^{n_k}h(k,S_{kj},A_{kj})\epol(A_{kj} \mid S_{kj})\{R_{kj}-g(S_{kj},A_{kj})\}+g(S_{kj},\epol)|\mathbf{S}}} \label{eq:var3}   \\ 
 &=\E\bracks{\var\bracks{\frac{1}{n}\sum_{k=1}^{K}\sum_{j=1}^{n_k}h(k,S_{kj},A_{kj})\epol(A_{kj} \mid S_{kj})R_{kj}|\mathbf{S,A}}} \label{eq:var1} \\
 &+\E\bracks{\var\bracks{\frac{1}{n}\sum_{k=1}^{K}\sum_{j=1}^{n_k}h(k,S_{kj},A_{kj})\epol(A_{kj} \mid S_{kj})\{q(S_{kj},A_{kj})-g(S_{kj},A_{kj})\}|\mathbf{S,A}|\mathbf{S}}}+ \label{eq:var2}  \\ 
 &+\var_{p_{S}(s)}\bracks{\frac{1}{n}q(s,\epol)}.  \label{eq:var4}  
\end{align}
The term \eqref{eq:var3} is converted into the term \eqref{eq:var4} because of the constraint \eqref{eq:con}. The term \eqref{eq:var2} takes $0$ when $g(s,a;p)=q(s,a)$. Thus, we only focus on the term \eqref{eq:var1}. The term \eqref{eq:var1} is further expanded as 
\begin{align*}
    &\frac{1}{n}\sum_{k=1}^K\rho_k\E_{\pi_k}[h^2(k,s,a)\{\epol(a\mid s)\}^2\var[r \mid s,a]]=\frac{1}{n} \E_{\pi_*}\bracks{\sum_{k=1}^K\frac{\rho_k\pi_k(a\mid s)h^2(k,s,a)}{\pi_*}\{\epol(a\mid s)\}^2\var[r \mid s,a] }\\
    &=\frac{1}{n}\E_{\pi_*}\bracks{\sum_{k=1}^K\frac{\rho^2_k\pi^2_k(a\mid s)h^2(k,s,a)}{\rho_k \pi_k \pi_*}\{\epol(a\mid s)\}^2\var[r \mid s,a] }\\
    &\geq \frac{1}{n}\E_{\pi_*}\bracks{\frac{\{\sum_{k=1}^K \rho_k\pi_k(a\mid s)h(k,s,a)\}^2}{\pi^2_*(a\mid s)}\{\epol(a\mid s)\}^2\var[r \mid s,a] }\\
    &=\frac{1}{n}\E_{\pi_*}\bracks{ \frac{1}{\pi^2_*(a\mid s)}\{\epol(a\mid s)\}^2 \var[r \mid s,a]}. 
\end{align*}
Here, we use CS-inequality in the second line. From the second line to the third line, we use the constraint \eqref{eq:con}:
\begin{align*}
    \sum_{k=1}^K \rho_k\pi_k(a\mid s)h(k,s,a)=1. 
\end{align*}
This inequality becomes an equality when $h_k(s,a;p)=1/\pi_*$. In conclusion, we have 
\begin{align*}
    &\var\bracks{\frac{1}{n}\sum_{k=1}^{K}\sum_{j=1}^{n_k}h(k,S_{kj},A_{kj})\epol(A_{kj} \mid S_{kj})\{R_{kj}-g(S_{kj},A_{kj})\}+g(S_{kj},\epol) }\\
    &\geq \frac{1}{n}\E_{\pi_*}\bracks{ \frac{1}{\pi^2_*(a\mid s)}\{\epol(a\mid s)\}^2 \var[r \mid s,a]}+\frac{1}{n}\var_{p_{S}}[v(s)]. 
\end{align*}
and it becomes an equality when $g=q(s,a),h=1/\pi_*$.

\begin{remark}[Another Proof]
This theorem is also proved from semiparametric theory in \cref{sec:efficiency_detail}. 

Consider an estimator $\Gamma(\Dcal;h,g)$ by fixing $h$ and $g$ under $\cm_b$. ($\cm_b$ is the model where $\{\pi_k\}_{k=1}^K$ is known)  Then, $\Gamma(\Dcal;h,g)$ is an asymptotically linear estimator. Thus, the influence function of an asymptotically linear estimator belongs to the set of gradients of $J$ relative to $\cm_b$. The EIF has the smallest norm among this set of gradients. Thus, 
\begin{align*}
   \var[\Gamma(\Dcal;h,g)]\geq \var[\tilde \phi(\Dcal)]. 
\end{align*}
%We show that the statement holds for instant independent case (when $g$ and $h$ is independent from $p(o))$. When the estimator is instant independent (feasible), the influence function of this estimator is $\Gamma(\Dcal;h,g)$. The influence function of regular estimators belongs to the space of gradients \citep{vaart1991}. In addition, the gradient with the smallest norm is $\tilde \phi(o)$ from \cref{thm:b_known}. Thus, 
%\begin{align*}
 %   \var[\Gamma(\Dcal;h,g)]\geq \var[\tilde \phi(o)]
%\end{align*}
%for any instance $p(o)$ and any instance independent functions $g(s,a),h(s,a)$. 

%Second, we show the original statement:
%\begin{align*}
%    \var[\Gamma(\Dcal;h,g)]\geq \var[\tilde \phi(o)]
%\end{align*}
%for any instance $p(o)$ and instance dependent functions $g(s,a;p),h(s,a;p)$. To see this, we pick out an arbitrary instance $p^{\dagger}$. Then, the statement is reduced to 
%\begin{align*}
%    \var[\Gamma(\Dcal;h,g)]\geq \var[\tilde \phi(o)]
%\end{align*}
%for any $g(s,a;p^{\dagger}),h(s,a;p^{\dagger})$. This is proved by the statement in the %previous paragraph since we can consider an instance independent estimator %$\{\Gamma(\Dcal;h,g)\}$ for $g(s,a;p^{\dagger}),h(s,a;p^{\dagger})$. 

\end{remark}

\end{proof}

\begin{proof}[Proof of \cref{thm:property_feasible}]
We consider the case $K=2$ for simplicity. We also suppose that samples in each strata are uniformly distributed. We prove 
\begin{align}
    \hat J_{\bi}(\hat h,\hat g) &= 0.5\Gamma(\Lcal_1; \hat g^{(1)},\hat h^{(1)})+0.5\Gamma(\Lcal_2; \hat g^{(2)},\hat h^{(2)})\\
    & = 0.5\Gamma(\Lcal_1; g,h)+0.5\Gamma(\Lcal_2; g, h)+\op(n^{-1/2}). 
\end{align}
Then, the proof is immediately concluded from (stratified sampling) CLT. This is proved as follows.  

The first term is further expanded as follows:
\begin{align}
\Gamma(\Lcal_1; \hat g^{(1)},\hat h^{(1)}) &=\{\Gamma(\Lcal_1; \hat g^{(1)},\hat h^{(1)})-\E[\Gamma(\Lcal_1; \hat g^{(1)},\hat h^{(1)})\mid \cu_1 ]\}-\{ \Gamma(\Lcal_1; g, h)-\E[\Gamma(\Lcal_1; g, h)] \} \label{eq:db1_pseudo}  \\
      &+\E[\Gamma(\Lcal_1; \hat g^{(1)},\hat h^{(1)})\mid \cu_1 ]- \E[\Gamma(\Lcal_1; g, h) ] \label{eq:db2_pseudo} \\
      &+\Gamma(\Lcal_1; g, h). \nonumber  
\end{align}
First, \eqref{eq:db2_pseudo} is $0$ since 
\begin{align*}
    \E[\Gamma(\Lcal_1; \hat g^{(1)},\hat h^{(1)})\mid \cu_1 ]- \E[\Gamma(\Lcal_1; g, h)\mid \cu_1 ]=J-J=0. 
\end{align*}
Here, we use the constraint \eqref{eq:con} on $h$ and $\hat h$. Second, we show \eqref{eq:db1_pseudo} is $\op(n'^{-1/2})$. The conditional expectation of \eqref{eq:db1_pseudo} conditioning on $\cu_1$ is $0$. The conditional variance conditioning on $\cu_1$ is 
\begin{align*}
  &\var\bracks{\Gamma(\Lcal_1; \hat g^{(1)},\hat h^{(1)})-\Gamma(\Lcal_1; g, h)\mid \cu_1}\\
  &=\frac{1}{n'^{(1)}}\sum_{k=1}^{K}\rho_k \var_{\pi_k}[\hat h^{(1)}(k,s,a)\epol(a\mid s)\{r-\hat g^{(1)}(s,a)\}+\hat g^{(1)}(s,\epol)-\{h(k,s,a)\epol(a\mid s)\{r-g(s,a)\}\}-g(s,\epol) \mid \cu_1]. 
\end{align*}
We show that this term is $\op(n'^{-1})$. Then, the conditional Chebshev's inequality concludes that \eqref{eq:db2} is $\op(n'^{-1/2})$. To see this, what we have to show is that $\var_{\pi_k}[\cdot]$ is $\op(1)$ in the above. In fact, we have
\begin{align*}
&\var_{\pi_k}[\hat h^{(1)}(k,s,a)\epol(a\mid s)\{r-\hat g^{(1)}(s,a)\}+\hat g^{(1)}(s,\epol)-\{h(k,s,a)\epol(a\mid s)\{r-g(s,a)\}-g(s,\epol) \mid \cu_1]\\
&\leq 2\var_{\pi_k}[\{\hat h^{(1)}(k,s,a)-h(k,s,a)\}\epol(a\mid s)r \mid \cu_1]\\ &+2\var_{\pi_k}[-h(k,s,a)\epol(a\mid s)\hat g^{(1)}(s,a)+\hat g^{(1)}(s,\epol)+h_k(a,s)\epol(a\mid s)g(s,a)-g(s,\epol) \mid \cu_1] \\
&+2\var_{\pi_k}[\{\hat h^{(1)}(k,s,a)-h(k,s,a)\}\epol(a\mid s)\{\hat g^{(1)}(s,a)-g(s,a)\} \mid \cu_1]\\
&\lesssim \max(\|\hat h^{(1)}-h\|,\|\hat g^{(1)}-g\|)=\op(1). 
\end{align*}
Here, we use $\var_{\pi_k}[a+b]\leq 2\var_{\pi_k}[a]+2\var_{\pi_k}[b]$.

\end{proof}

\begin{proof}[Proof of \cref{thm:efficient,}]
Before checking this proof, refer to the proof of \cref{thm:b_known}. We use the result there. 

Since the model $\cm_b$ is smaller than the model $\cm$, the function $\tilde \phi(o)$ is still a gradient of $J$ w.r.t $\cm_b$. We show this gradient again lies in the tangent space. First, we calculate the tangent space. The tangent space of the model $\cm_{b}$ is 
\begin{align*}
    \braces{\sum_{k=1}^K\sum_{j=1}^{n_k}\braces{g_0(s_{kj})+g_{K+1}(s_{kj},a_{kj},r_{kj}) }\in L_2(o)}, 
\end{align*}
where 
\begin{align*}
\E[g_{0}(S_{kj})]=0, \, \E[ g_{K+1}(S_{kj},A_{kj},R_{kj})|S_{kj},A_{kj}]=0\,(1\leq k\leq K,1\leq j\leq n_k). 
\end{align*} 
The function $\tilde \phi(o)$ lies in the tangent space by taking $g_0(s)=v(s),g_{K+1}(s,a,r)=(r-q(s,a))$.

\end{proof}

\begin{proof}[Proof of \cref{thm:b_known}]

We follow the following steps.
\begin{enumerate}
    \item Calculate some gradient (a candidate of EIF) of the target functional $J$ w.r.t model $\cm$.
    \item Calculate the tangent space w.r.t the model $\cm$. 
    \item Show that a candidate of EIF in Step 1 lies in the tangent space. Then, this concludes that a candidate of EIF in Step 1 is actually the EIF.
\end{enumerate}

\paragraph{Calculation of the gradient}

We start with positing a parametric model: 
\begin{align*}
    p(o;\theta)=\prod_{k=1}^{K}\prod_{j=1}^{n_k}p_{S}(s_{kj};\theta_0)\pi_k(a_{kj}\mid s_{kj};\theta_k)p_{R\mid S,A}(r_{kj}\mid s_{kj},a_{kj};\theta_{K+1}). 
\end{align*}
We define the corresponding gradients:
\begin{align*}
    g_{S}(s)=\nabla_{\theta_0} \log p_{S}(s),\,g(k,s,a)=\nabla_{\theta_k} \log \pi_k(a\mid s),\,g_{R\mid S,A}=\nabla_{\theta_{K+1}} \log p_{R\mid S,A}(r\mid s,a). 
\end{align*}
To derive some gradient of the target functional w.r.t $\cm$, what we need is finding a function $f(o)$ satisfying 
\begin{align*}
    \nabla J(\theta)=\E[f(\Dcal)\nabla \log p(\Dcal;\theta) ]. 
\end{align*}
We take the gradient as follows: 
\begin{align*}
    \nabla J(\theta)&= \E_{\pi_*}\bracks{\frac{\epol}{\pi_*}r \braces{g_{S}(s)+g_{R\mid S,A}(s,a,r)} }\\
    &= \E_{\pi_*}\bracks{\psi(s,a,r)\braces{g_{S}(s)+g_{R\mid S,A}(s,a,r)} }, 
\end{align*}
where $\psi(s,a,r)=\epol/\pi_*(r-q(s,a))+v(s)-J$. This is equal to the following 
\begin{align*}
    \E\bracks{ \braces{\frac{1}{n}\sum_{k=1}^{K}\sum_{j=1}^{n_k}\phi(S_{kj},A_{kj},R_{kj})} \braces{\sum_{k=1}^{K}\sum_{j=1}^{n_k}g_{S}(S_{kj})+g_{k}(S_{kj},A_{kj})+g_{R|S.A}(S_{kj},A_{kj},    R_{kj}) }}
\end{align*}
since the above is equal to 
\begin{align*}
    &\frac{1}{n}\sum_{k=1}^{K}n_k\E_{\pi_k}\bracks{\psi(s,a,r)\{g_{S}(s)+g_{k}(s,a)+g_{R\mid S,A}(s,a,r) \} }\\
    &=\frac{1}{n}\sum_{k=1}^{K}n_k\E_{\pi_k}\bracks{\psi(s,a,r)\{g_{S}(s)+g_{R\mid S,A}(s,a,r) \} }\\
    &=\E_{\pi_*}\bracks{\psi(s,a,r)\{g_{S}(s)+g_{R\mid S,A}(s,a,r) \} }.  
\end{align*}
Thus, the following function 
\begin{align*}
    \tilde \phi(o)=\frac{1}{n}\sum_{k=1}^{K}\sum_{j=1}^{n_k}\psi(s_{kj},a_{kj},r_{kj}). 
\end{align*}
is a derivative of the target functional $J$ w.r.t the model $\cm$. 

\paragraph{Calculation of the tangent space }
Following a standard derivation way of the tangent space. \citep{TsiatisAnastasiosA2006STaM,VaartA.W.vander1998As}, the tangent space of the model $\cm$ is 
\begin{align*}
    \braces{\sum_{k=1}^K\sum_{j=1}^{n_k}\braces{g_0(s_{kj})+g_k(s_{kj},a_{kj})+g_{K+1}(s_{kj},a_{kj},r_{kj}) }\in L_2(o)}
\end{align*}
where 
\begin{align*}
\E[g_{0}(S_{kj})]=0, \E[g_k(S_{kj},A_{kj})|S_{kj}]=0  ,\,    \E[ g_{K+1}(S_{kj},A_{kj},R_{kj})|S_{kj},A_{kj}]=0\,(1\leq k\leq K,1\leq j\leq n_k). 
\end{align*}
and $L_2(o)$ is an $l_2$ space at the true density. 

\paragraph{Last part}

We can easily check that the $\tilde \phi(o)$ lies in the tangent space by taking $g_0(s)=v(s),g(k,s,a)=0,\,g_{K+1}(s,a,r)=(r-q(s,a))$. Thus, $\tilde \phi(o)$ is the EIF.

\end{proof}

\begin{proof}[Proof of \cref{thm:efficiency}]
In this proof, we define 
\begin{align*}
    \phi(s,a,r;\pi,g):=\epol/\pi\{r-g\}+g(s,\epol). 
\end{align*}
For simplicity, we consider the case where $K=2$ and assume that samples in each strata are uniformly distributed: 
\begin{align}\label{eq:multi}
     \hat J_{\dr} = 0.5\E_{n'^{(1)}}[\phi(s,a,r; \hat \pi^{(1)}_*,\hat q^{(1)})]+0.5\E_{n'^{(2)}}[\phi(s,a,r; \hat \pi^{(2)}_*,\hat q^{(2)})],  
\end{align}
where $\E_{{n'^{(i)}}}[\cdot]$ denotes an empirical approximation over the $i$-th fold data.
We prove 
\begin{align}\label{eq:final}
    \hat J_{\dr}  = 0.5\E_{n'^{(1)}}[\phi(s,a,r;  \pi_*, q)]+0.5\E_{n'^{(2)}}[\phi(s,a,r; \pi_*, q)]+\op(n'^{-1/2}).  
\end{align}
Then, the proof is immediately concluded from CLT.

The first term in \cref{eq:multi} is further expanded as follows:
\begin{align}
\E_{n'^{(1)}}[\phi(s,a,r; \hat \pi^{(1)}_*,\hat q^{(1)})] &=\frac{1}{\sqrt{n'^{(1)}}}\G_{n'^{(1)}}[\phi(s,a,r; \hat \pi^{(1)}_*,\hat q^{(1)} )-\phi(s,a,r;\pi_*,q)] \label{eq:db1}  \\
      &-\E_{\pi_*}[\phi(s,a,r; \pi_*,q)]+ \E_{\pi_*}[\phi(s,a,r;\hat \pi^{(1)}_*,\hat q^{(1)})\mid \cu_1] \label{eq:db2} \\
      &+ \E_{n'^{(1)}}[\phi(s,a,r; \pi_*,q)]. \nonumber  
\end{align}
Here, we define 
\begin{align*}
    &\frac{1}{\sqrt{n'^{(1)}}}\G_{n'^{(1)}}[\phi(s,a,r; \hat \pi^{(1)}_*,\hat q^{(1)} )-\phi(s,a,r;\pi_*,q)]\\ 
    &=\braces{\E_{n'^{(1)}}[\phi(s,a,r; \hat \pi^{(1)}_*,\hat q^{(1)} )-\phi(s,a,r;\pi_*,q)]-\E_{\pi_*}[\phi(s,a,r; \hat \pi^{(1)}_*,\hat q^{(1)} )-\phi(s,a,r;\pi_*,q)\mid \cu_1]  }
\end{align*}

First, we show \eqref{eq:db2} is $\op(n'^{-1/2})$. This is proved as 
\begin{align*}
    &\left|\E_{\pi_*}[\phi(s,a,r; \pi_*,q)]- \E_{\pi_*}[\phi(s,a,r;\hat \pi^{(1)}_*,\hat q^{(1)})\mid \cu_1]\right|\\
    &=\left|\E_{\pi_*}\bracks{(\epol/\pi_*-\epol/\hat \pi^{(1)}_*)(r-q) \mid \cu_1 }\right|+\left|\E_{\pi_*}\bracks{v-\hat v-\frac{\epol}{\pi_*}q+\frac{\epol}{\pi_*}\hat q\mid \cu_1}\right|\\ 
    &+\left|\E_{\pi_*}\bracks{\{\epol/\pi_*-\epol/\hat \pi^{(1)}_* \}\{-q+q^{(1)}\}\mid \cu_1}\right| \\ 
    &=|\E_{\pi_*}\bracks{\{\epol/\pi_*-\epol/\hat \pi^{(1)}_* \}\{-q+q^{(1)}\}\mid \cu_1}|\lesssim \|\hat \pi^{(1)}_*-\pi_*\|_2\|\hat q^{(1)}-q\|_2=\op(n'^{-1/2}). 
\end{align*}
Second, we show \eqref{eq:db1} is $\op(n'^{-1/2})$. The conditional expectation conditioning on $\cu_1$ is
\begin{align*}
&\E\bracks{\braces{\E_{n'^{(1)}}[\phi(s,a,r; \hat \pi^{(1)}_*,\hat q^{(1)} )-\phi(s,a,r;\pi_*,q)]}\mid \cu_1}-\E_{\pi_*}[\phi(s,a,r; \hat \pi^{(1)}_*,\hat q^{(1)} )-\phi(s,a,r;\pi_*,q)\mid \cu_1]  \\
&=0. 
\end{align*} 
The conditional variance conditioning on $\cu_1$ is 
\begin{align*}
  &\var\bracks{\braces{\E_{n'^{(1)}}[\phi(s,a,r; \hat \pi^{(1)}_*,\hat q^{(1)} )-\phi(s,a,r;\pi_*,q)]}\mid \cu_1}\\
  &=\frac{1}{n'^{(1)}}\sum_{k=1}^{K}\rho_k \var_{\pi_k}[ \phi(s,a,r; \hat \pi^{(1)}_*,\hat q^{(1)} )-\phi(s,a,r;\pi_*,q)\mid \cu_1] \\
  &\leq \frac{1}{n'^{(1)}}\var_{\pi_*}[ \phi(s,a,r; \hat \pi^{(1)}_*,\hat q^{(1)} )-\phi(s,a,r;\pi_*,q)\mid \cu_1] \\ 
  &\lesssim \frac{1}{n'^{(1)}}\max\{\|\hat \pi^{(1)}_*-\pi_*\|_2,\,\|\hat q^{(1)}-q\|_2 \}=\op(n'^{-1}).
\end{align*}
From the second line to the third line, we invoke \cref{thm:stratified}. Then, the conditional Chebshev's inequality concludes that \eqref{eq:db1} is $\op(n'^{-1/2})$. 

To summarize, \eqref{eq:db1} and \eqref{eq:db2} are $\op(n'^{-1/2})$. Thus, \cref{eq:final} is concluded. 

\end{proof}

\begin{proof}[Proof of \cref{thm:double_robustness}]
In this proof, we define 
\begin{align*}
    \phi(s,a,r;\pi,g):=\epol/\pi\{r-g\}+g(s,\epol). 
\end{align*}
For simplicity, we consider the case where $K=2$ and samples are uniformly distributed: 
\begin{align*}
     \hat J_{\dr} = 0.5\E_{n'^{(1)}}[\phi(s,a,r; \hat \pi^{(1)}_*,\hat q^{(1)})]+0.5\E_{n'^{(2)}}[\phi(s,a,r; \hat \pi^{(2)}_*,\hat q^{(2)})]. 
\end{align*}
where $\E_{{n'^{(i)}}}$ denotes an empirical approximation over the $i$-th fold data.

The first term is further expanded as follows:
\begin{align}
\E_{n'^{(1)}}[\phi(s,a,r; \hat \pi^{(1)}_*,\hat q^{(1)})]-J &=\frac{1}{\sqrt{n'^{(1)}}}\G_{n'^{(1)}}[\phi(s,a,r; \hat \pi^{(1)}_*,\hat q^{(1)} )-\phi(s,a,r;\pi^{\dagger}_*,q^{\dagger})] \label{eq:db1_1}  \\
      &-\E_{\pi_*}[\phi(s,a,r; \pi^{\dagger}_*,q^{\dagger})]+ \E_{\pi_*}[\phi(s,a,r;\hat \pi^{(1)}_*,\hat q^{(1)})\mid \cu_1] \label{eq:db2_1} \\
      &+ \E_{n'^{(1)}}[\phi(s,a,r; \pi^{\dagger}_*,q^{\dagger})]-J.\label{eq:db3_1}
\end{align}
As in the proof of \cref{thm:efficiency}, \cref{eq:db1_1} is $\op(n'^{-1/2}_1)$. The third term \eqref{eq:db3_1} is $\bigO_p(n'^{-1/2}_1)$ from CLT noting the mean is $0$ because 
\begin{align*}
    \E[\E_{n'^{(1)}}[\phi(s,a,r; \pi^{\dagger}_*,q^{\dagger})]]-J=0. 
\end{align*}
Here, we use the assumption that $\pi^{\dagger}_*$ or $q^{\dagger}$ is actually the true function. The second term is $\bigO_p(n'^{-1/2}_1)$ since 
\begin{align*}
    &|\E_{\pi_*}[\phi(s,a,r; \pi^{\dagger}_*,q^{\dagger})]- \E_{\pi_*}[\phi(s,a,r;\hat \pi^{(1)}_*,\hat q^{(1)})\mid \cu_1]|\\
    &\leq \left|\E_{\pi_*}\bracks{(\epol/\pi^{\dagger}_*-\epol/\hat \pi^{(1)}_*)(r-q^{\dagger})  \mid \cu_1}\right|+\left|\E_{\pi_*}\bracks{v^{\dagger}-\hat v-\frac{\epol}{\pi_*}q^{\dagger}+\frac{\epol}{\pi_*}\hat q\mid \cu_1}\right|\\ 
    &+\left|\E_{\pi_*}\bracks{\{\epol/\pi^{\dagger}_*-\epol/\hat \pi^{(1)}_* \}\{-q^{\dagger}+q^{(1)}\}\mid \cu_1}\right| \\ 
    &\lesssim \max(\|\hat \pi^{(1)}_*-\pi^{\dagger}_*\|_2\|,\hat q^{(1)}-q^{\dagger}\|_2)=\bigO_p(n'^{-1/2}_1). 
\end{align*}
In conclusion, $\E_{n'^{(1)}}[\phi(s,a,r; \hat \pi^{(1)}_*,\hat q^{(1)})]-J=\bigO_p(n'^{-1/2}_1)$. This concludes that $\hat J_{\dr}$ is $\sqrt{n'_1}$-consistent. 
\end{proof}

\begin{proof}[Proof of \cref{thm:stratified}]
Before stating the proof, in the DGP $\Dcal'$, we define $N_i,(1\leq i\leq K)$:
\begin{align*}
    (N_1,\cdots,N_K)\sim \mathrm{Multi}(n,\rho_1,\cdots,\rho_K),
\end{align*}
where $\E[N_k]=n_k$. Note that each $N_i$ is a random variable unlike a fixed constant $n_i$. 

First, we show that this estimator is unbiased. This is proved as follows:
\begin{align*}
    &\E_{\pi_{*}}[\phi(s,a,r;g)]\\
    &=\int \phi(s,a,r;g)I(\pi_{*}(a\mid s)>0)p_{R\mid S,A}(r\mid s,a)\pi_{*}(a\mid s)p_S(s)\mathrm{d}(s,a,r)=J. 
\end{align*}
Next, we show the inequality $  \var_{\Dcal'}[\P_n[\phi(s,a,r;g)]]\geq  \var_{\Dcal}[\P_n[\phi(s,a,r;g)]]$. From law of total variance, this is proved by 
\begin{align*}
   \var_{\Dcal'}[\P_n[\phi(s,a,r;g)]]&= \E[\var_{D'}[\P_n[\phi(s,a,r;g)]|\{N_k\}_{k=1}^K]]+\var[\E[\P_n[\phi(s,a,r;g)]|\{N_k\}_{k=1}^K|]] \\
   &\geq \E[\var_{D'}[\P_n[\phi(s,a,r;g)]|\{N_k\}_{k=1}^K]]=\E\bracks{\frac{N_k}{n^2}\sum_{k=1}^{K}\var_{\pi_k}[\phi(s,a,r;g)] } \\
   &= \E\bracks{\frac{\rho_k}{n}\sum_{k=1}^{K}\var_{\pi_k}[\phi(s,a,r;g)] }= \var_{\Dcal}[\P_n[\phi(s,a,r;g)]].
\end{align*}

We show the last statement. First, we have 
\begin{align*}
    \E[\P_n[\phi(s,a,r;g)]|\{N_k\}_{k=1}^K|] &=\frac{1}{n}\sum_{k=1}^{K}N_k\E_{\pi_k}\bracks{\frac{\epol}{\pi_*}\{r-g(s,a)\}+g(s,\epol)}\\
   &=\E[g(s,\epol)] +\frac{1}{n}\sum_{k=1}^{K}N_k\E_{\pi_k}\bracks{\frac{\epol}{\pi_*}\{r-g(s,a)\}}. 
\end{align*}
Then, when $g(s,a)=q(s,a)$, the equality $\var_{\Dcal'}[\P_n[\phi(s,a,r;g)]]= \var_{\Dcal}[\P_n[\phi(s,a,r;g)]]$ holds for any $\epol,\pi_{*}$ since 
\begin{align*}
    \var[\E[\P_n[\phi(s,a,r;g)]|\{N_k\}_{k=1}^K]]=0. 
\end{align*}
To get the equality $ \var_{\Dcal'}[\P_n[\phi(s,a,r;g)]]=  \var_{\Dcal}[\P_n[\phi(s,a,r;g)]]$ for any $\epol,\pi_{*}$, we need  
\begin{align*}
   \E_{\pi_k}\bracks{\frac{\epol}{\pi_*}\{r-g(s,a)\}}=0,\,\forall \pi_{*},\epol,1\leq \forall k\leq K.
\end{align*}
This implies 
\begin{align*}
    \E_{\pi_{*}}[\epol/\pi_{*}\{r-g(s,a)\}]=  \E_{\epol}[q(s,a)-g(s,a)]=0,\forall \epol,\pi_{*}.  
\end{align*}
This is only satisfied when $q(s,a)=g(s,a)$.

\begin{remark}
We can show the statement of the inequality regarding the variances by more direct calculation. We have 
\begin{align*}
\var_{\pi_{*}}\left[\phi(s,a,r;g)\right] &=\E_{\pi_{*}}\left[\left\{\phi(s,a,r;g)\right\}^2\right] -\E_{\epol}[r]^2,\\
\sum_{k=1}^K \frac{n_k}{n}\var_{\pi_k}\left[\phi(s,a,r;g)\right] &=\E_{\pi_{*}}\left[\left\{\phi(s,a,r;g)\right\}^2\right]-\sum_{k=1}^K \frac{n_k}{n}\E_{\pi_k}\left[\phi(s,a,r;g)\right]^2.
\end{align*}
Thus, the desired statement $ \var_{\Dcal'}[\P_n[\phi(s,a,r;g)]]\geq  \var_{\Dcal}[\P_n[\phi(s,a,r;g)]]$ for any $\epol,\pi_{*}$
is reduced to 
\begin{align*}
  \E_{\epol}[r]^2\leq \sum_{k=1}^K \frac{n_k}{n}\E_{\pi_k}\left[\phi(s,a,r;g)\right]^2. 
\end{align*}
This is proved by 
\begin{align*}
    \E_{\epol}\left[r\right]^2&=     \left\{\sum_{k=1}^K \frac{n_k}{n}\E_{\pi_k}\left[\phi(s,a,r;g)\right]\right\}^2= \sum_{k=1,j=1}^K \frac{n_kn_j}{n^2}\E_{\pi_k}\left[\phi(s,a,r;g)\right]\E_{\pi_j}\left[\phi(s,a,r;g)\right]\\
    &\leq \sum_{k=1,j=1}^K \frac{n_kn_j}{2n^2}\left\{\E_{\pi_k}\left[\phi(s,a,r;g)\right]^2+\E_{\pi_j}\left[\phi(s,a,r;g)\right]^2\right\}=\sum_{k=1}^K \frac{n_k}{n}\E_{\pi_k}\left[\phi(s,a,r;g)\right]^2.
\end{align*}
Here, we use $2ab\leq a^2+b^2$. 
\end{remark}

\end{proof}

\begin{proof}[Proof of \cref{thm:more_doubly}]
From the assumption, we have $\|\hat q^{(1)}_{\mo}-\check q\|=\op(1)$. We consider the case $K=2$ for simplicity:  
\begin{align*}
     \hat J_{\mo} = 0.5\E_{n'^{(1)}}[\phi(s,a,r; \hat q^{(1)}_{\mo})]+0.5\E_{n'^{(2)}}[\phi(s,a,r; \hat q^{(2)}_{\mo})]. 
\end{align*}
where $\E_{{n'^{(i)}}}$ denotes an empirical approximation over the $i$-th fold data. The first term is further expanded as follows:
\begin{align}
\E_{n'^{(1)}}[\phi(s,a,r; \hat q^{(1)}_{\mo})]-J &=\frac{1}{\sqrt{n'^{(1)}}}\G_{n'^{(1)}}[\phi(s,a,r;\hat q^{(1)}_{\mo} )-\phi(s,a,r;\check{q})] \label{eq:db1_2}  \\
      &-\E_{\pi_*}[\phi(s,a,r; \check q^{})]+\E_{\pi_*}[\phi(s,a,r;\hat q^{(1)}_{\mo})\mid \cu_1] \label{eq:db2_2} \\
      &+ \E_{n'^{(1)}}[\phi(s,a,r; \check q^{})]-J.\label{eq:db3_2}
\end{align}
As in the proof of \cref{thm:efficiency}, \cref{eq:db1_2} is $\op(n'^{-1/2}_1)$. The second term is $0$. In conclusion, $$\E_{n'^{(1)}}[\phi(s,a,r; \hat q^{(1)}_{\mo})]-J= \E_{n'^{(1)}}[\phi(s,a,r; \check q^{})]-J+o_p(n'^{-1/2}_1).$$
This concludes the statement:
\begin{align*}
     \hat J_{\mo}-J =\E_{n}[\phi(s,a,r; \check q^{})]-J+o_p(n'^{-1/2}_1). 
\end{align*}
\end{proof}

\begin{table}[h]
\centering
\caption{Dataset Statistics}
\begin{tabular}{c|cccc}
    \toprule
     Dataset Name &  OptDigits & SatImage & PenDigits & Letter  \\ \midrule \midrule
     \#Classes ($l$)   & 10  & 6  & 10 & 26  \\
     \#Data ($n$)  &  5620 &  6435 &  10992 & 20000 \\
    \bottomrule
\end{tabular}
\label{tab:dataset}
\end{table}

\section{Detailed Experimental Setup and Additional Results} \label{app:experiment}

\paragraph{Datasets.}
We use 4 datasets from the UCI Machine Learning Repository.\footnote{https://archive.ics.uci.edu/ml/index.php}
The dataset statistics are displayed in Table~\ref{tab:dataset}.

\paragraph{Detailed Experimental Procedure.}

A multi-class classification dataset consists of $(s_i,y_i)_{i=1}^{n}$ where $s_i \in \mathbb{R}^{d}$ is a context vector and $y_i \in \{1,\cdots,l\}$ is a class for an index $i$. The value $l$ is the number of class.
A classification algorithm assigning $s$ to $y$ is considered to be a policy from a context to an action where we regard $y$ as an action. 
When the prediction by the algorithm is correct, i.e., $y_i=\hat{y}_i$, we observe the unit reward $i$, otherwise the reward is $0$.
In this way, we can construct a contextual bandit dataset consisting of the set of triplets $\{(s_i,a_i,r_i)\}_{i=1}^{n}$ where $a_i:=\hat{y}_i$ and $r_i := \mathbb{I} \{y_i = \hat{y}_i\}$.

We summarize the whole experimental procedures below:
\begin{enumerate}
    \item We split the original data into training (30\%) and evaluation (70\%) sets.
    \item We train logistic regression using the training set to obtain a base deterministic policy $\pi_{\dett}$.
    \item Following Table~\ref{tab:policy_params}, we construct the logging and evaluation policies.
    \item We measure the accuracy of the evaluation policy and use it as its ground truth policy value.
    \item We regard $100 \times \rho_1/(1-\rho_1)=n_1/n_2$\% of the evaluation set as $\Dcal_1$ (generated by $\pi_1$) and the rest as $\Dcal_2$ (generated by $\pi_2$) where $\rho_1/(1-\rho_1)=n_1/n_2 \in \{0.1, 0.25, 0.5, 1, 2, 4, 10\}$. 
    A smaller value of $n_1/n_2$ leads to a larger data size of $\Dcal_2$ that is generated by a logging policy dissimilar to the evaluation policy.
    \item Using the evaluation set (consisting of $\Dcal_1$ and $\Dcal_2$), an estimator $\hat{J}$ estimates the policy value of the evaluation policy $J$.
\end{enumerate}

\begin{figure*}[t!]\begin{minipage}[b]{.5\textwidth}
    \centering
    \includegraphics[clip,width=\textwidth]{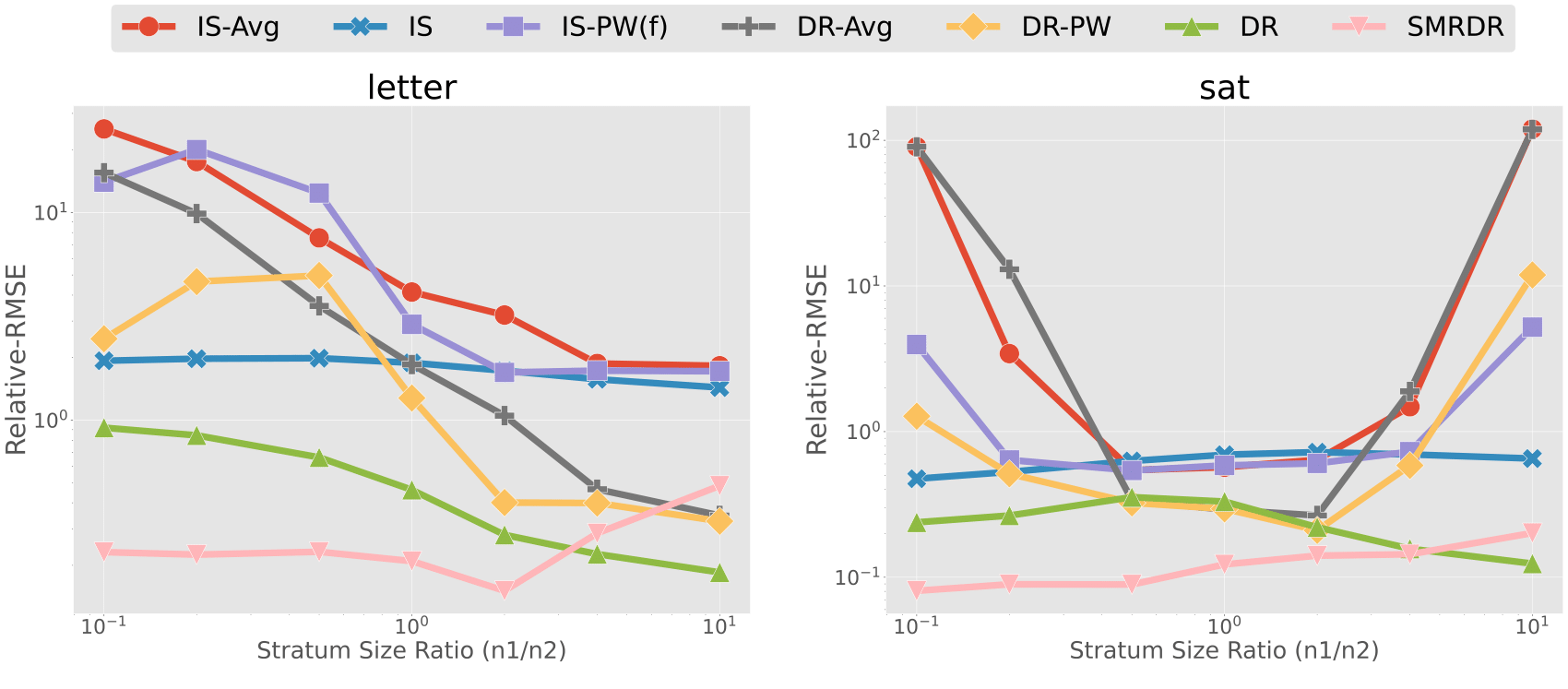}
    \caption{Comparing proposed estimators to some variants of IS type estimators.}
    \label{fig:dr_vs_is_app}
\end{minipage}
\begin{minipage}[b]{.5\textwidth}
    \includegraphics[clip,width=\textwidth]{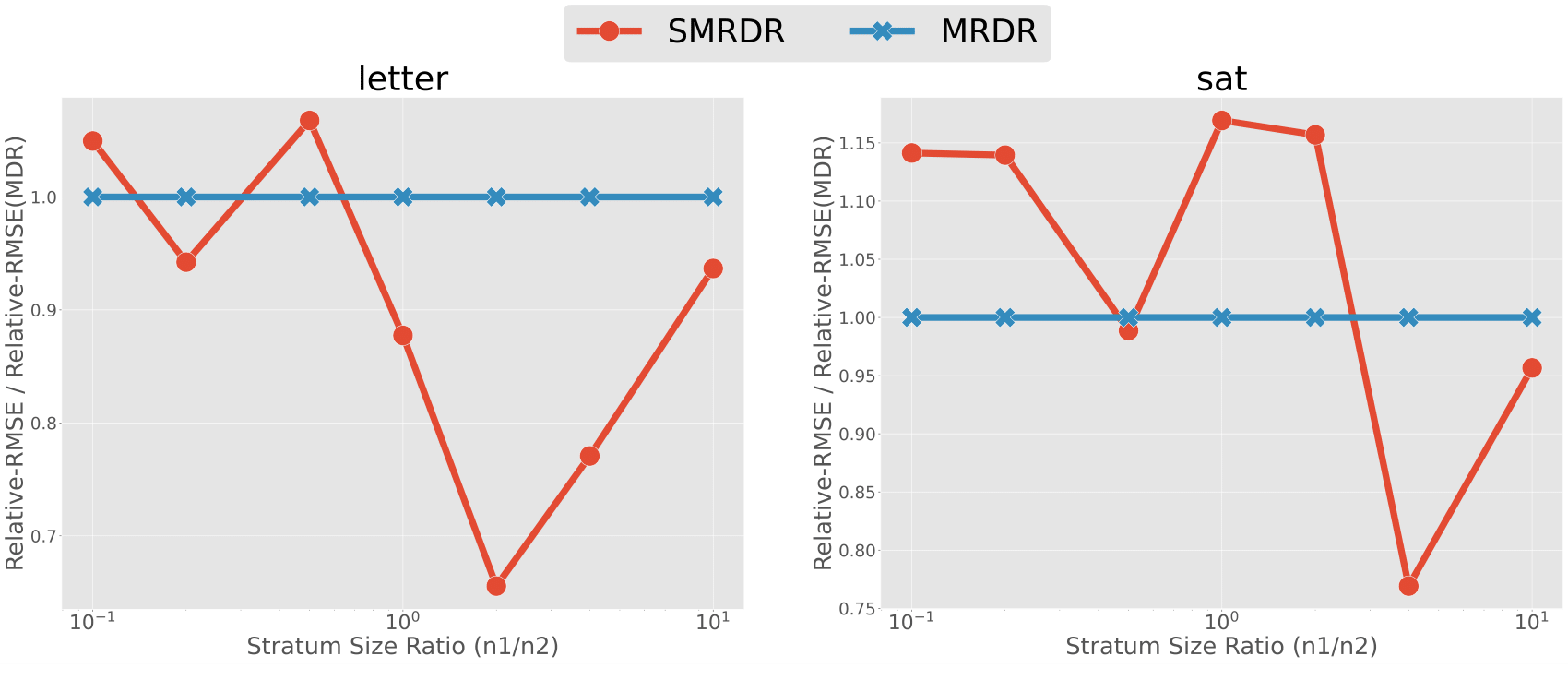}
    \caption{Comparing SMRDR (leveraging the stratification) and MRDR (ignoring the stratification).}
    \label{fig:smdr_vs_mdr_app}
\end{minipage}\end{figure*}

\paragraph{Additional Results.} 
Figure~\ref{fig:dr_vs_is_app} and~\ref{fig:smdr_vs_mdr_app} show the results on the same experiment as conducted in Section~\ref{sec:experiment} in the main text on the SatImage and Letter datasets.

\end{document}